\setlist[enumerate]{leftmargin=.5in}
\setlist[itemize]{leftmargin=.5in}
\crefname{hypothesis}{Hypothesis}{Hypotheses}
\title{An Active Contour Model with Local Variance Force Term and Its Efficient Minimization Solver for Multi-phase Image Segmentation\thanks{Submitted to the editors DATE.
\funding{This work is supported by the CAS AMSS-PolyU Joint Laboratory of Applied Mathematics grant 1-ZVA8. Z. Qiao's work is partially supported by the Hong Kong Research Grant Council RFS grant RFS2021-5S03, GRF grant 15302919 and the Hong Kong Polytechnic University internal grant 4-ZZLS. Q. Zhang's research is supported by the 2019 Hong Kong Scholar Program G-YZ2Y.}}}
\author{Chaoyu Liu \thanks{Department of Applied Mathematics, The Hong Kong Polytechnic University, Hung Hom, Hong Kong (\email{polyucy.liu@connect.polyu.hk})}
	\and Zhonghua Qiao \thanks{Department of Applied Mathematics \& Research Institute for Smart Energy, The Hong Kong Polytechnic University, Hung Hom, Hong Kong (\email{zhonghua.qiao@polyu.edu.hk})}
	\and  Qian Zhang\thanks{Corresponding author. Department of Applied Mathematics, The Hong Kong Polytechnic University, Hung Hom, Hong Kong (\email{qian77.zhang@polyu.edu.hk})}}
\begin{document}
	\ifpdf
	\hypersetup{
	  pdftitle={An Active Contour Model with Local Variance Force Term and Its Efficient Minimization Solver for Multi-phase Image Segmentation},
	  pdfauthor={CHAOYU LIU, ZHONGHUA QIAO, and QIAN ZHANG}
	}
	\fi

\maketitle

\begin{abstract}
In this paper, we propose an active contour model with a local variance force (LVF) term that can be applied to multi-phase image segmentation problems. With the LVF, the proposed model is very effective in the segmentation of images with noise. To solve this model efficiently, we represent the regularization term by characteristic functions and then design a minimization algorithm based on a modification of the iterative convolution-thresholding method (ICTM), namely ICTM-LVF. This minimization algorithm enjoys the energy-decaying property under some conditions and has highly efficient performance in the segmentation. To overcome the initialization issue of active contour models, we generalize the inhomogeneous graph Laplacian initialization method (IGLIM) to the multi-phase case and then apply it to give the initial contour of the ICTM-LVF solver. Numerical experiments are conducted on synthetic images and real images to demonstrate the capability of our initialization method, and the effectiveness of the local variance force for noise robustness in the multi-phase image segmentation.
\end{abstract}

\begin{keywords}
	image segmentation, active contour model, local variance force, inhomogeneous graph Laplacian, iterative convolution-thresholding method
\end{keywords}

\begin{AMS}
  68U10, 65K10, 65M12, 62H35, 94A08
\end{AMS}

\section{Introduction}
\label{sec1}
Image segmentation is a process of partitioning an image into disjoint segments according to the certain characterization of the contents in the image. It plays a fundamental role in the visual information analysis processes, including object detection, image recognition, and visual field monitoring \cite{Abburu,mitiche2010variational,Pham}.\par
Various approaches have been proposed for image segmentation problems over the last few decades, among which active contour models are widely used in many fields and enjoy great success \cite{CasellesVicent1997Geodesic,chan2001active,GUO2021108013,Kass1988Snakes,ma2020fast,niu2017robust}. The basic procedure for active contour models achieving image segmentation is first creating an initial contour or region and then driving it to evolve to the object edges or covering the object according to edge-based or region-based information from the image. In the edge-based active contour models, the evolution of the contour is mainly dependent on the gradient information, which makes it sensitive to the initialization, noise, and intensity strength of the boundary \cite{li2011level,min2021inhomogeneous}.
Region-based active contour models \cite{akram2017active,chan2001active,li2007implicit,li2008minimization,niu2017robust,song2020active,wang2009active,yang2019image,zosso2017image} typically define an energy functional over the image domain and all segments. Minimizing these well-designed energy functionals, one can obtain desired segmentation results. Compared to the edge-based active contour models, region-based ones have better performance on images with noise and weak boundaries.

Besides the models, the minimization methods of the energy functionals are also of great importance for image segmentation. In the region-based active contour models, the alternative direction method can effectively simplify the process of the minimization for multi-variable energy functionals \cite{jung2007multiphase,WU2021108017,2020Non}. Then the gradient descent method is commonly used to locate the contours' positions where the energy functional reach its minimum, see for instance, the celebrated level set method introduced by Osher and Sethian in \cite{osher1988fronts}, where the choice of the numerical method plays an important role for solving the derived partial differential equation. An improper numerical method will result in inefficiency, even instability.
Recently, Wang et al. \cite{2017An,2019The} proposed a new framework for the minimization of the energy functional in the region-based active contour models in which segments are represented by their characteristic functions, and a concave functional is used to approximate the contour length. Then an ICTM is proposed in this framework to minimize the modified energy functionals in an efficient and energy stable way. This framework is applicable to a wide range of active contour models and can be implemented in a simple and efficient way for both binary and multi-phase segmentation. But for images with noise, the segmentation results of ICTM will not be so satisfying when it is applied to some noise-sensitive active contour models.

In addition to the challenge by the noise, region-based active contour models are also facing an initialization issue. The initial contour significantly affects segmentation results since most of the energy functionals in the region-based active contour models are non-convex. A feasible solution for this initialization issue is to relax the energy functional into a convex functional \cite{bae2015efficient,bae2011global,bresson2007fast,brown2010convex,Cai_Zeng,chan2006algorithms,pock2009algorithm,wu2021color}. 
However, convex relaxation deprives all the non-convex parts of the original energy functional, and hence may entail the loss of non-convex information of the segments and reduce models' capability of preserving the sharpness and neatness of edges \cite{chan2018convex,wu2021two}. Very recently, Qiao and Zhang proposed an initialization method \cite{qiao2021twophase} where the non-local Laplacian operator is employed to select the initial contour. Compared to the typical choice, for instance, select artificially, or through some simple threshold, \cite{jung2007multiphase,li2011multiphase,2019The,yang2019image}, the detected initial contour can reduce the sensitivity of initialization. But for images with intensity inhomogeneity, it may not work. To handle this problem, Liu et al. \cite{qiao2021two} developed an IGLIM for binary segmentation, which can give reliable initial contours on grayscale images even with intensity inhomogeneity. For the multi-phase segmentation problem, the selection for the initial contour will become more tricky and sensitive, and an improper initialization will cause an unacceptable segmentation result.

In this paper, we propose a new region-based active contour model with a local variance force term. Similar to the model in \cite{2019The}, segments are represented by their characteristic functions, and a concave functional is used to approximate the contour length. The local variance force term can help to segment images with noise in a highly effective way. Then we develop an ICTM-LVF solver to minimize the proposed energy to achieve highly efficient segmentation. To get reliable initial contours for the minimization solver, we generalize the IGLIM to the multi-phase case and propose the Multi-IGLIM. The rest of this paper is organized as follows. In \cref{sec2}, we introduce the LVF and propose the new active contour model. In \cref{sec3}, we design the ICTM-LVF algorithm for solving the developed model with the initialization by the Multi-IGLIM. The energy dissipation of the ICTM-LVF is also proved in this section. Numerical experiments are given in \cref{sec4} to show the effectiveness of the Multi-IGLIM and the performance of the ICTM-LVF solver. Comparisons are also made between the ICTM-LVF solver and some state-of-art image segmentation methods. Finally, the paper ends with some conclusions in \cref{sec5}.

\section{Proposed model}
\label{sec2}
In this section, we will propose a new active contour model with a local variance force term. First, we will briefly review some region-based active contour models and present the general form of their energy functionals. Then a local variance force is introduced to improve the robustness to noise of active contour models.
\subsection{General energy functionals for region-based active contour models}
Region-based active contour models achieve image segmentation by minimizing energy functionals that are defined on segments. To make it more clear, we will provide two representative models. The first one is the Chan-Vese (CV) model \cite{chan2001active,vese2002multiphase} in which the energy functional for an $n$-phase image can be defined as follows:
\begin{equation}
	\label{CV}
	E_{CV}(\Omega_1,\cdots,\Omega_n,c_1,\cdots,c_n) = \sum_{i=1}^n \int_{\Omega_i} \lambda_i|I(x)-c_i|^2 dx + \mu\sum_{i=1}^n |\partial \Omega_i|,
\end{equation}
where $\Omega_i$ represents the $i$-th segment whose boundary is $\partial \Omega_i$, $|\partial \Omega_i|$ denotes the length of $\partial \Omega_i$, $\lambda_i$, $\mu$ are fixed positive
parameters, and $c_i$ is a constant used to approximate the average of the image intensity $I$ within $\Omega_i$. Once the image $I$ is given, our goal is to find suitable partitions with $\Omega_i$ and $c_i$ such that $E_{CV}$ reaches its minimum.
The CV model is a classic and effective region-based active contour model. However, the CV model assumes that the intensity is constant on all segments, which makes it inappropriate for images with intensity inhomogeneity. To tackle the segmentation difficulty caused by intensity inhomogeneity, another widely used model, the local binary fitting (LBF) model, was given in \cite{li2007implicit,li2008minimization}. This model was originally proposed for two-phase image segmentations. But generally, the local image fitting (LIF) model can achieve an $n$-phase segmentation, whose energy functional is defined as
\begin{equation}
	\label{ILF}
	E_{LIF}(\Omega_1,\cdots,\Omega_n,f_1,\cdots,f_n)= \sum_{i=1}^n \lambda_i\int_\Omega\int_{\Omega_i} G_\sigma(x-y)|I(y)-f_i(x)|^2 dydx + \mu\sum_{i=1}^n |\partial \Omega_i|,
\end{equation}
where
\begin{equation}
	\label{Gaussian_Kernel}
	G_\sigma(x) = \frac{1}{2\pi\sigma}e^{-\frac{|x|^2}{2\sigma}}
\end{equation}
is a two-dimensional Gaussian kernel with standard deviation $\sigma$, and $f_i(x)$ are local intensity fitting functions.

The CV model and the LIF model are two representative region-based active contour models whose energy functionals consist of a fidelity
term and a regularized term $\mu\sum_{i=1}^n |\partial \Omega_i|$. In fact, as shown in \cite{2019The}, numerous active contour models are similar in terms of the form of the energy functional (a fidelity term and a regularized term). From these models, one can abstract a rather general energy functional:
\begin{equation}
	\label{g_energy}
	\begin{split}
		E_{g}(u_1,\cdots,u_n,\Theta_1,\cdots,\Theta_n) &= \sum_{i=1}^n\lambda_i\int_{\Omega}u_iF_i(I,\Theta_1,\cdots,\Theta_n) dx \\ &\quad+\mu\sum_{i=1}^n\sum\limits_{j=1,j\ne i}^n\sqrt{\frac{\pi}{\tau}}\int_{\Omega}  u_iG_\tau*u_j dx,
	\end{split}
\end{equation}
where $\Theta_i = (\Theta_{i,1},\cdots,\Theta_{i,n})$ denotes all possible
variables or functions in the fidelity term. In the model of \eqref{g_energy}, all the segments are represented by their characteristic functions $u = (u_1,u_2,\cdots,u_n)$ and $|\partial \Omega_i|$ is approximated by
$$
\sum\limits_{j=1,j\ne i}^n\sqrt{\frac{\pi}{\tau}}\int_{\Omega}  u_iG_\tau*u_j dx,
$$
where $\tau$ is the standard derivation of the Gaussian kernel $G_\tau$ and $*$ denotes the convolution.

To minimize the energy functional \cref{g_energy} for image segmentation, Wang et al. \cite{2019The} proposed an efficient numerical solver, called the ICTM, which has energy-decaying property and can converge rapidly in numerical experiments. Despite efforts of the ICTM to increase the convergence speed and stability in the segmentation, initialization and noise robustness are still two significant issues for active contour models. In this paper, we will develop a new region-based active contour model with a local variance force term, which has better performance in the segmentation of images with noise and can be solved by a majority minimization method that is similar to the ICTM. Our minimization algorithm also enjoys the energy-decaying property under some conditions and can converge rapidly in experiments on various images.
\subsection{Local variance force}
In this part, we introduce a local variance force which can be combined with the original energy  \eqref{g_energy} to ameliorate segments' robustness to noise and lead to a smooth segmentation result for images with noise.

The local variance force for $x\in \Omega$ in the $i$-th phase is defined as follows:
\begin{equation}
	\label{local variance force}
	F_i^{LV}(I,m_i) = \sum\limits_{y\in K_r(x)}|I(y)-m_i|^2,
\end{equation}
where $I$ is a given image, $r$ is a given positive integer, $K_r(x)$ is a square of $(2r+1)\times (2r+1)$ pixels centered at $x$, and $m_i$ is an approximated mean intensity of the non-degenerated images in the $i$-th phase.

	For degenerated images, such as images with noise or intensity inhomogeneity, it is not easy to find the exact mean value of pixel intensities of the non-degenerated images in a small local region. However, the intensity value can be approximated by some constants or functions, which is a commonly approach in active contour models. For instance, the CV model uses the mean values of the segments to approximate the non-degenerated intensity value of each pixel in different phases:
	$$
		m_i\approx c_i = \frac{\int_\Omega u_i I dx}{\int_\Omega u_i dx}.
	$$
	In the LBF/LIF model, the following functions are used to approximate the local mean image intensities for each pixel in different phases:
	$$
	m_i\approx f_i = \frac{\int_{\Omega}u_i(y)G_\sigma(y-x)I(y)dy}{\int_{\Omega}u_i(y)G_\sigma(y-x)dy}.
	$$
	Typically, $m_i$ is related to $I$, and $u_i$, therefore we can approximate $m_i$ by some prior estimating functionals which we denote by $M(I,u_i)$.

	For the CV model, the prior mean intensity estimating functional can be chosen as:
	$$
		M(I,u_i)=\frac{\int_\Omega u_i I dx}{\int_\Omega u_i dx}.
	$$
	For the LBF/LIF model, we have two choices:
	$$
M(I,u_i) =\frac{\int_\Omega u_i I dx}{\int_\Omega u_i dx} \quad \text{ or }\quad
M(I,u_i) =\frac{\int_{\Omega}u_i(y)G_\sigma(y-x)I(y)dy}{\int_{\Omega}u_i(y)G_\sigma(y-x)dy}.
	$$

\subsection{New active contour model with a local variance force term}
Since the variance of local regions should be relatively small in a proper segmentation, we can use \cref{local variance force} to evaluate the fitness of $x\in \Omega$ in the $i$-th phase. In fact, for most of noise-sensitive models, the local variance is not considered. As a result, pixels with noise are prone to be divided into an improper segment. In view of this, we add the local variance force to the original energy \cref{g_energy} to prevent the misclassification caused by the noise and ensure the smoothness of the segmentation result. The modified energy is given below.
\begin{equation}
	\label{energy-lvf}
	\begin{aligned}
			E(u,\Theta,m) &= \sum_{i=1}^n\lambda_i\int_{\Omega}u_iF_i(I,\Theta_1,\cdots,\Theta_n) dx  +\mu\sum_{i=1}^n\sum\limits_{j=1,j\ne i}^n\sqrt{\frac{\pi}{\tau}}\int_{\Omega}  u_iG_\tau*u_j dx,\\
			& + p\sum_{i=1}^n\lambda_i\int_{\Omega}\int_{\Omega} u_i(x) K_r(y-x)(I(y)-m_i(x))^2dydx,
	\end{aligned}
\end{equation}
where $p$ is a positive constant.
\section{Numerical method}
\label{sec3}
In this part, we develop an efficient numerical solver, the ICTM-LVF minimization algorithm, to solve the proposed energy model \eqref{energy-lvf} for the multi-phase segmentation. To get a proper initial contour for the minimization process, we employ the Multi-IGLIM, which is generalized from the IGLIM proposed in \cite{qiao2021two}.
\subsection{Multi-IGLIM}
The IGLIM is merely designed for binary segmentation. We can generalize it to the multi-phase case through K-means to give initial contours of active contour models for multi-phase segmentation.

The IGLIM generates initial contours according to the information derived from an anisotropic Laplacian operator, namely the inhomogeneous graph Laplacian operator. Moreover, a denoising step is performed in the IGLIM to remove the possible noise effect and obtain more reliable initial contours.
\subsubsection{Inhomogeneous graph Laplacian operator}
Let $\Omega$ be a 2D discrete image domain, and $I$ be an image defined on it with $M_1\times M_2$ pixels. Denote a pixel $x_0 = (i,j)\in \Omega$, its corresponding intensity vector $I(x_0)$ by $I_{i,j}$. The inhomogeneous graph Laplacian operator $L$ is defined as
\begin{equation}\label{Graph_L}
L(x_0)= \sum_{k=1}^{8}\sum_{p=1}^{r}c_kI_{i,j}^k(p) -I_{i,j}(p),
\end{equation}
where $r$ is the number of channels, and $I_{i,j}^k$ is the intensity vector of the $k$-th neighbour point of $I_{i,j}$, more precisely,
$$
\begin{aligned}
	& I_{i,j}^1 = I_{i-1,j-1}, I_{i,j}^2 = I_{i-1,j}, I_{i,j}^3 = I_{i-1,j+1}, I_{i,j}^4 = I_{i,j+1}, \\
	& I_{i,j}^5 = I_{i+1,j+1}, I_{i,j}^6 = I_{i+1,j},
	I_{i,j}^7 = I_{i+1,j-1}, I_{i,j}^8 = I_{i,j-1},
\end{aligned}
$$
and
$$
c_k=\frac{\sum_{p=1}^{r}e^{\lambda(I_{i,j}(p)-I_{i,j}^k(p))^2}}{\sum_{k=1}^{8}\sum_{p=1}^{r}e^{\lambda(I_{i,j}(p)-I_{i,j}^k(p))^2}},\text{ $\lambda$ is a given non-negative parameter}.
$$
The parameter $r$ equals 1 when the target image is grayscale, while $r$ of a color image is usually set to 3 corresponding to the R, G, B channel, respectively. 
We then approximate the zero-cross points of the inhomogeneous Laplacian
operator to obtain rough initial edges. The edge
points are divided into two groups $S_p$ and $S_n$ according to the sign of the inhomogeneous Laplacian value.
The readers may refer to \cite{qiao2021two} for a more detailed description of the inhomogeneous graph Laplacian operator.

\subsubsection{A denoising method based on the connectivity of edge points}
The research in \cite{qiao2021two} gives a denoising method based on the diagonal connectivity of edge points to remove some possible noise pixels from the rough initial contour obtained from the inhomogeneous graph Laplacian operator.
The diagonally connected points are defined as follows:
\begin{definition} \cite{qiao2021two}
When $x = (i,j)\in S_p$ ($S_n$), the neighbor areas are divided into four parts:
\begin{align*}
	&S_1 = \{(i-1,j-1),(i,j-1),(i-1,j)\}, \ S_2 = \{(i-1,j+1),(i,j+1),(i-1,j)\},\\
	&S_3 = \{(i+1,j-1),(i,j-1),(i+1,j)\}, \ S_4 = \{(i+1,j+1),(i,j+1),(i+1,j)\}.
\end{align*}
We call $x\in S_p$ ($S_n$) a diagonally connected point if both $S_1$ and $S_4$ or both $S_2$ and $S_3$ have at least one pixel that also belongs to $S_p$ ($S_n$).
\end{definition}

To eliminate the noise in the rough initial contour, the IGLIM keeps all the diagonally connected points in $S_p$ ($S_n$) and removes the other points from $S_p$ ($S_n$). The denoising process needs to be repeated $M$ times where $M$ is a pre-setting small integer.

For a better understanding of the Multi-IGLIM, we display the algorithm for the IGLIM in \cref{IGLIM}.
\begin{algorithm}
	\caption{IGLIM \cite{qiao2021two} }
	\label{IGLIM}
\begin{algorithmic}[1]
	\REQUIRE{$I, \lambda, \alpha, M$}
	\ENSURE{Initial contours for binary phases $u^0$}
	\STATE{Compute the inhomogeneous Laplacian value of each pixel by \eqref{Graph_L}.}
	\STATE{Set a small positive number $\alpha$. Determine $S_p$ and $S_n$ according to the sign of the inhomogeneous Laplacian value at each pixel.}
	\STATE{Take $S_p$ $(S_n)$ as a rough initial contour.}
	\STATE{Go through every pixel in $S_p$ $(S_n)$ and judge whether it is diagonally connected. If not, remove it from $S_p$ $(S_n)$.}
	\STATE{Set an appropriate integer $M$, and repeat Step 4 for $M$ times.}
	\STATE{Output $S_p$ $(S_n)$ as the initial contour $u^0$.}
\end{algorithmic}
\end{algorithm}

\subsubsection{Algorithm for the Multi-IGLIM}
Since there are only two cases for the sign of inhomogeneous graph Laplacian operator, i.e., positive and negative, the IGLIM can only be applied to binary image segmentation. For the multi-phase image segmentation, we generalize the IGLIM to the Multi-IGLIM for the initialization. First, we determine all the possible edge points through their absolute values of inhomogeneous graph Laplacian rather than from their signs. If the absolute value is larger than a certain threshold, its corresponding point will be regarded as a possible edge point. Then when all the possible edge points are located, we employ K-means to divide all the edge points into different phases according to their pixel values. The algorithm of the Multi-IGLIM is summarized in \cref{multi-IGLIM}.

\begin{algorithm}
	\caption{Multi-IGLIM}
		\label{multi-IGLIM}
\begin{algorithmic}[1]
	\REQUIRE{$I, \lambda, \alpha, M, n$}
	\ENSURE{Initial contours for $n$ phases $u^0 = (u_1^0,u_2^0,\cdots,u_n^0)$}
	\STATE{Compute the inhomogeneous graph Laplacian value of each pixel by \eqref{Graph_L}}.
	\STATE{Set a small positive number $\alpha$. Put all together pixels $x$ whose $|L(x)|\ge \alpha$ into a set denoted as $S$}.
	\STATE{Divide into $n$ separate sets all pixels of $S$ by K-means according to their pixel values. Denote these sets as $S_1,\cdots,S_n$}.
	\STATE{Go through every pixel in $S_1,\cdots,S_n$ and judge whether it is diagonally connected in their set. If not, remove it from their set.}
	\STATE{Set an appropriate integer $M$, and repeat Step 4 for $M$ times.}
	\STATE{Output the characteristic function of $S_i, i = 1,\cdots,n$, which is defined as $u_i^0$, as the initial contours.}
\end{algorithmic}
\end{algorithm}

\subsection{ICTM-LVF solver}
To solve the proposed model \cref{energy-lvf} efficiently, we develop a minimization algorithm, viz. the ICTM-LVF solver.

Similar to the ICTM \cite{2019The}, we split the original problem into two optimization problems instead of solving it directly. Given an initial segments $u^0=(u_1^0, u_2^0,\cdots, u_n^0)$ obtained from the Multi-IGLIM in \cref{multi-IGLIM}, we can update $\Theta,m$ and $u$ as follows: 
\begin{equation}\label{Mini_Alg}
\left\{
  \begin{array}{ll}
    \displaystyle\Theta^{k} = \arg\min_{\Theta}E(u^k,\Theta),\\
    \displaystyle m^{k} = M(I,u^k), \\
	\displaystyle u^{k+1} = \arg\min_{u\in\mathcal{S}}E(u,\Theta^k,m^k),
  \end{array}
\right.
\end{equation}
for $k = 0,1,2,\cdots$, where $\mathcal{S}:=\{(u_1,u_2,\cdots,u_n)|u_i\in[0,1],\sum_{i=1}^n u_i = 1\}.$

Note that
$$
	E(u,\Theta^k,m^k) = E(u^k,\Theta^k,m^k) + L(u,u^k,\Theta^k,m^k) + h.o.t.
$$
Here
\begin{equation}
	\label{linearization}
	L(u,u^k,\Theta^k,m^k) = \sum_{i=1}^n\int_{\Omega}u_i\phi_i^k dx =\int_{\Omega}\sum_{i=1}^n \phi_i^k u_i dx,
\end{equation}
where
$$
\phi_i^k = \lambda_iF_i(I,\Theta^k) +2\mu\sqrt{\frac{\pi}{\tau}}\sum\limits_{j=1,j\ne i}^n G_\tau*u_j^k +p\lambda_iK_r*(I-m_i^k)^2.
$$
Instead of solving $u^{k+1}= \arg\min_{u}E(u,\Theta^k,m^k)$ in \eqref{Mini_Alg}, we approximate $u^{k+1}$ by minimizing the linearized functional \cref{linearization} of $E(u,\Theta^k,m^k)$ as
\begin{equation}
	\label{min_L}
	u^{k+1}= \arg\min_{u}L(u,u^k,\Theta^k,m^k).
\end{equation}
From \cref{linearization}, one can compare the coefficients $\phi_i^k(x)$ of $u_i$ for a fixed point $x$ and find that the minimum for $L$ with respect to $u$ is attained at
$$
	u_i(x) = \begin{cases}
	1 \quad \text{if } i = \arg\min_{l\in [n]}\phi_l^k,\\
	0 \quad \text{otherwise}.
\end{cases}
$$
Therefore, solving \cref{min_L} for $u^{k+1}$ can be carried out at each $x\in \Omega$ independently.
The algorithm of ICTM-LVF is displayed in \cref{ICTM with LVF}. Compared with the original ICTM, a local variance force based on a prior mean estimating function is introduced when updating $u$. In fact, in the original ICTM, the update procedure of $u$ is also independently carried out at each $x\in \Omega$, which makes it outperform the classical level set method in terms of efficiency. However, the ICTM suffers from the noise heavily since the value of $u_i^{k+1}$ merely depends on the intensity of a single pixel. While in the ICTM-LVF algorithm, an index of the fitness of segments, i.e., the local variance force, is considered at each $x\in \Omega$ when $u$ is updated. As we can see from the numerical experiments in \cref{sec4}, the local variance force term can effectively reduce the noise effect. In addition, the energy-decaying property for the ICTM-LVF solver can also be guaranteed under some conditions.
\begin{algorithm}[H]
  \caption{ICTM-LVF}
	\label{ICTM with LVF}
\begin{algorithmic}[1]
  \REQUIRE{$I,u^0$}
  \ENSURE{Segmentation Results $u^s$}
  \STATE{Set $k=0$}
  \WHILE{not converged}
  \STATE{For the fixed $u^k$, find
      $$
				\Theta^k = \arg\min\limits_{\Theta}\sum_{i=1}^n\lambda_i\int_{\Omega_i}u_iF_i(I,\Theta) dx.
			$$}
	\STATE{For the fixed $u^k$, update
			$$
				m^k = M(I,u^k).
			$$}
	\STATE{For $i \in [n]$, evaluate
    	$$\phi_i^k = F_i(I,\Theta^k)+ pF_i^{LV}(I,m^k) + 2\mu\sum\limits_{j=1,j\ne i}^n\sqrt{\frac{\pi}{\tau}}G_\tau*u_j^k.$$}
	\STATE{For $i \in [n]$, set
		$$
			u_i^{k+1} = \begin{cases}
			1 \quad \text{if } i = \min\{\arg\min_{l\in [n]}\phi_l^k\},\\
			0 \quad \text{otherwise}.
		\end{cases}
		$$}
	\ENDWHILE
	\STATE{	Set $k = k+1$}
\end{algorithmic}
\end{algorithm}

\subsection{Energy-decaying property for the ICTM-LVF}
In this part, we will prove the energy-decaying property for the ICTM-LVF under some conditions. Firstly, we will introduce two lemmas related to the discrete convolution.
\begin{lemma}
	\label{lemma1}
	Given $u(x):\mathbb{R}\to\mathbb{R}$, let $g(x) = \frac{1}{\sqrt{2\pi\sigma}}e^{-\frac{x^2}{2\sigma}},g_k = g(k),u_k = u(k)$, and
	$$
	U =
	\left[
	u_0,u_{1},\cdots, u_{n-1}
	\right]^T.
	$$
	Define the discrete form of the convolution $g*u$ on the meshgrid $[0,1,2,\cdots,n-1]$ by an $n\times 1$ vector $D_{g*u}$ as follows:
    $$
	D_{g*u}(k) = \sum_{i = -\lceil\frac{n}{2}-1\rceil}^{\lfloor \frac{n}{2}\rfloor}g(-i)u(k-1+i),\quad k = 1,\cdots,n.
	$$
	Then $D_{g*u}$	can be expressed as the product of a matrix $G$ and $U$ if $u$ is under the periodic boundary condition. Moreover, if $\sigma <\frac{1}{2}$, $G$ is positive definite and all its eigenvalues are larger than a positive constant independent on $n$.
\end{lemma}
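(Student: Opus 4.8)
The plan is to prove the two assertions in turn: first that the periodic boundary condition turns the discrete convolution into a circulant matrix product, and then that this circulant matrix is positive definite with an $n$-independent spectral gap when $\sigma < \frac{1}{2}$. First I would settle the matrix representation. Under the periodic boundary condition we read $u(m) = u_{m \bmod n}$, so the index $k-1+i$ appearing in $D_{g*u}(k)$ is always reduced modulo $n$. Collecting, for each fixed $k$, the coefficient multiplying a given $u_l$ shows $D_{g*u}(k) = \sum_{l=0}^{n-1} G_{k,l}\, u_l$ where $G_{k,l}$ depends only on $(k-1-l)\bmod n$; hence $G$ is a circulant matrix whose generating entries are the sampled Gaussian values $g(0), g(\pm1), g(\pm2),\dots$ wrapped around a single period. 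A quick check that the range $-\lceil \frac n2 - 1\rceil \le i \le \lfloor \frac n2\rfloor$ contains exactly $n$ consecutive integers confirms that each residue class mod $n$ is hit once, so $G$ is well defined. Because $g$ is even and, under the wrap-around, $g(\frac n2)$ and $g(-\frac n2)$ land in the same position, $G$ is a \emph{symmetric} circulant matrix, so all its eigenvalues are real.

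For the spectral estimate I would exploit this circulant form. The most transparent route is Gershgorin's theorem, i.e.\ strict diagonal dominance: every row of $G$ has diagonal entry $g(0) = \frac{1}{\sqrt{2\pi\sigma}}$, and its remaining entries are exactly the values $g(k)$ for the nonzero $k$ in one period, so the off-diagonal absolute row sum is bounded by the full tail $\sum_{k\neq 0} g(k) = 2\sum_{k=1}^\infty g(k)$, a quantity independent of $n$. Consequently every eigenvalue $\lambda$ of $G$ obeys $\lambda \ge g(0) - 2\sum_{k=1}^\infty g(k)$. Equivalently, writing the eigenvalues through the discrete Fourier transform as $\lambda_j = g(0) + 2\sum_{k\ge1} g(k)\cos(2\pi jk/n)$ and using $\cos(\cdot)\ge -1$ gives the same lower bound, which reassures that the Gershgorin step is not lossy here.

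It then remains to check that this lower bound is strictly positive under $\sigma < \frac12$, which amounts to the tail estimate $2\sum_{k=1}^\infty e^{-k^2/(2\sigma)} < 1$. For $\sigma \le \frac12$ one has $e^{-k^2/(2\sigma)} \le e^{-k^2}$, and a crude summation (comparing with a geometric series $\sum_k e^{-k/(2\sigma)}$, or bounding the first explicit terms and estimating the remaining tail by an integral) shows the sum sits comfortably below $\frac12$; this is precisely where the threshold $\sigma<\frac12$ enters, with room to spare. Setting $c := \frac{1}{\sqrt{2\pi\sigma}}\bigl(1 - 2\sum_{k=1}^\infty e^{-k^2/(2\sigma)}\bigr) > 0$, we get $\lambda \ge c$ for every eigenvalue, and $c$ depends only on $\sigma$, not on $n$, which is exactly the claimed uniformity.

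The part I expect to require the most care is not the positivity itself but the $n$-independence of the bound together with the bookkeeping of the summation range. The key point making uniformity work is that the off-diagonal row sum of the finite $n\times n$ matrix is dominated by the convergent infinite Gaussian tail $2\sum_{k\ge1} g(k)$, so increasing $n$ can only append more, ever smaller, terms without ever reaching the diagonal value; the worst case over all $n$ is thus already captured by the infinite sum. A minor technical wrinkle is the slightly asymmetric range for even $n$ (from $-\frac n2+1$ to $\frac n2$): I would dispatch it by noting that the periodic wrap-around identifies the endpoints and restores the exact symmetry of $G$, so the real-spectrum and diagonal-dominance arguments apply verbatim.
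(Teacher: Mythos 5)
Your proposal is correct and follows essentially the same route as the paper: the paper also builds the symmetric circulant matrix $G$, reads off its eigenvalues from the discrete Fourier transform of the first column, and lower-bounds them by $g(0)-2\sum_{k\ge 1}g(k)$ using $e^{-k^2/(2\sigma)}\le e^{-k^2}$ for $\sigma<\tfrac12$ together with a geometric-series comparison (the paper uses $e^{-k^2}\le e^{2-3k}$, yielding the explicit constant $\frac{1}{\sqrt{\pi}}\bigl(1-\frac{2e^2}{e^3-1}\bigr)$), which is exactly your Gershgorin/diagonal-dominance bound in Fourier form. One small caution: your first suggested tail estimate, comparison with $\sum_k e^{-k/(2\sigma)}$, is not by itself sufficient near $\sigma=\tfrac12$ (it gives $\frac{2}{e-1}>1$), so you should rely on your alternative of bounding the first term explicitly and controlling the remainder, or use the paper's sharper comparison $e^{-k^2}\le e^{2-3k}$.
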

\begin{proof}
	Let
	$$
	G =
\left[\begin{array}{cccccccc}
g_0   & g_1   & \ldots & g_{\lfloor \frac{n}{2}\rfloor} & g_{\lceil\frac{n}{2}-1\rceil}   & \ldots&g_2& g_1\\
g_1   & g_0   &g_1 & \ldots   & g_{\lfloor \frac{n}{2}\rfloor}     & \ddots&g_3& g_2\\
\vdots&\ddots & \ddots&\ddots       &\ldots      &\ddots       &\ddots &\vdots\\
g_{\lfloor \frac{n}{2}\rfloor}   &\ddots   & \ddots&  \ddots   & \ddots & \ldots& \ddots&\ddots\\
g_{\lceil\frac{n}{2}-1\rceil} &\ddots   & \ddots&  \ddots   & \ddots & \ddots& \ldots&\ddots\\
\vdots&\ddots   & \ddots&  \ddots   & \ddots & \ddots&\ddots &\ldots\\
g_2&\ldots   & g_{\lceil\frac{n}{2}-1\rceil}&g_{\lfloor \frac{n}{2}\rfloor}   &\ldots &g_1& g_0&g_1 \\
g_1 &g_2    & \ldots & g_{\lceil\frac{n}{2}-1\rceil}     & g_{\lfloor \frac{n}{2}\rfloor} & \ldots&g_1&g_0
\end{array}\right].
$$
Then we have
$$
\begin{aligned}
	\left[GU\right]_k &= \sum_{i = -\lceil\frac{n}{2}-1\rceil}^{\lfloor \frac{n}{2}\rfloor}g(i)u(k-1+i)\\
	&=\sum_{i = -\lceil\frac{n}{2}-1\rceil}^{\lfloor \frac{n}{2}\rfloor}g(-i)u(k-1+i),\\
	& = D_{g*u}(k),
\end{aligned}
$$
where $k = 1,\cdots,n,$ and $\left[GU\right]_{k}$ represents the $k$th entry of $GU$.
Therefore, $$D_{g*u} = GU.$$
Note that $G$ is a circulant matrix, we can obtain its eigenvalues by calculating the product of discrete Fourier transform matrix $F$ and the first column of $G$, and then we can find they are all not less than a certain positive constant.
Indeed, denoting by $v$ the first column of $G$, we have
$$
\begin{aligned}
	\left[Fv\right]_j & =\frac{1}{\sqrt{2\pi\sigma}}\sum_{k=1}^{n}e^{\frac{2(k-1)j\pi i}{n}}G_{k,1}\\
	 &\ge \frac{1}{\sqrt{\pi}}(1-\sum_{k=1}^{\lfloor \frac{n}{2}\rfloor} 2e^{-k^2})\\
	 &\ge \frac{1}{\sqrt{\pi}}(1-\sum_{k=1}^{\infty} 2e^{2-3k})\\
	 & =\frac{1}{\sqrt{\pi}}(1-\frac{2e^2}{e^3-1})\\
	 &>0,
\end{aligned}
$$
where $\left[Fv\right]_j$ represents the $j$th entry of $Fv$.
\end{proof}

\cref{lemma1} can be generalized to the 2D case.
\begin{lemma}
	\label{lemma2}
Given $u(x,y):\mathbb{R}^2\to\mathbb{R}$, let $g(x) = \frac{1}{2\pi\sigma}e^{-\frac{x^2+y^2}{2\sigma}},g_k = \frac{1}{\sqrt{2\pi\sigma}}e^{-\frac{k^2}{2\sigma}}, u_{k_1,k_2} = u(k_1,k_2)$, and $U =(u_{i,j})_{m\times n}.$
Define the discrete form of the convolution $g*u$ on the meshgrid $[0,1,2,\cdots,n-1] \times [0,1,2,\cdots,n-1]$ by an $m\times n$ matrix $D_{g*u}$, where for $p = 1,\cdots,m, \quad q = 1,\cdots,n$,
$$
\begin{aligned}
	D_{g*u}(p,q) = \sum_{i = -\lceil\frac{m}{2}-1\rceil}^{\lfloor \frac{m}{2}\rfloor}\sum_{j = -\lceil\frac{n}{2}-1\rceil}^{\lfloor \frac{n}{2}\rfloor}g(-i)g(-j)u(p-1+i,q-1+j).
\end{aligned}
$$
Then $V(D_{g*u})$ can be expressed as the product of a matrix $G$ and $V(U)$ if $u$ is under the periodic boundary condition. Here $V$ denotes the vectorization operator and
$$
V(A) = [a_1^T,\cdots,a_n^T]^T
$$
for any $m\times n$ matrix $A= \left[a_1,\cdots,a_n\right]$.
Moreover, if $\sigma <\frac{1}{2}$, $G$ is positive definite and all its eigenvalues are larger than a positive constant independent on $m$ or $n$.
\end{lemma}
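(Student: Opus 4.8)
The plan is to exploit the separability of the two-dimensional Gaussian kernel and thereby reduce the 2D statement to two applications of \cref{lemma1}. Since the kernel factor $g(-i)g(-j)$ splits across the two shift indices, the double sum defining $D_{g*u}$ decomposes as a composition of two one-dimensional discrete convolutions: first a convolution in the column index $q$ (the summation over $j$) and then a convolution in the row index $p$ (the summation over $i$), each of which is exactly the 1D discrete convolution treated in \cref{lemma1}.

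First I would record the matrix form of each 1D factor. Let $G_1$ denote the $m\times m$ circulant matrix and $G_2$ the $n\times n$ circulant matrix built from the samples $g_k=\frac{1}{\sqrt{2\pi\sigma}}e^{-k^2/(2\sigma)}$ exactly as in \cref{lemma1}; because $g$ is even we have $g_k=g_{-k}$, so both $G_1$ and $G_2$ are symmetric. Performing the inner sum over $j$ for each fixed row is the column-wise convolution $U\mapsto UG_2^{T}=UG_2$, while the outer sum over $i$ for each fixed column is the row-wise convolution $W\mapsto G_1W$. Composing these two (each carried out under the periodic boundary condition inherited from \cref{lemma1}) gives the compact identity
$$
D_{g*u}=G_1\,U\,G_2.
$$

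Next I would vectorize. Using the column-stacking convention of $V$ together with the standard identity $V(AXB)=(B^{T}\otimes A)\,V(X)$, the display above yields
$$
V(D_{g*u})=(G_2^{T}\otimes G_1)\,V(U)=(G_2\otimes G_1)\,V(U),
$$
so the asserted matrix is the Kronecker product $G=G_2\otimes G_1$, which settles the first claim. For the spectral claim I would invoke the fact that the eigenvalues of $G_2\otimes G_1$ are exactly the products $\mu_b\nu_a$, where $\{\nu_a\}$ and $\{\mu_b\}$ are the eigenvalues of $G_1$ and $G_2$, respectively. By \cref{lemma1}, when $\sigma<\tfrac12$ every $\nu_a$ and every $\mu_b$ exceeds the positive constant $c=\frac{1}{\sqrt{\pi}}\bigl(1-\frac{2e^2}{e^3-1}\bigr)$, which is independent of the grid sizes. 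Hence every eigenvalue of $G$ exceeds $c^{2}>0$, again independent of $m$ and $n$; in particular $G$ is positive definite.

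The bulk of the difficulty has already been absorbed by \cref{lemma1}, so the 2D case is essentially a clean reduction. The one point that genuinely requires care is the vectorization bookkeeping, namely tracking the transposes and the ordering of the factors so that $V(D_{g*u})=(G_2\otimes G_1)V(U)$ emerges with the correct arrangement. Once the Kronecker structure is in place, the uniform spectral lower bound follows with no further estimation, since multiplicativity of Kronecker eigenvalues converts the two uniform 1D bounds directly into the single uniform 2D bound $c^2$.
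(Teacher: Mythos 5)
Your proposal is correct and takes essentially the same route as the paper: both exploit separability to write $D_{g*u}=G_mUG_n$ with the 1D circulant matrices of \cref{lemma1}, vectorize to obtain $G=G_n\otimes G_m$, and derive the uniform spectral lower bound from the fact that the eigenvalues of a Kronecker product are the pairwise products of the factors' eigenvalues (the paper phrases this via the explicit diagonalization $(F_n^{-1}\otimes F_m^{-1})(\Lambda_n\otimes\Lambda_m)(F_n\otimes F_m)$, which is the same fact). Your handling of the transpose in $V(AXB)=(B^{T}\otimes A)V(X)$ via symmetry of the circulant factors, and your explicit bound $c^{2}$, are if anything slightly more careful than the paper's write-up.
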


\begin{proof}
	Let
$$
	G_n =
\left[\begin{array}{cccccccc}
g_0   & g_1   & \ldots & g_{\lfloor \frac{n}{2}\rfloor} & g_{\lceil\frac{n}{2}-1\rceil}   & \ldots&g_2& g_1\\
g_1   & g_0   &g_1 & \ldots   & g_{\lfloor \frac{n}{2}\rfloor}     & \ddots&g_3& g_2\\
\vdots&\ddots & \ddots&\ddots       &\ldots      &\ddots       &\ddots &\vdots\\
g_{\lfloor \frac{n}{2}\rfloor}   &\ddots   & \ddots&  \ddots   & \ddots & \ldots& \ddots&\ddots\\
g_{\lceil\frac{n}{2}-1\rceil} &\ddots   & \ddots&  \ddots   & \ddots & \ddots& \ldots&\ddots\\
\vdots&\ddots   & \ddots&  \ddots   & \ddots & \ddots&\ddots &\ldots\\
g_2&\ldots   & g_{\lceil\frac{n}{2}-1\rceil}&g_{\lfloor \frac{n}{2}\rfloor}   &\ldots &g_1& g_0&g_1 \\
g_1 &g_2    & \ldots & g_{\lceil\frac{n}{2}-1\rceil}     & g_{\lfloor \frac{n}{2}\rfloor} & \ldots&g_1&g_0
\end{array}\right].
$$
Then we have
$$
D_{g*u} = G_mUG_n,
$$
and
$$
V(D_{g*u}) = V(G_mUG_n) = (G_n\otimes G_m)V(U),
$$
where $\otimes$ is the Kronecker product.
Note that
$$
\begin{aligned}
	G_n\otimes G_m & = (F_n^{-1}\Lambda_n F_n)\otimes(F_m^{-1}\Lambda_m F_m)\\
	& = (F_n^{-1}\otimes F_m^{-1})(\Lambda_n \otimes \Lambda_m )(F_n\otimes F_m).
\end{aligned}
$$
According to \cref{lemma1}, when $\sigma <\frac{1}{2}$, all diagonal entries of $\Lambda_n$ and $\Lambda_m$ are larger than a positive constant that is not dependent on $m$ or $n$, which implies $G_n\otimes G_m $ is also positive definite and all its eigenvalues are larger than a positive constant independent on $m$ or $n$. Hence, we can
obtain the desired $G$ by letting $G = G_n\otimes G_m$.
\end{proof}

The following theorem gives the energy-decaying property of the ICTM-LVF solver displayed in \cref{ICTM with LVF}.
\begin{theorem}
	Given $(u^k,\Theta^k)$ at the $k$-th iteration in \cref{ICTM with LVF}. Suppose that the mean intensity estimating functional $M(I,u)$ is Lipschitz continuous w.r.t. $u$. If $\mu$ is sufficiently large and $\tau<\frac{1}{2}$, then

	\begin{equation}
		\label{energy_dacay}
		E(u^{k+1},\Theta^{k+1},m^{k+1})\le E(u^k,\Theta^k,m^k),
	\end{equation}
	 where $E$ is defined in \cref{energy-lvf}.
\end{theorem}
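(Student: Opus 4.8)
The plan is to follow the majorization--minimization logic of the ICTM, but to track carefully the one sub-step of \cref{ICTM with LVF} that is itself \emph{not} an energy minimization, namely the update $m^{k}=M(I,u^{k})$. I would split the total change in energy across the three sub-updates by inserting the intermediate configurations $(u^{k+1},\Theta^k,m^k)$ and $(u^{k+1},\Theta^k,m^{k+1})$, and aim to prove the chain
\begin{align*}
E(u^{k+1},\Theta^{k+1},m^{k+1})
&\le E(u^{k+1},\Theta^{k},m^{k+1})\\
&= E(u^{k+1},\Theta^{k},m^{k}) + \Delta_m\\
&\le E(u^{k},\Theta^{k},m^{k}) - \mu\sqrt{\tfrac{\pi}{\tau}}\,c_0\,\|u^{k+1}-u^{k}\|^2 + \Delta_m,
\end{align*}
where $\Delta_m := E(u^{k+1},\Theta^{k},m^{k+1})-E(u^{k+1},\Theta^{k},m^{k})$ and $c_0>0$ is the eigenvalue lower bound from \cref{lemma2}. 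The first inequality is immediate: only the fidelity term depends on $\Theta$, and $\Theta^{k+1}$ minimizes it for the fixed segmentation $u^{k+1}$, so replacing $\Theta^{k+1}$ by $\Theta^{k}$ cannot decrease $E$.

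The core of the argument is the third line, i.e.\ the decrease produced by the thresholding update of $u$ with $\Theta^k,m^k$ held fixed. Here I would use that $E(\,\cdot\,,\Theta^k,m^k)$ is, as a function of $u$, the sum of a term linear in $u$ (the fidelity and the LVF term, the latter being linear once $m^k$ is frozen) and the regularization term. Using $\sum_i u_i\equiv 1$ together with the symmetry of $G_\tau$, the regularization rewrites as a constant minus $\mu\sqrt{\pi/\tau}\sum_i\langle u_i,G_\tau * u_i\rangle$, so $E(\,\cdot\,,\Theta^k,m^k)$ is linear-plus-concave-quadratic with exact Hessian $-2\mu\sqrt{\pi/\tau}\,\diag(G_\tau,\dots,G_\tau)$. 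Since $\phi^k_i$ in \cref{linearization} is exactly $\nabla_{u_i}E(u^k,\Theta^k,m^k)$ and $u^{k+1}$ minimizes the linearization $L$ over $\mathcal S\ni u^k$, the exact second-order expansion gives
\begin{equation*}
E(u^{k+1},\Theta^k,m^k)-E(u^k,\Theta^k,m^k)
=\underbrace{\langle \phi^k,u^{k+1}-u^k\rangle}_{\le 0}
-\mu\sqrt{\tfrac{\pi}{\tau}}\sum_i\langle u_i^{k+1}-u_i^k,\,G_\tau*(u_i^{k+1}-u_i^k)\rangle.
\end{equation*}
The bracketed term is nonpositive because $u^{k+1}$ minimizes $L$ and $u^k$ is feasible, and invoking \cref{lemma2} with $\tau<\tfrac12$ bounds the quadratic form below by $c_0\|u^{k+1}-u^k\|^2$ with $c_0$ independent of the grid size, which yields the claimed strict decrease.

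The main obstacle is that the $m$-update is a prior estimate rather than a minimizer, so $\Delta_m$ may be positive and must be absorbed by the quadratic gain. I would bound it using the hypotheses: expanding $(I-m_i^{k+1})^2-(I-m_i^k)^2$ and using that $I$, the $m_i$'s, and the kernel mass of $K_r$ are bounded while $u_i^{k+1}\in\{0,1\}$ gives $|\Delta_m|\le C\|m^{k+1}-m^k\|$, and the assumed Lipschitz continuity of $M$ (all norms being equivalent in the discrete setting) then gives $|\Delta_m|\le C'\|u^{k+1}-u^k\|$ for a constant $C'$ independent of $\mu$. The decisive point is the scale mismatch: the gain is quadratic in $\|u^{k+1}-u^k\|$ while the loss is only linear, but because $u$ is binary, a nonzero update flips at least one pixel and hence $\|u^{k+1}-u^k\|\ge\sqrt2$, so $\mu\sqrt{\pi/\tau}\,c_0\|u^{k+1}-u^k\|^2\ge \mu\sqrt{\pi/\tau}\,c_0\sqrt2\,\|u^{k+1}-u^k\|$. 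Choosing $\mu$ large enough that $\mu\sqrt{\pi/\tau}\,c_0\sqrt2\ge C'$ makes the quadratic gain dominate $\Delta_m$ (the case $u^{k+1}=u^k$ being trivial, since then $m^{k+1}=m^k$ and $\Delta_m=0$), completing the chain and proving \cref{energy_dacay}.
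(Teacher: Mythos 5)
Your proposal is correct, and its skeleton coincides with the paper's own proof: you split the energy change across the three sub-updates, handle the $\Theta$-update by optimality, handle the $u$-update through the exact linear-plus-concave-quadratic expansion (nonpositive linear part because $u^{k+1}$ minimizes the linearization \cref{linearization} over $\mathcal{S}\ni u^k$, quadratic part bounded below via \cref{lemma2} when $\tau<\frac{1}{2}$), and control the non-variational $m$-update by the Lipschitz hypothesis --- precisely the structure of \cref{L1}--\cref{E1E2}. The one step where you genuinely diverge is how the loss $\Delta_m$ is absorbed by the quadratic gain. The paper converts its linear bound $|\Delta_m|\le C\sum_i\int|u_i^{k+1}-u_i^k|\,dx$ into a quadratic one through the pointwise identity $|u_i^{k+1}-u_i^k|=(u_i^{k+1}-u_i^k)^2$ for binary iterates (the last line of \cref{E1E2}), so that loss and gain scale identically and a large $\mu$ closes the argument. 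You instead keep the loss linear in $\|u^{k+1}-u^k\|_2$ and invoke discreteness: any nonzero update flips at least one pixel between two phases, hence $\|u^{k+1}-u^k\|_2\ge\sqrt{2}$, and the quadratic gain dominates the linear loss. Both mechanisms are valid, and they are really two faces of the same fact, since $\|v\|_1=\|v\|_2^2$ for $v$ with entries in $\{-1,0,1\}$; but there is a trade-off worth noting. The paper's identity is purely pointwise, so it would survive a continuum formulation in which the flipped region can have arbitrarily small measure, where your lower bound $\|u^{k+1}-u^k\|_2\ge\sqrt{2}$ fails; your version makes the reliance on the pixel grid explicit (harmless here, since \cref{lemma2} is itself a statement about the discrete convolution matrix), but it needs an extra appeal to finite-dimensional norm equivalence to reconcile the unspecified norm in the Lipschitz hypothesis with $\ell^2$, so your threshold for ``$\mu$ sufficiently large'' can pick up a dependence on the grid size that the paper's matched-norm ($L^1$) estimate avoids.
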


\begin{proof}
	Let
	$$
	\begin{aligned}
			&E_1(u,\Theta) = \sum_{i=1}^n\lambda_i\int_{\Omega}u_iF_i(I,\Theta) dx  +\mu\sum_{i=1}^n\sum\limits_{j=1,j\ne i}^n\sqrt{\frac{\pi}{\tau}}\int_{\Omega}  u_iG_\tau*u_j dx,\\
			&E_2(u,m) = p\sum_{i=1}^n\lambda_i\int_{\Omega} u_i K_r*(I-m_i)^2dx,\\
			&L(u,m,u^k,\Theta^k) = \sum_{i=1}^n\int_{\Omega}u_i\left[\lambda_iF_i(I,\Theta^k) +2\mu\sqrt{\frac{\pi}{\tau}}\sum\limits_{j=1,j\ne i}^n G_\tau*u_j^k +p\lambda_iK_r*(I-m_i)^2 \right] dx.
	\end{aligned}
	$$
	Then we have
	\begin{align}
		E(u^k,\Theta^k,m^k) &= L(u^k,m^k,u^k,\Theta^k) -\mu\sqrt{\frac{\pi}{\tau}}\sum_{i=1}^n\sum\limits_{j=1,j\ne i}^n u_i^kG_\tau*u_j^k,\label{L1}\\
		E(u^{k+1},\Theta^k,m^{k}) &= L(u^{k+1},m^k,u^k,\Theta^k) + \mu\sqrt{\frac{\pi}{\tau}}\sum_{i=1}^n\sum\limits_{j=1,j\ne i}^n u_i^{k+1}G_\tau*u_j^{k+1}\\
		& - 2\mu\sqrt{\frac{\pi}{\tau}}\sum_{i=1}^n\sum\limits_{j=1,j\ne i}^n u_i^{k+1}G_\tau*u_j^k,\notag\\
		E(u^{k+1},\Theta^{k+1},m^k) &\le E(u^{k+1},\Theta^{k},m^k),\label{L2} \\
		E(u^{k+1},\Theta^{k+1},m^{k+1}) &= E(u^{k+1},\Theta^{k+1},m^k) + E_2(u^{k+1},m^{k+1}) - E_2(u^{k+1},m^k)\label{L3}.
	\end{align}
	Note that
\begin{equation}
	L(u^{k+1},m^k,u^k,\Theta^k)\le L(u^{k},m^k,u^k,\Theta^k).
\end{equation}
	Combining \cref{L1}-\cref{L3}, we can get
\begin{equation}
	\label{E}
	\begin{aligned}
			&E(u^{k+1},\Theta^{k+1},m^{k+1})-E(u^{k},\Theta^{k},m^k)\\ \le & E_2(u^{k+1},m^{k+1}) - E_2(u^{k+1},m^{k}) -\mu\sqrt{\frac{\pi}{\tau}}\sum_{i=1}^n\int_{\Omega}(u_i^{k+1}-u_i^{k})G_\tau*(u_i^{k+1}-u_i^{k})dx.
		\end{aligned}
\end{equation}
For convenience, we denote
$$
P_{\tau}(u) = \sqrt{\frac{\pi}{\tau}}\int_{\Omega}uG_\tau*udx.
$$
Then for the right-hand side of \cref{E}, we have
\begin{equation}
	\label{E1E2}
	\begin{aligned}
			& E_2(u^{k+1},m^{k+1}) - E_2(u^{k+1},m^k)-\mu\sum_{i=1}^nP_{\tau}(u_i^{k+1}-u_i^{k})\\
		\le & \sum_{i=1}^n\lambda_i\int_{\Omega} u_i^{k+1}q_1(x)|M^2(I,u_i^{k+1})-M^2(I,u_i^{k})|dx\\
		&+\sum_{i=1}^n\lambda_i\int_{\Omega} u_i^{k+1}q_2(x)|M(I,u_i^{k+1})-M(I,u_i^{k})|dx -\mu \sum_{i=1}^nP_{\tau}(u_i^{k+1}-u_i^{k})\\
		\le & \lambda_i(L_1+L_2)\sum_{i=1}^n\int_{\Omega} |u_i^{k+1}-u_i^{k}|dx-\mu\sum_{i=1}^nP_{\tau}(u_i^{k+1}-u_i^{k})\\
		 = &\lambda_i(L_1+L_2)\sum_{i=1}^n\int_{\Omega} (u_i^{k+1}-u_i^{k})^2dx-\mu\sum_{i=1}^nP_{\tau}(u_i^{k+1}-u_i^{k}),
	\end{aligned}
\end{equation}
where $q_1(x) = p K*1$, $q_2(x) = p K*I$ and $L_1$, $L_2$ are constants that are not dependent on $u$.
Combining \cref{E,E1E2} and using \cref{lemma2}, we complete the proof.
\end{proof}

\section{Numerical Experiments}
\label{sec4}
This section displays experiments to demonstrate the necessity of the Multi-IGLIM, the high robustness of our model to noise, and the efficiency of the ICTM-LVF solver for segmenting various images. All the experiments are implemented on a laptop with a Windows system, 2.60-GHz CPU, 16GB RAM, and MATLAB R2020a.
\subsection{Evaluation for the Multi-IGLIM}

In this part, we test the performance of the Multi-IGLIM. The initial contours obtained by the Multi-IGLIM for an image of flowers are shown in \cref{fig3.1}, and the segmentation result based on this initialization is displayed in \cref{fig3.2}. Gaussian noise is added to another synthetic RGB image for a further test. \cref{fig4.2} and \cref{fig4.1} display the corresponding initial contours and the segmentation results, respectively. It can be observed that the initial contours obtained by the Multi-IGLIM are reasonable and we can get desired segmentation results effectively based on these initializations.

In addition, we compare the results from the ICTM solver with the CV model (ICTM-CV) and different initializations in \cref{fig_comparison_flower}. Among them, the first three columns are the results of three different initial contours given artificially, and the last column gives the result obtained from the Multi-IGLIM. The iteration numbers of different initializations are recorded in \cref{table2}. One can see that a slight change in artificially selected initial contours can cause a longer CPU time and a relatively low-quality result for the segmentation, while the initial contour obtained by the Multi-IGLIM can lead to a high-quality segmentation result and, to some degree, reduce the iteration number and CPU time to reach the steady state.

\begin{figure}[htbp]
  \centering
\subfigure[$u_1$ (stalks)]{\includegraphics[width=3cm]{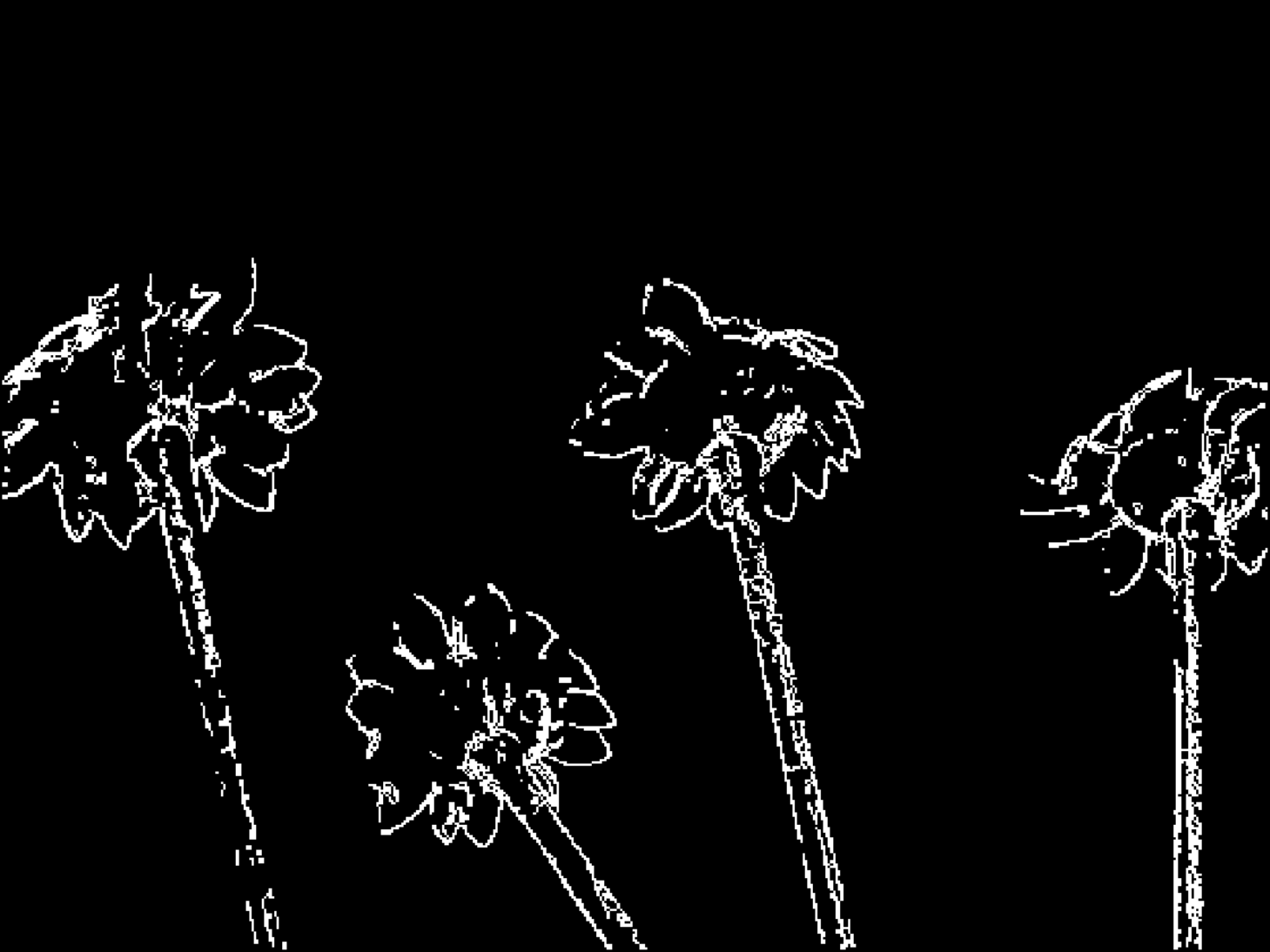}}
\subfigure[$u_2$ (red flowers)]{\includegraphics[width=3cm]{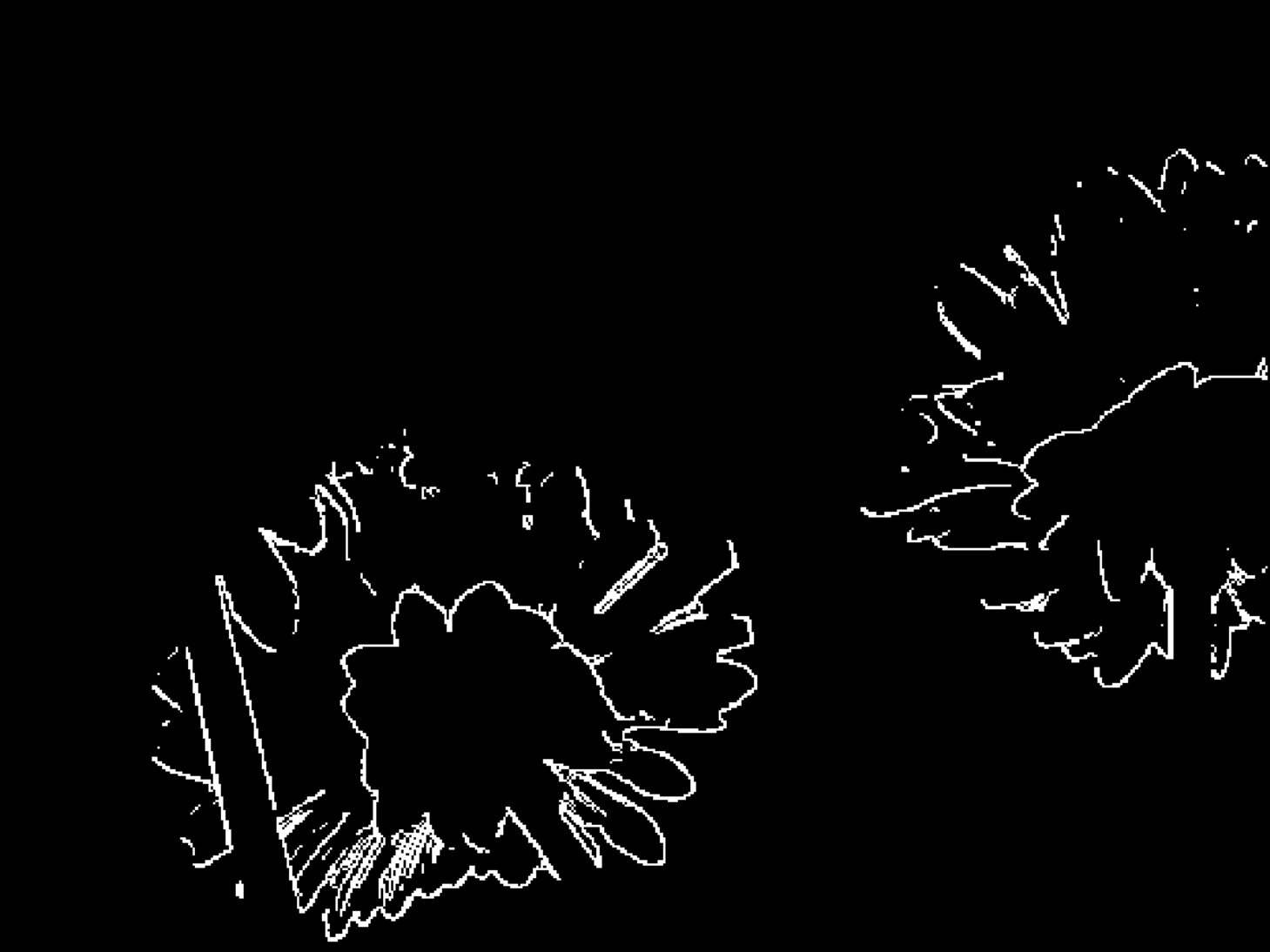}}
\subfigure[$u_3$ (yellow flowers)]{\includegraphics[width=3cm]{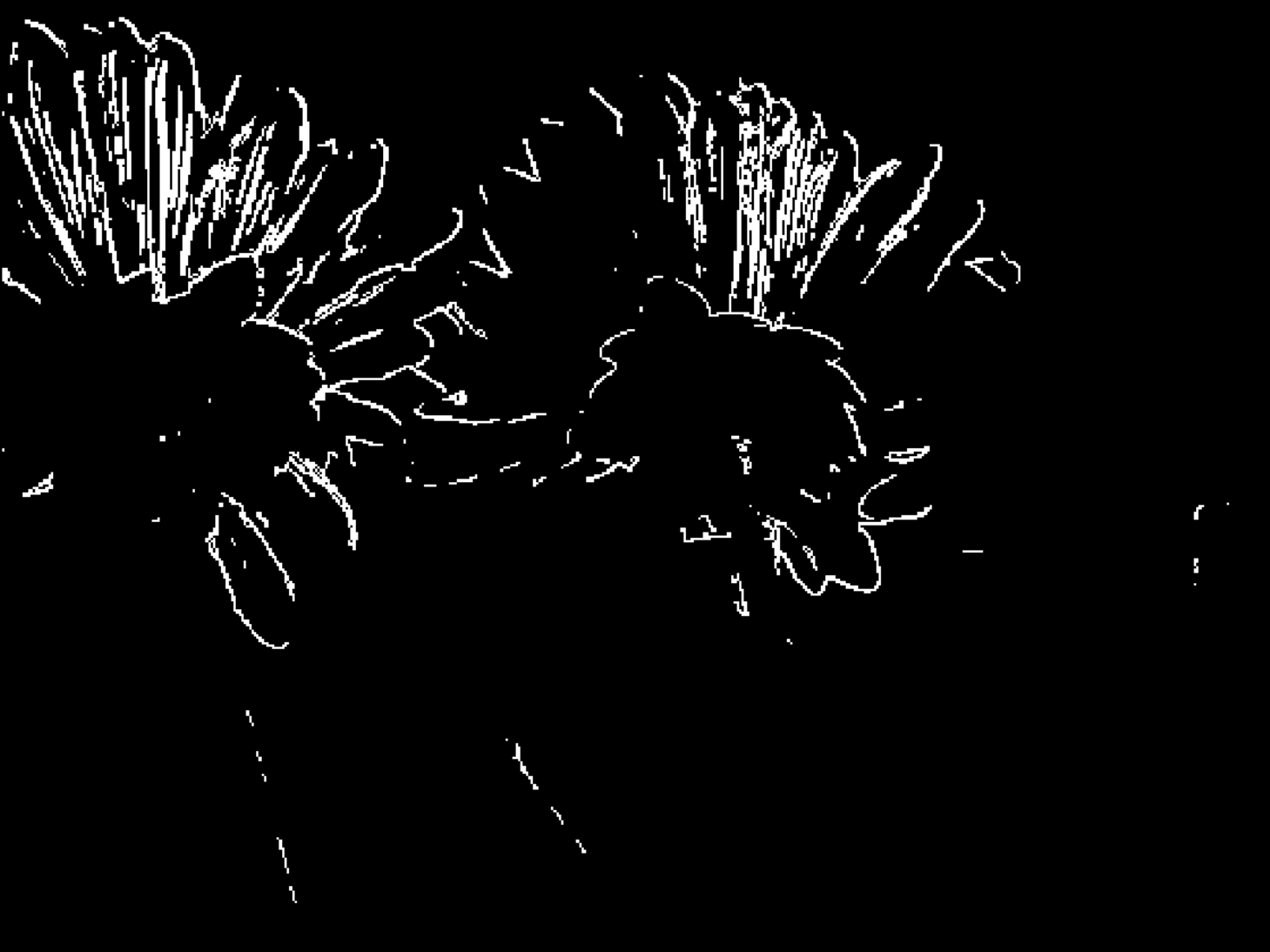}}
\subfigure[$u_4$ (background)]{\includegraphics[width=3cm]{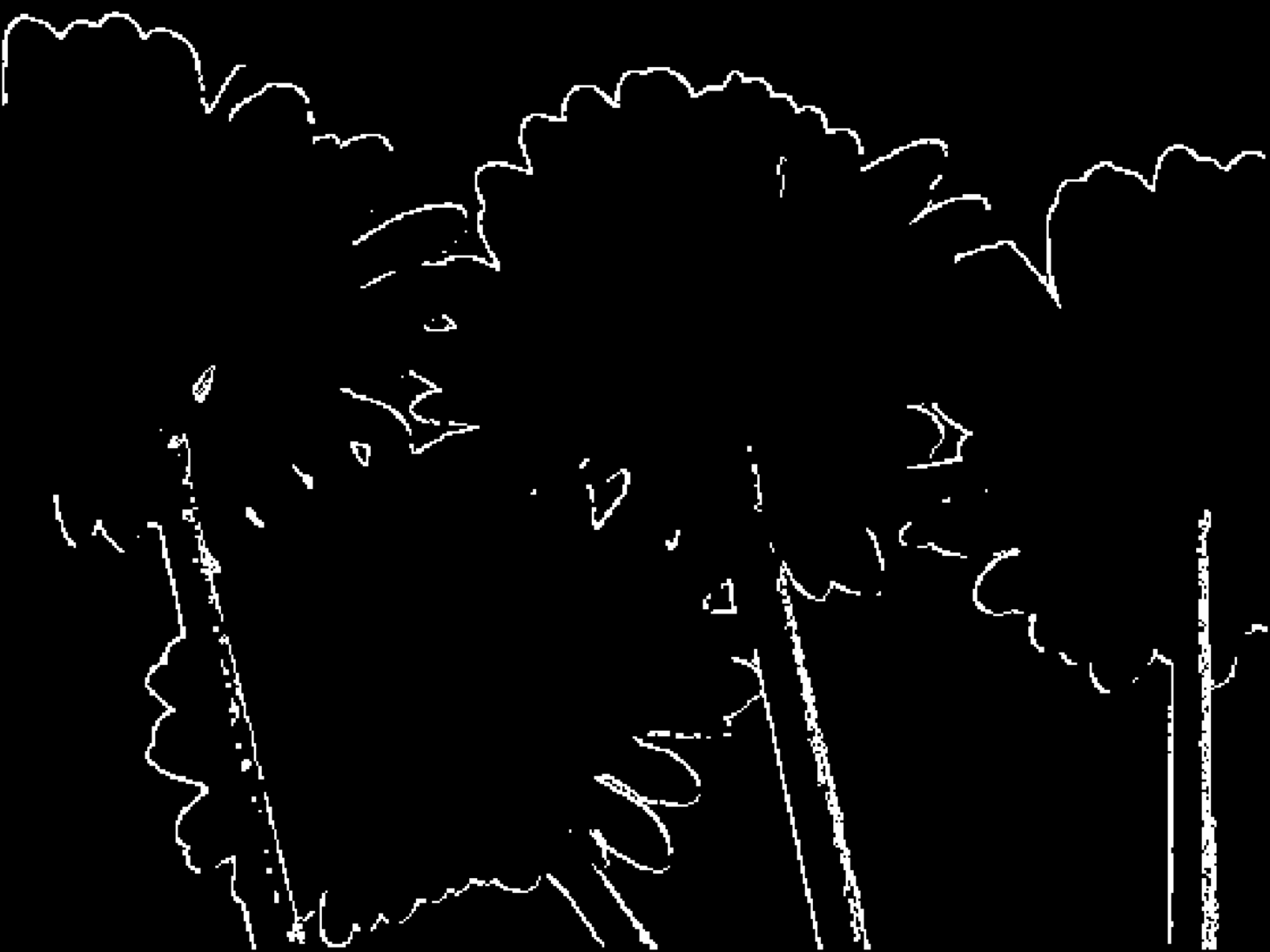}}
\caption{From left to right: the initialization (white lines) of $u_1,u_2,u_3\text{ and }u_4$ obtained by the Multi-IGLIM.}\label{fig3.1}
\end{figure}

\begin{figure}[htbp]
  \centering
\subfigure[original image]{\includegraphics[width=3cm]{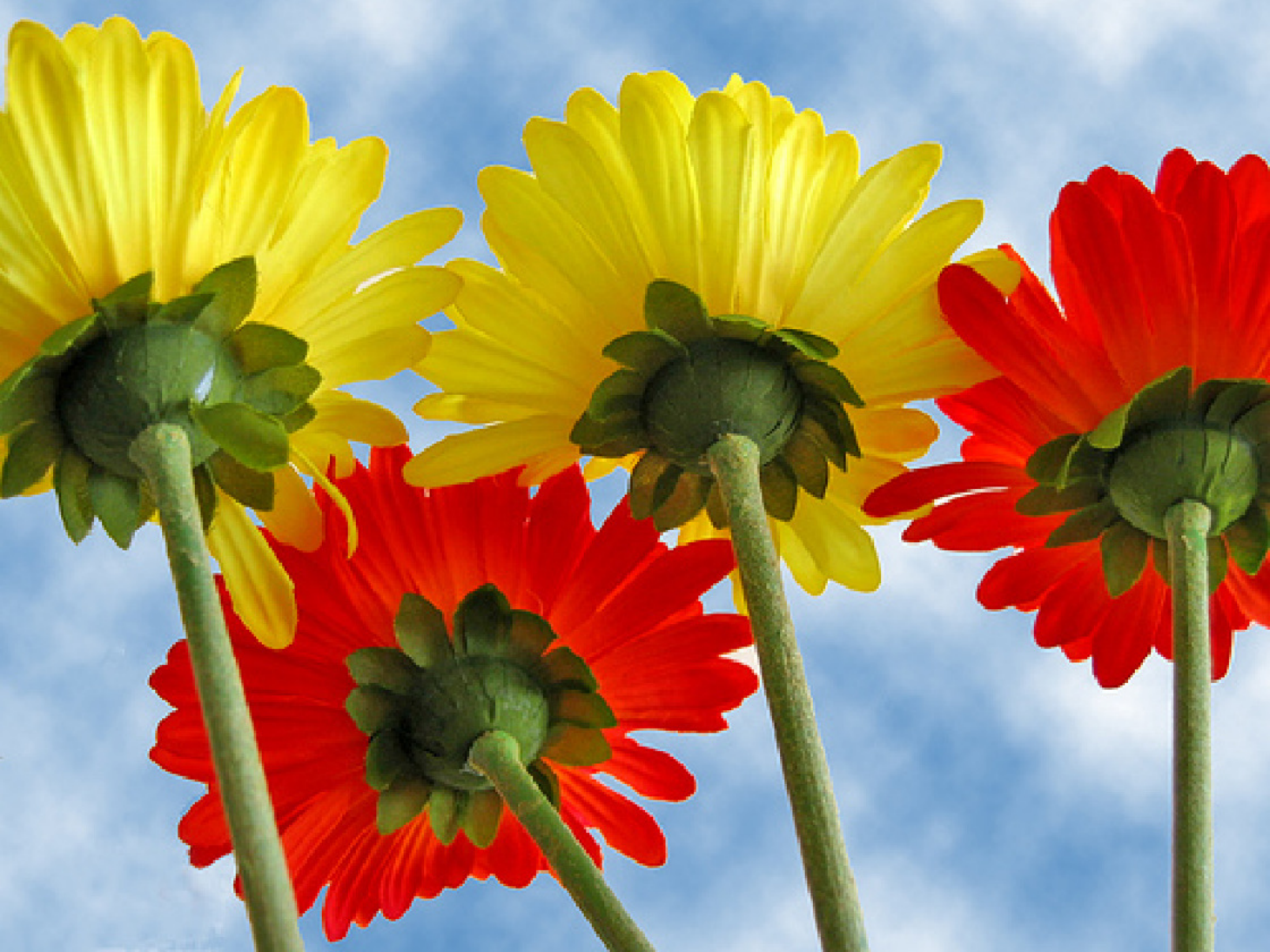}}
\subfigure[final contour]{\includegraphics[width=3cm]{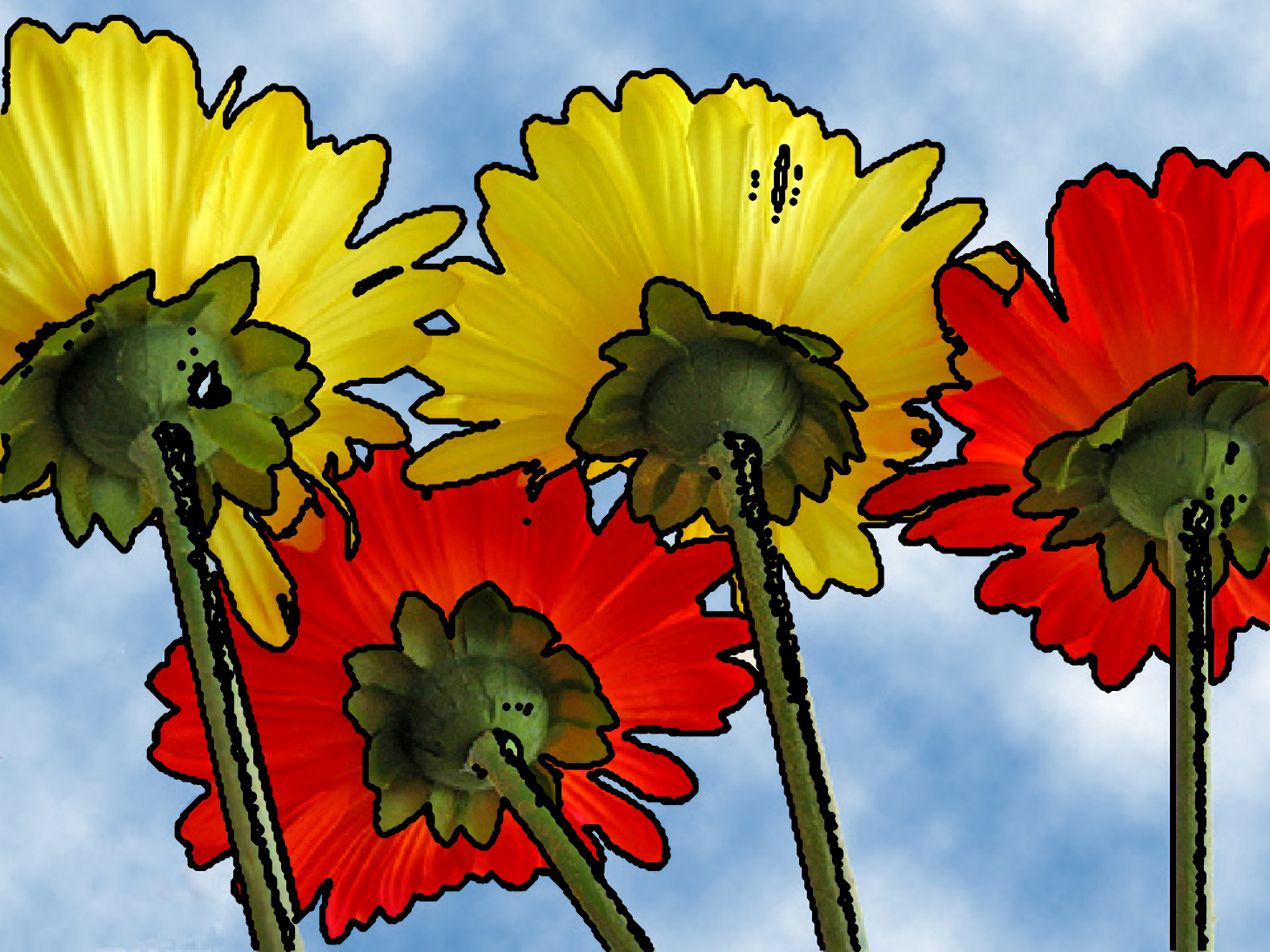}}
\subfigure[final segments]{\includegraphics[width=3cm]{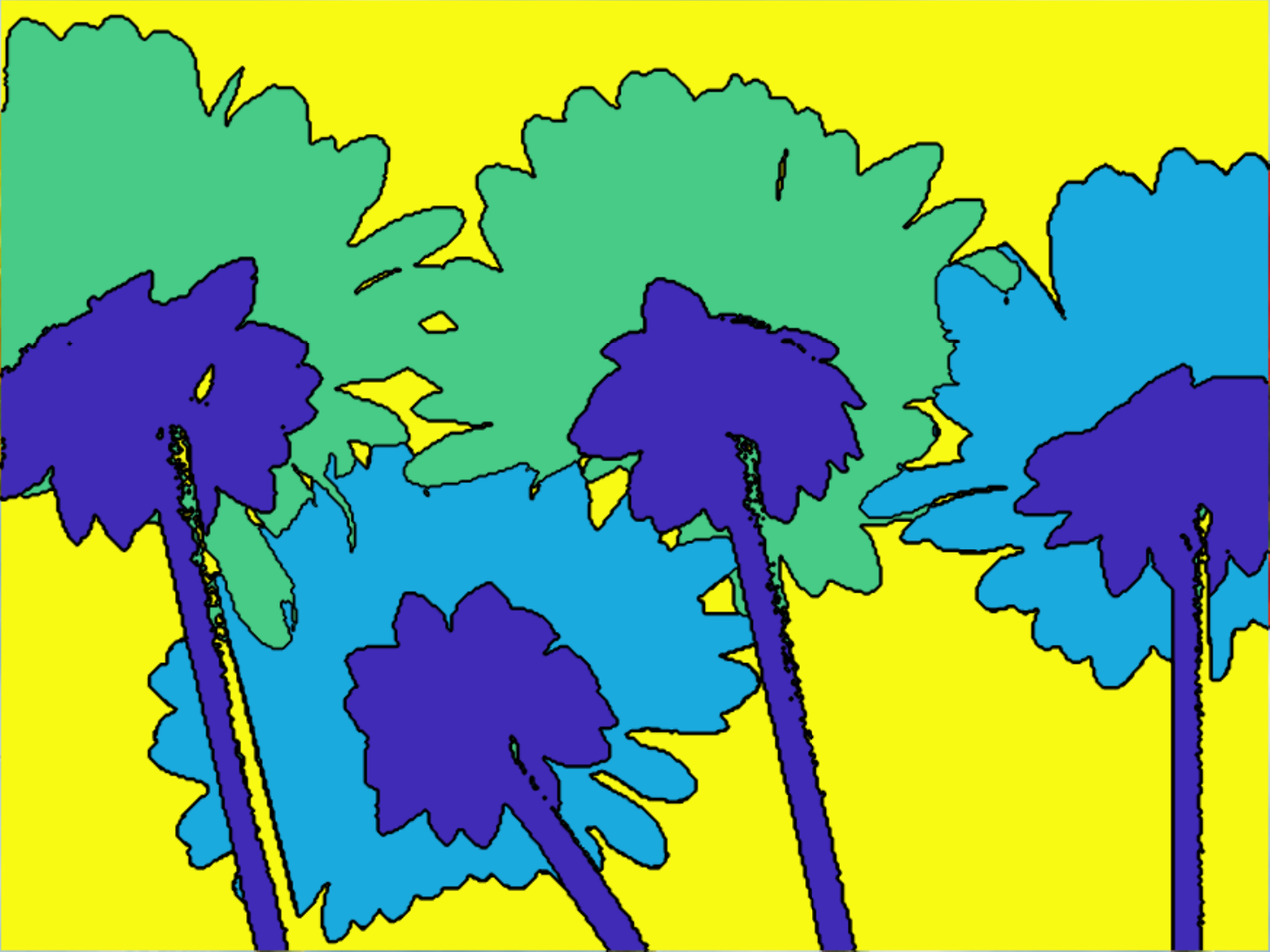}}
\caption{From left to right: original image and the segmentation results.}\label{fig3.2}
\end{figure}

\begin{figure}[htbp]
  \centering
\subfigure[$u_1$ (heart)]{\includegraphics[width=2.8cm,height=2.5cm]{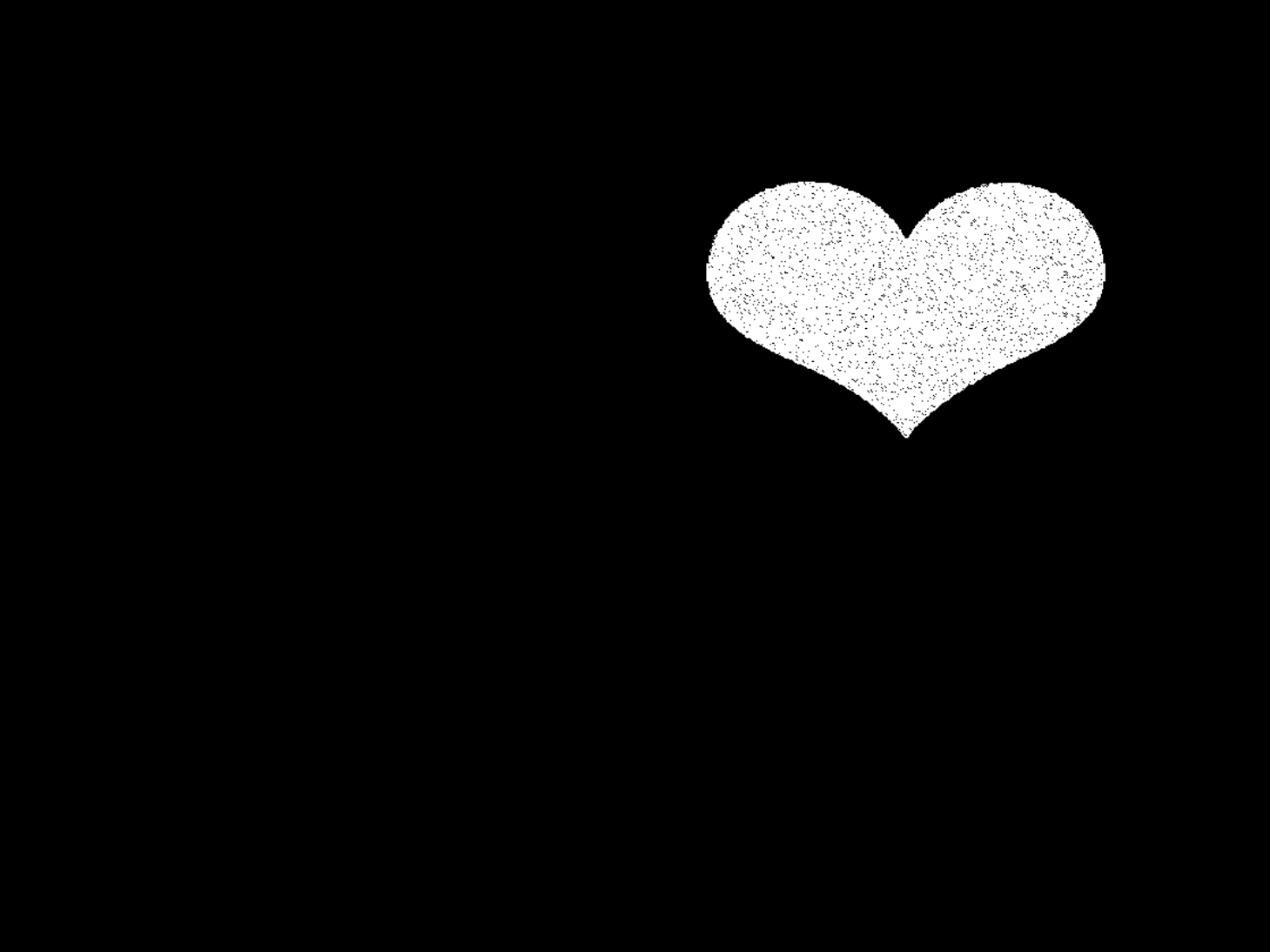}}
\subfigure[$u_2$ (circle)]{\includegraphics[width=2.8cm,height=2.5cm]{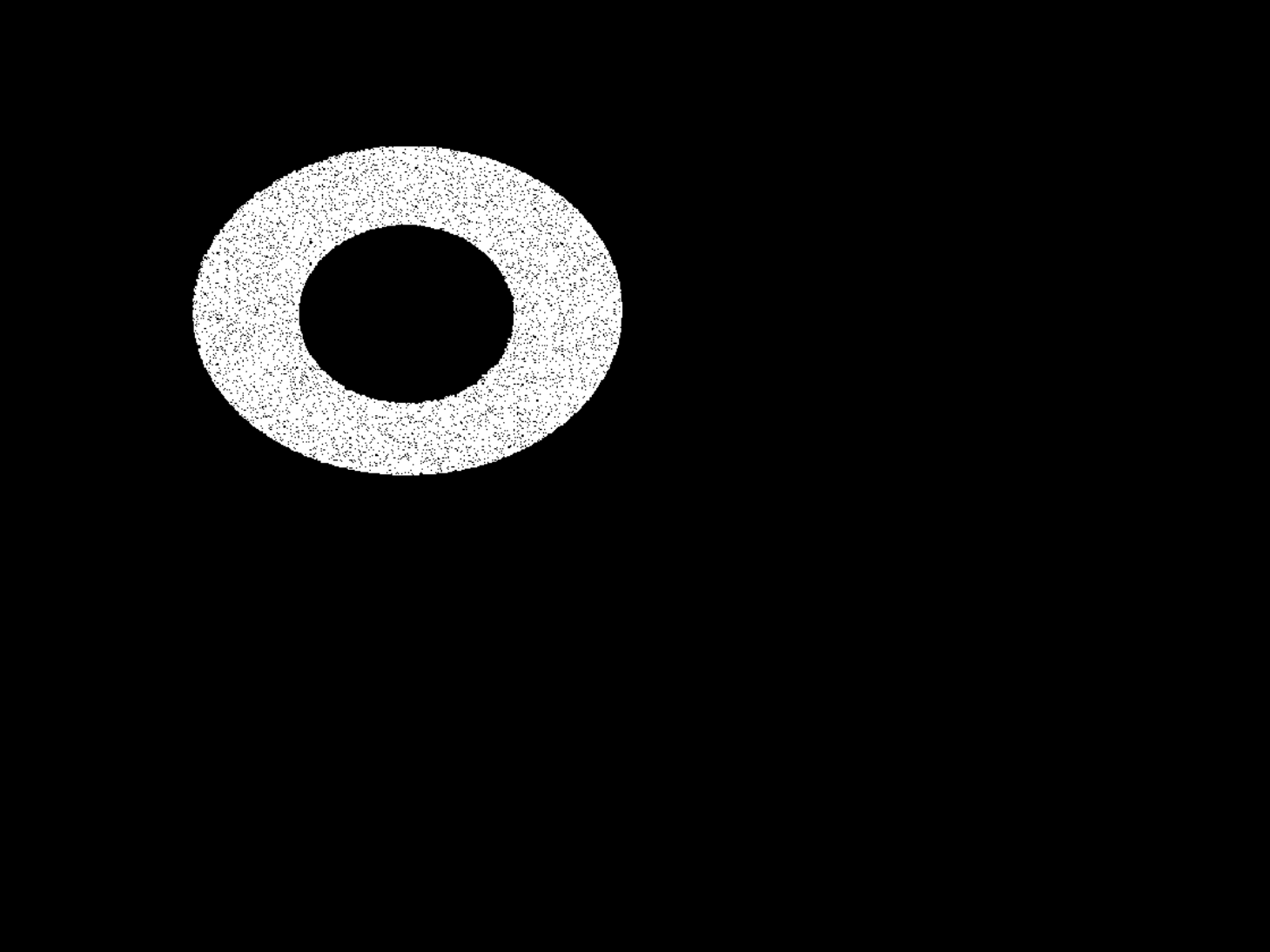}}
\subfigure[$u_3$ (arch)]{\includegraphics[width=2.8cm,height=2.5cm]{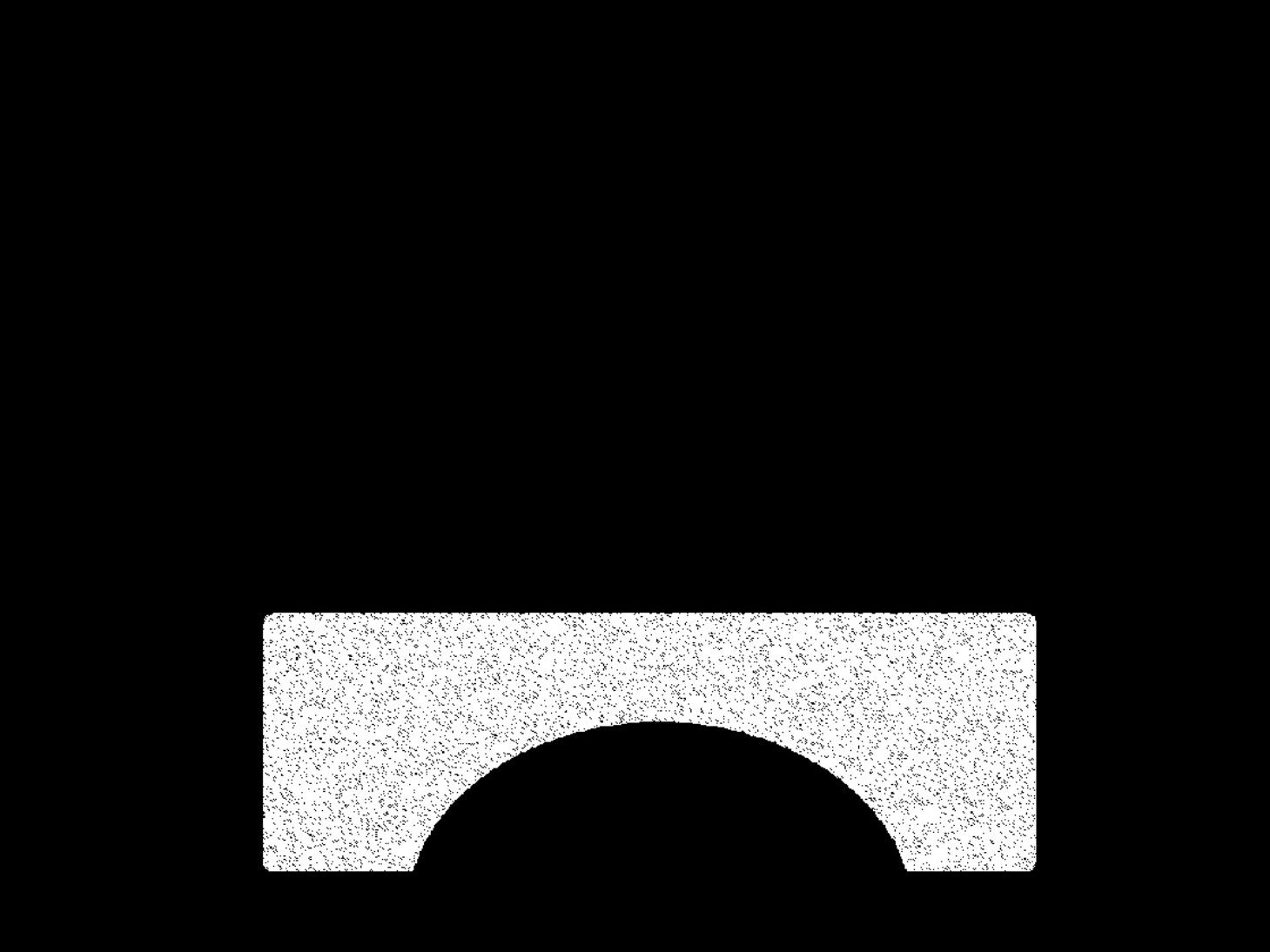}}
\subfigure[$u_4$ (background)]{\includegraphics[width=3cm,height=2.5cm]{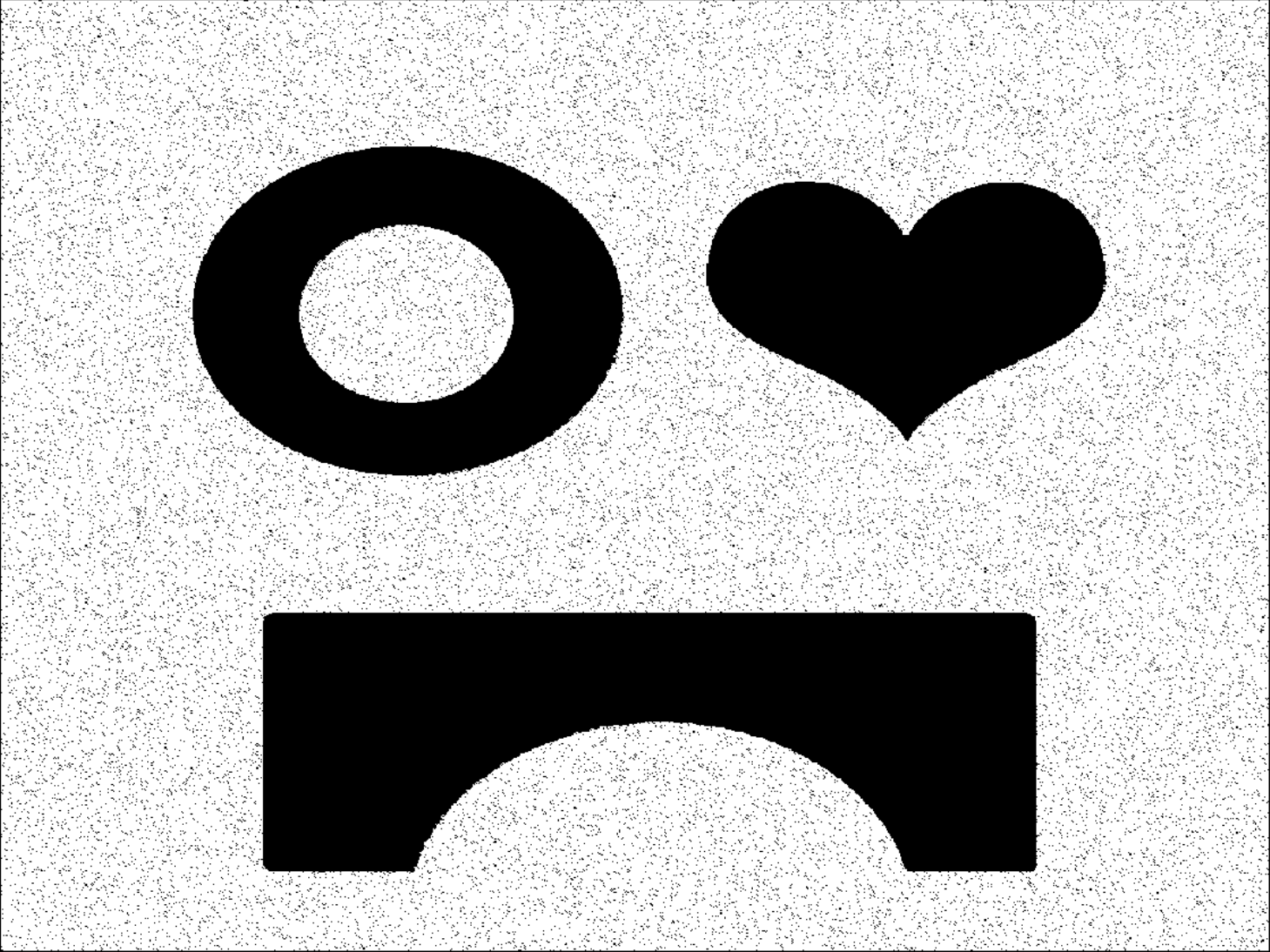}}
\caption{From left to right: the initialization of $u_1,u_2,u_3\text{ and }u_4$ obtained by the Multi-IGLIM.}\label{fig4.2}
\end{figure}

\begin{figure}[htbp]
  \centering
\subfigure[original image]{\includegraphics[width=2.8cm,height=2.5cm]{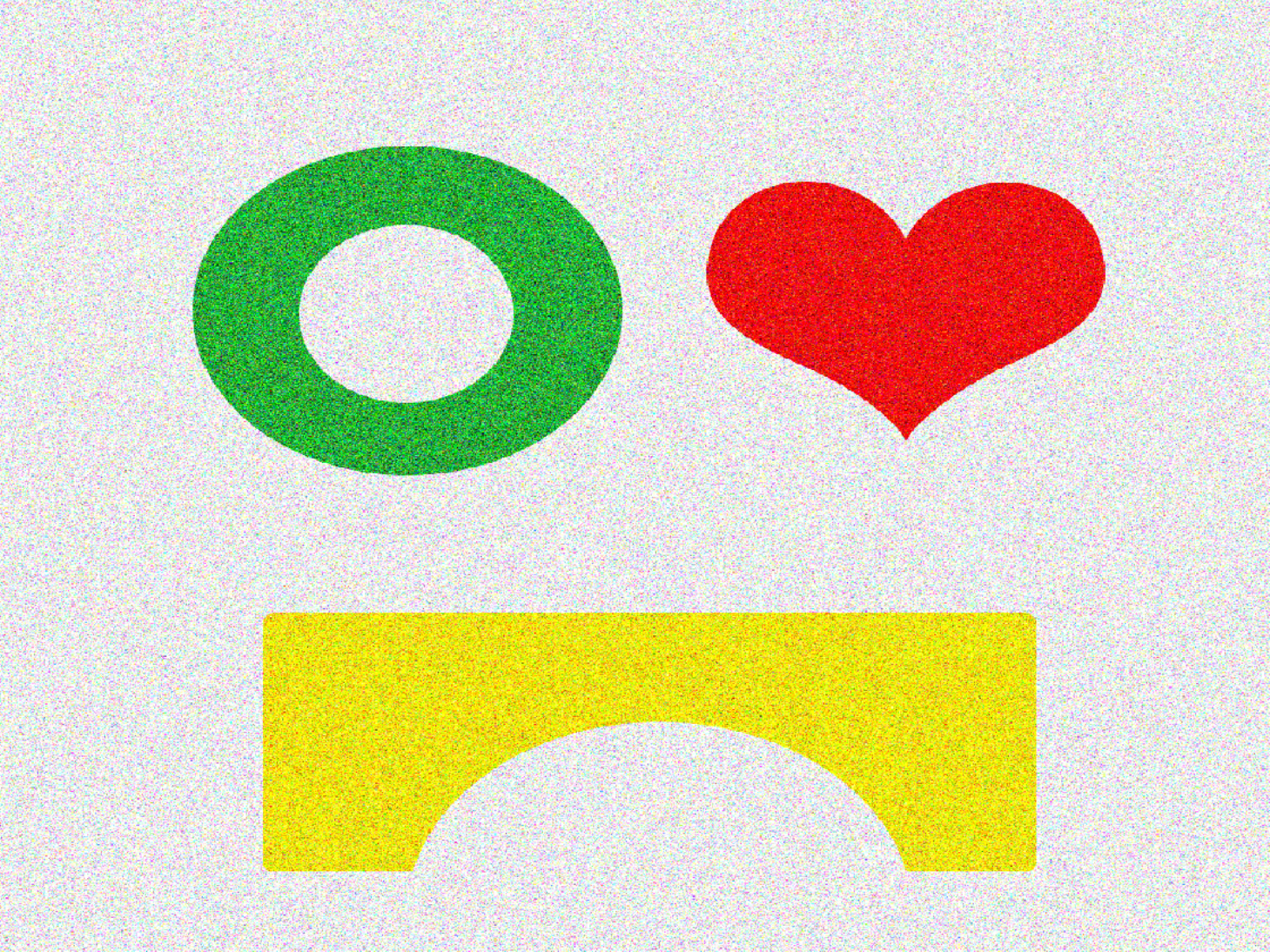}}
\subfigure[final contour]{\includegraphics[width=2.8cm,height=2.5cm]{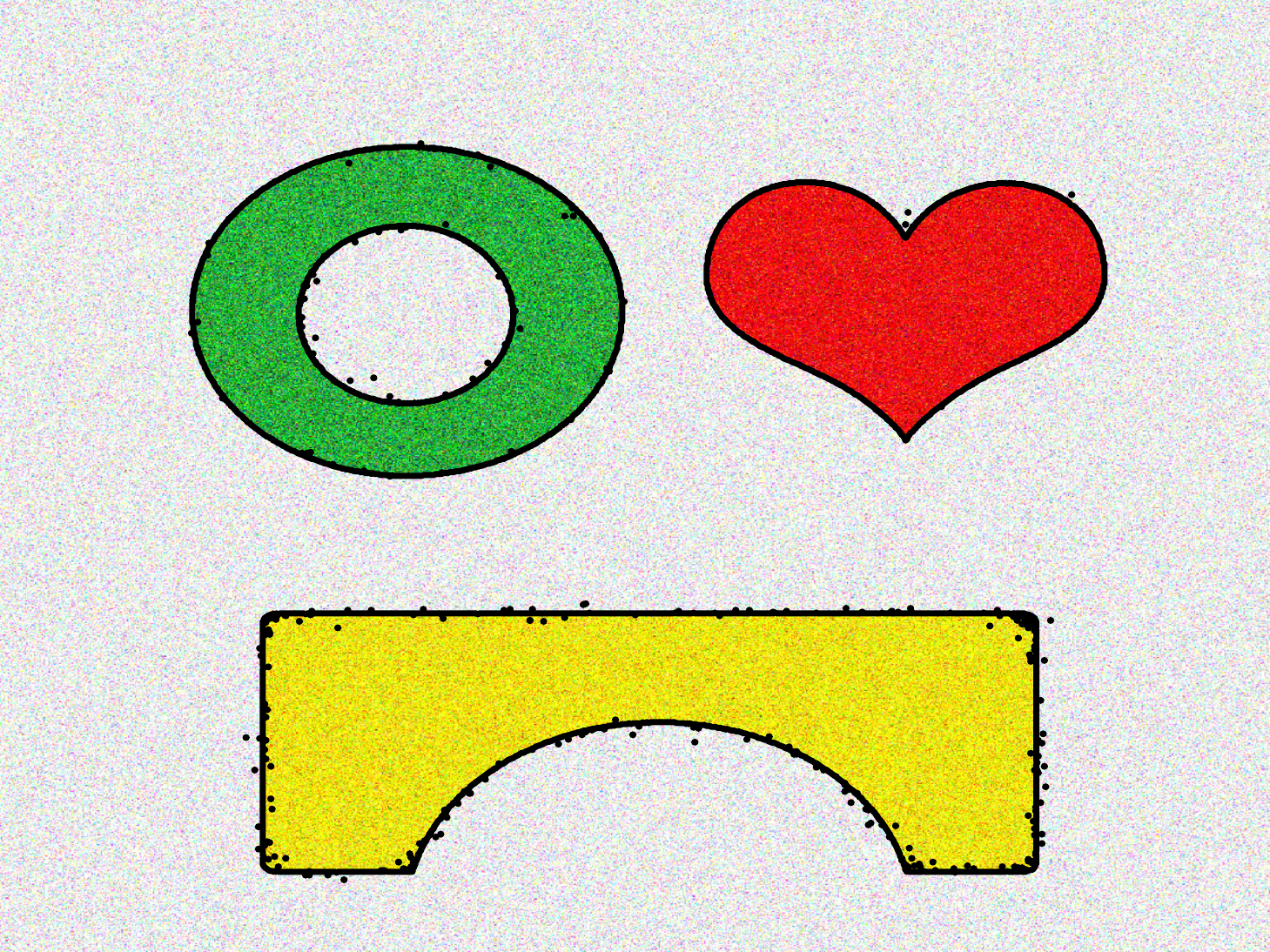}}
\subfigure[final segments]{\includegraphics[width=2.8cm,height=2.5cm]{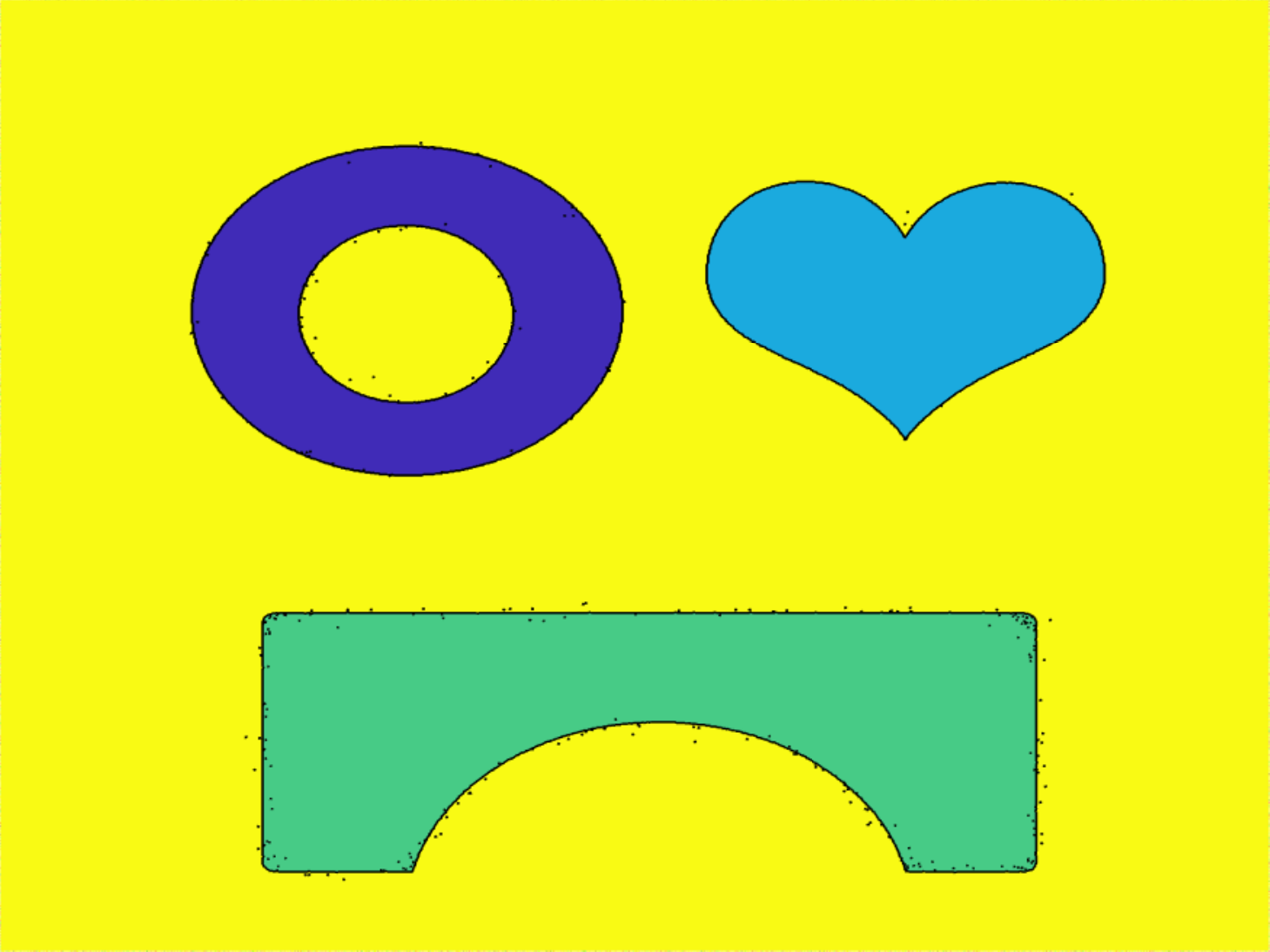}}
\caption{From left to right: original image and the segmentation results.}\label{fig4.1}
\end{figure}

\begin{figure}[htbp]
	\centering
	\subfigure{
	\includegraphics[width=2.8cm,height=2.2cm]{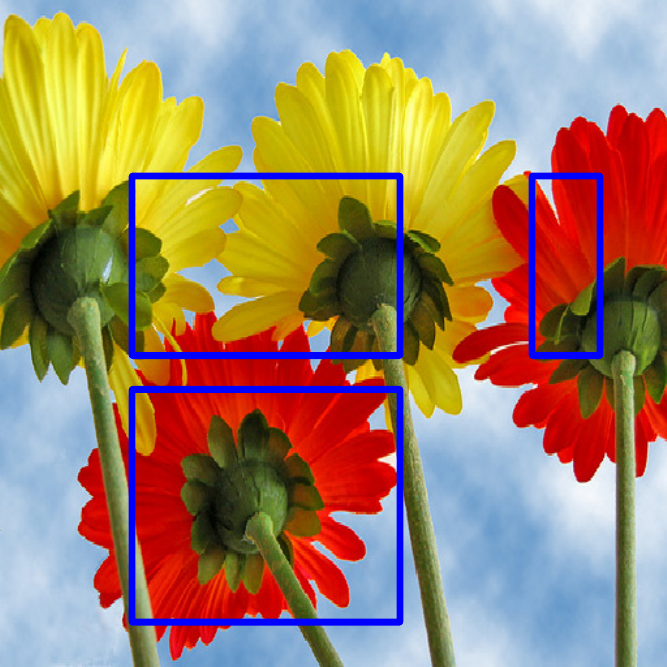}
	\includegraphics[width=2.8cm,height=2.2cm]{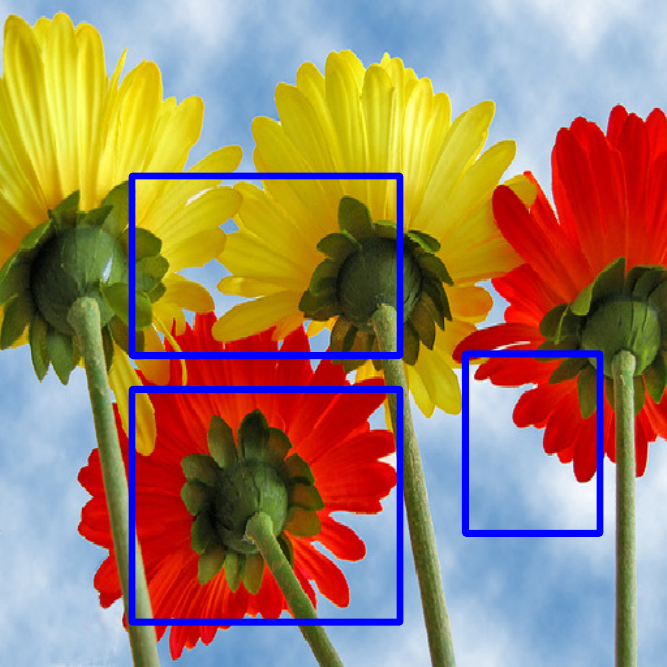}
	\includegraphics[width=2.8cm,height=2.2cm]{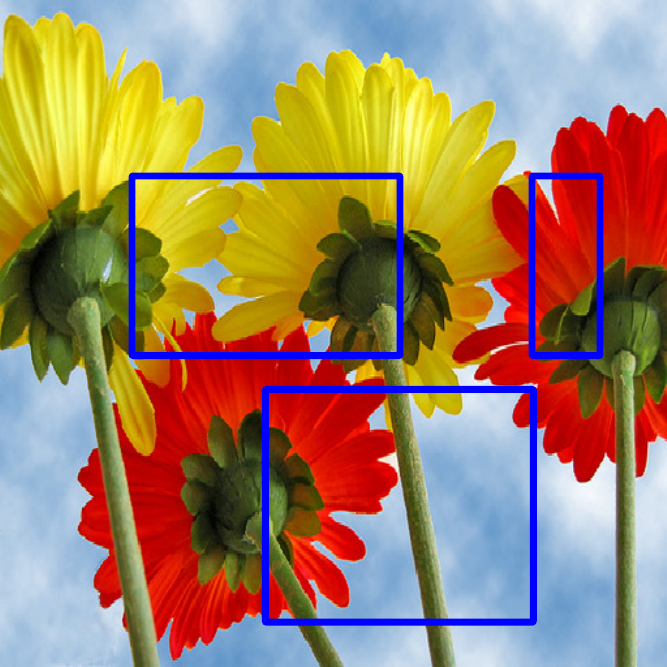}
	\includegraphics[width=2.8cm,height=2.2cm]{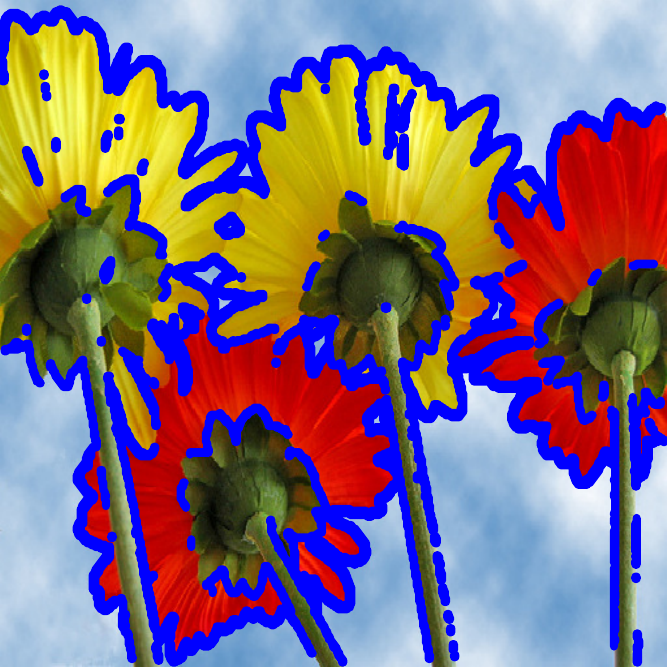}
	}
	\subfigure{
	\includegraphics[width=2.8cm,height=2.2cm]{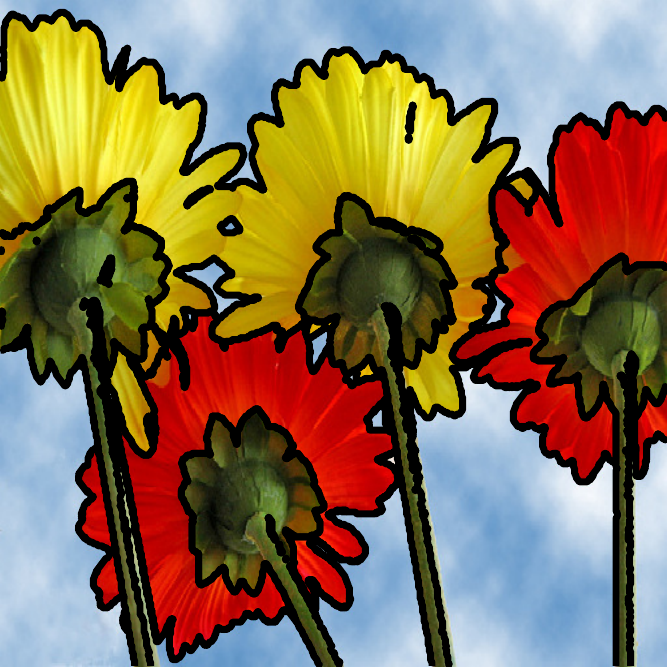}
	\includegraphics[width=2.8cm,height=2.2cm]{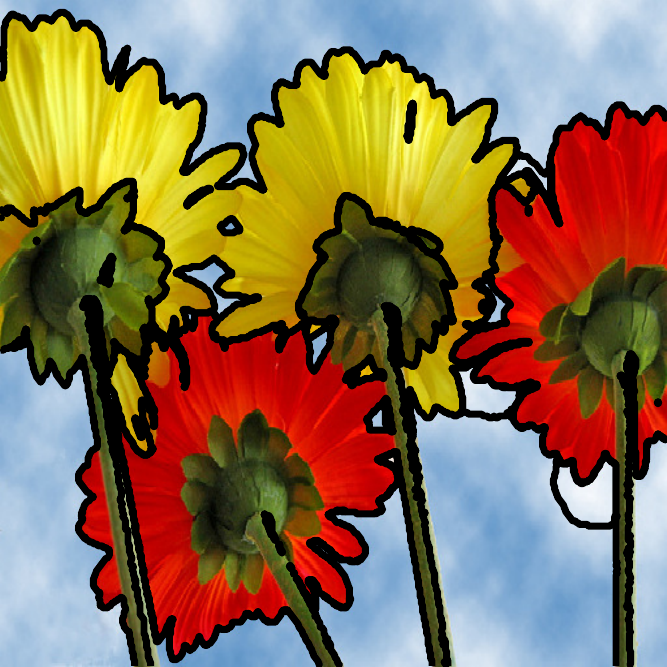}
	\includegraphics[width=2.8cm,height=2.2cm]{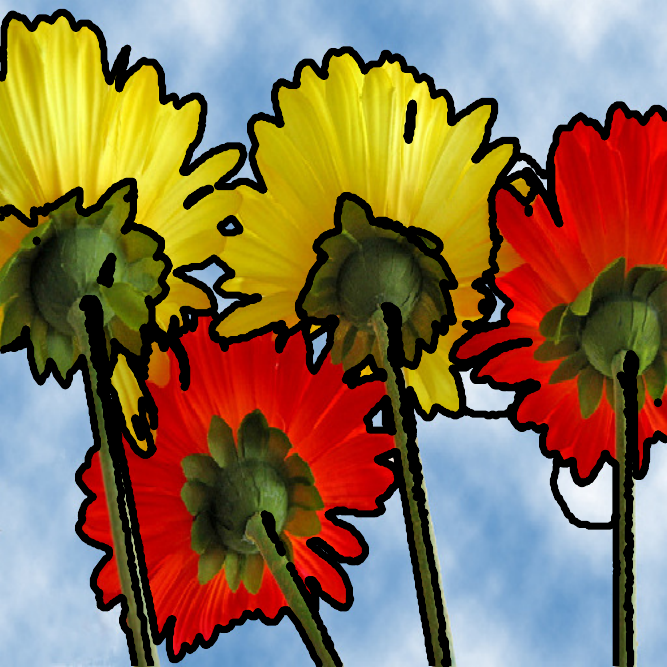}
	\includegraphics[width=2.8cm,height=2.2cm]{flower_seg0-eps-converted-to.pdf}
	}
	\subfigure{
	\includegraphics[width=2.8cm,height=2.2cm]{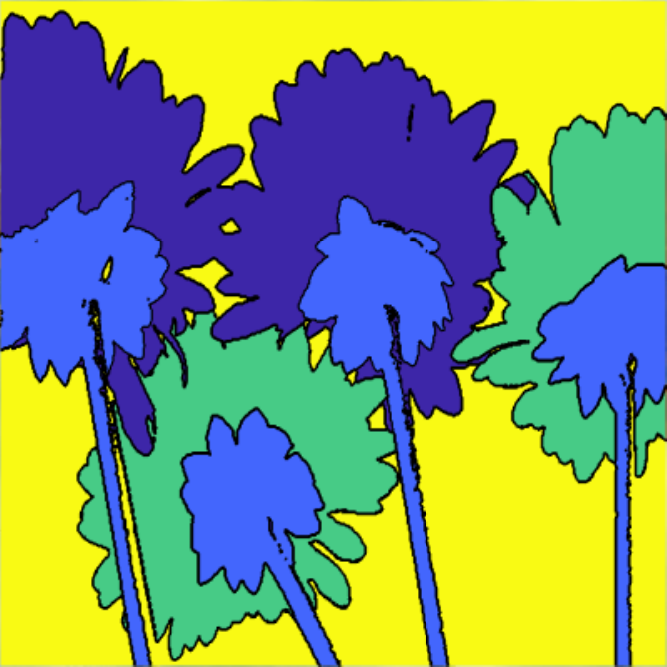}
	\includegraphics[width=2.8cm,height=2.2cm]{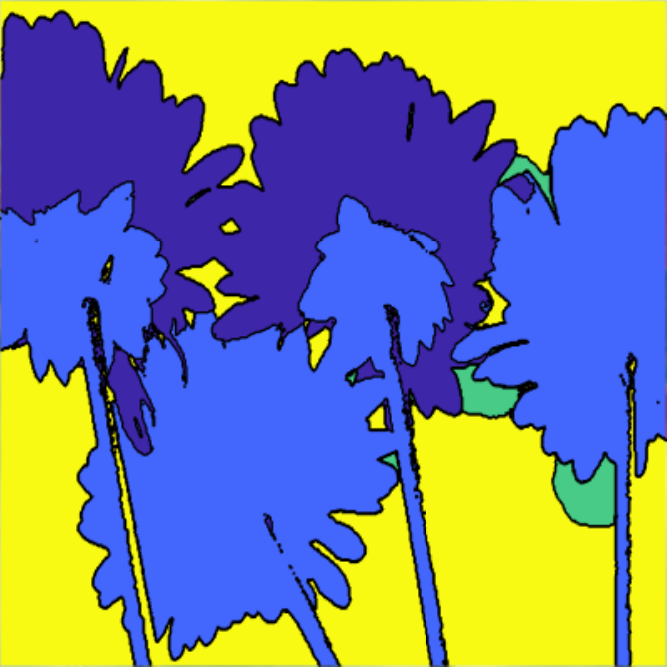}
	\includegraphics[width=2.8cm,height=2.2cm]{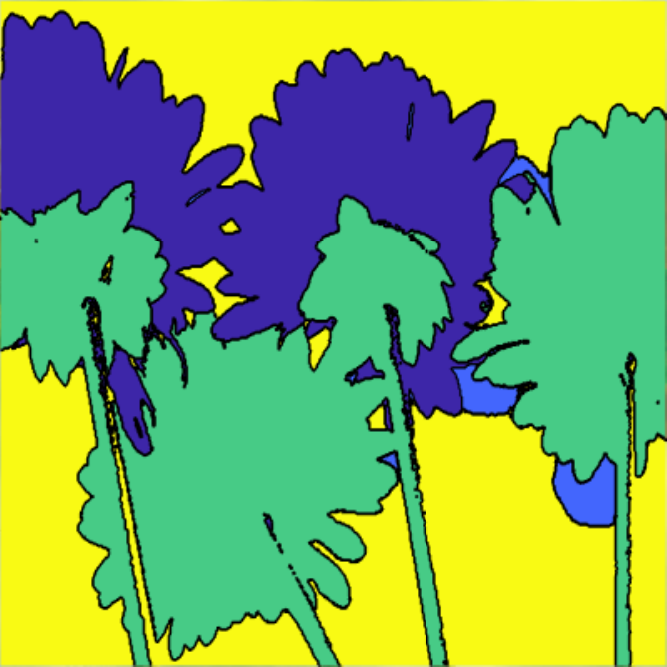}
	\includegraphics[width=2.8cm,height=2.2cm]{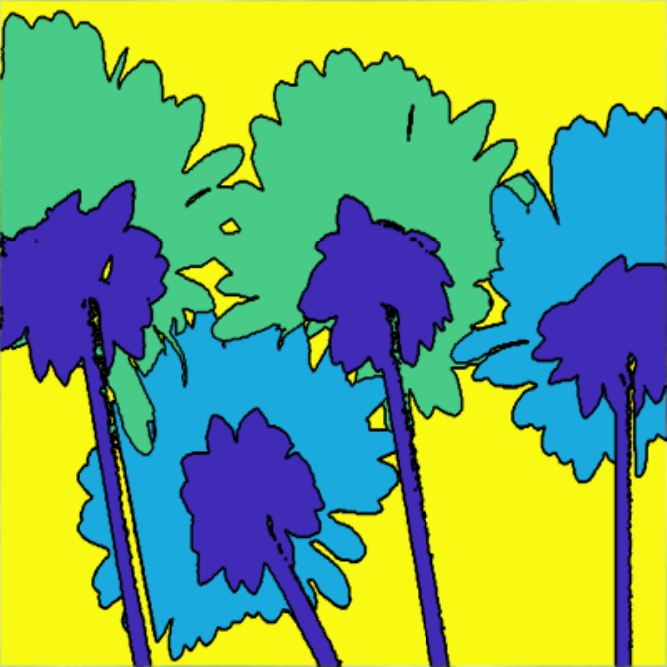}
	}
	 \caption{From top to bottom: Original images with initial contour, final contour, and final segments. All the final contours are highlighted by black lines.}
	 \label{fig_comparison_flower}
\end{figure}

\begin{table}[htbp]
\centering
	\begin{tabular}{lcc}
		\hline
		Initialization	&Iteration number &CPU time (s)\cr
		\hline
		Column 1 &19 &1.001778 \cr
		Column 2 &38  &1.994020  \cr
		Column 3 &182  &9.263411 \cr
     Multi-IGLIM &12  &0.717589\cr\hline
	\end{tabular}
	\label{table2}\caption{Comparison of the iteration number and CPU time for the segmentation in \cref{fig_comparison_flower}.}
\end{table}

\subsection{ICTM-LVF for images with noise}
In this part, we evaluate the performance of the ICTM-LVF solver for both the CV model and the LIF model on synthetic and medical images. Comparison is made between the original ICTM and the ICTM-LVF to demonstrate the effectiveness of the ICTM-LVF in the segmentation of the images with noise. All the initial contours are given by the Multi-IGLIM.

First, we solve the CV model for the segmentation of a clean synthetic color image and its noise version.  We display the segmentation results from the ICTM and the ICTM-LVF in \cref{fig_lvf_cv}. One can observe that the ICTM produces good segmentation for the clean image, but its performance on the noise image is not so appealing, while the ICTM-LVF performs quite well for both clean and noise images.

\begin{figure}[htbp]
	\centering
	\subfigure{
		\includegraphics[width=2.2cm]{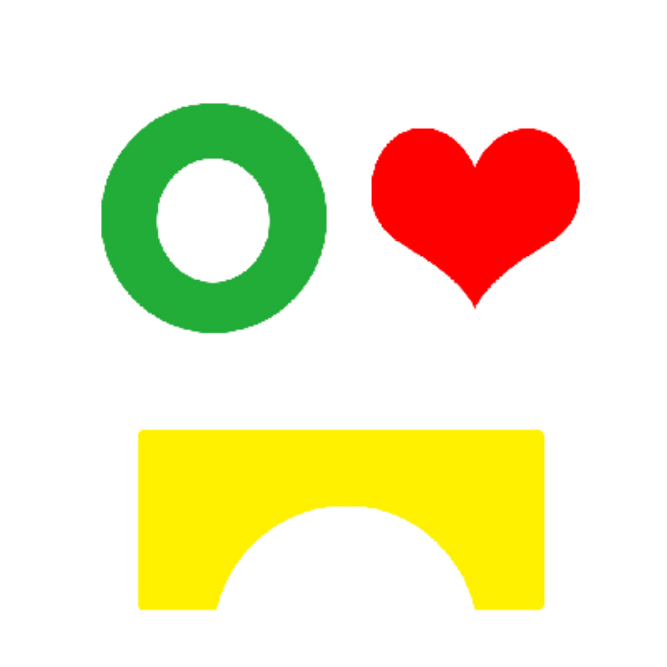}
		\includegraphics[width=2.2cm]{n3object0-eps-converted-to.pdf}
		\includegraphics[width=2.2cm]{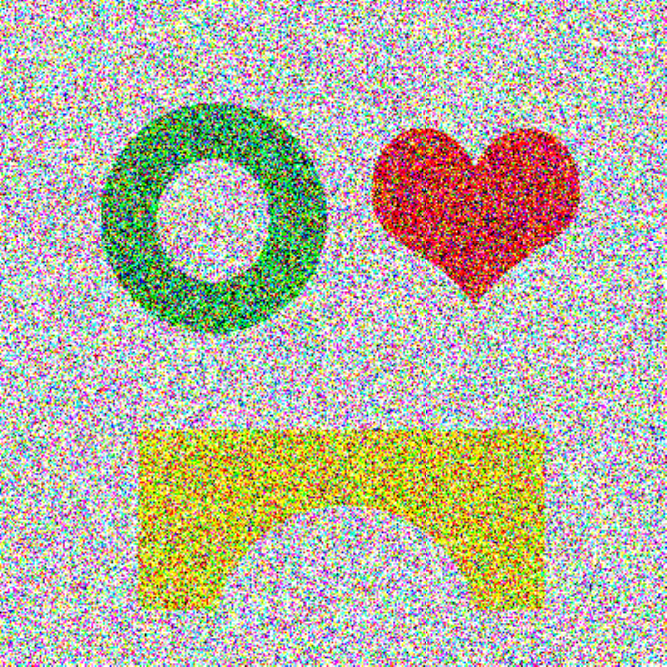}
		\includegraphics[width=2.2cm]{n3object1-eps-converted-to.pdf}
	}
	\subfigure{
		\includegraphics[width=2.2cm]{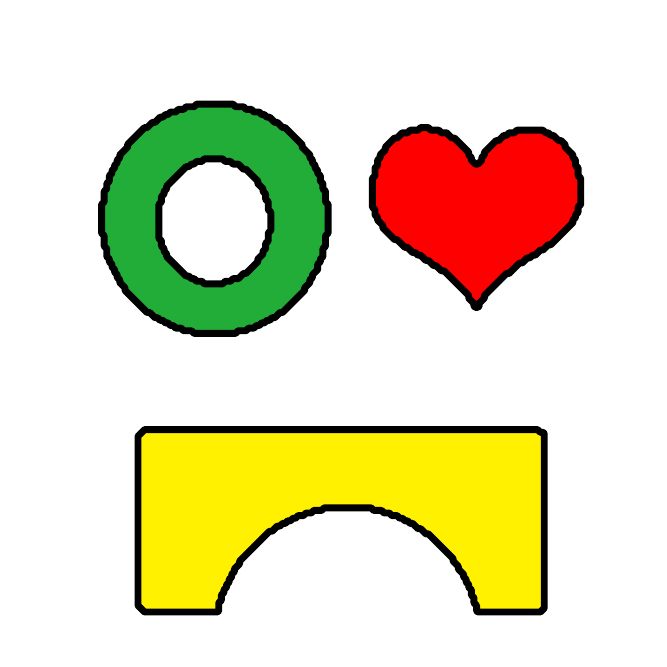}
		\includegraphics[width=2.2cm]{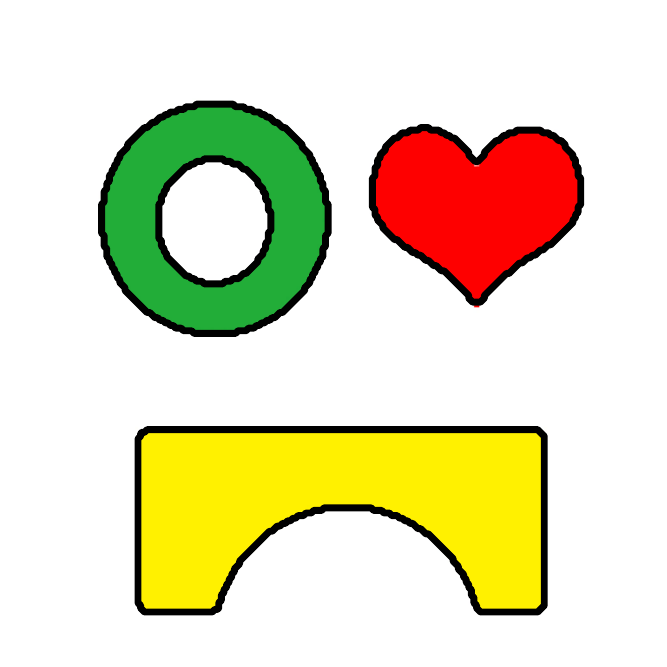}
		\includegraphics[width=2.2cm]{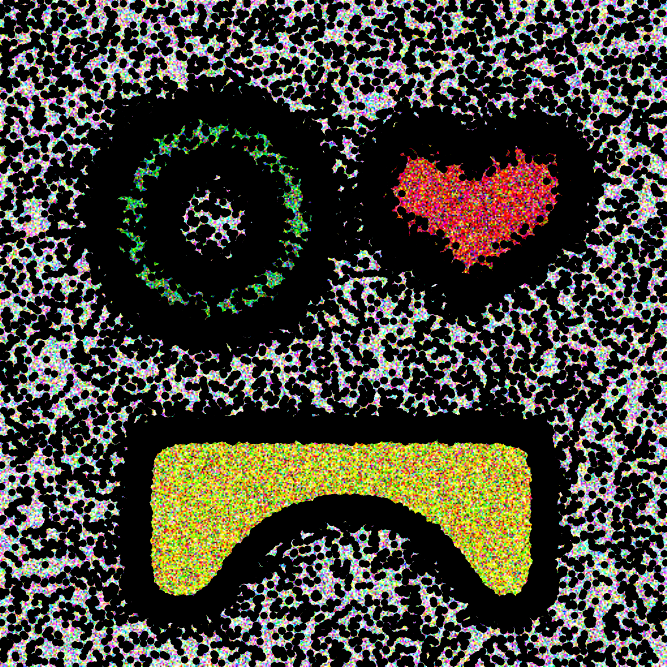}
		\includegraphics[width=2.2cm]{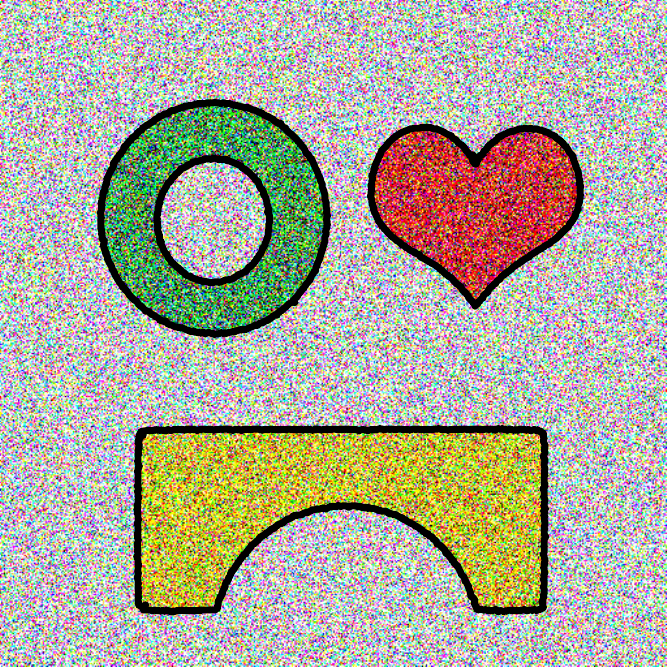}
	}
	\subfigure{
	\includegraphics[width=2.2cm]{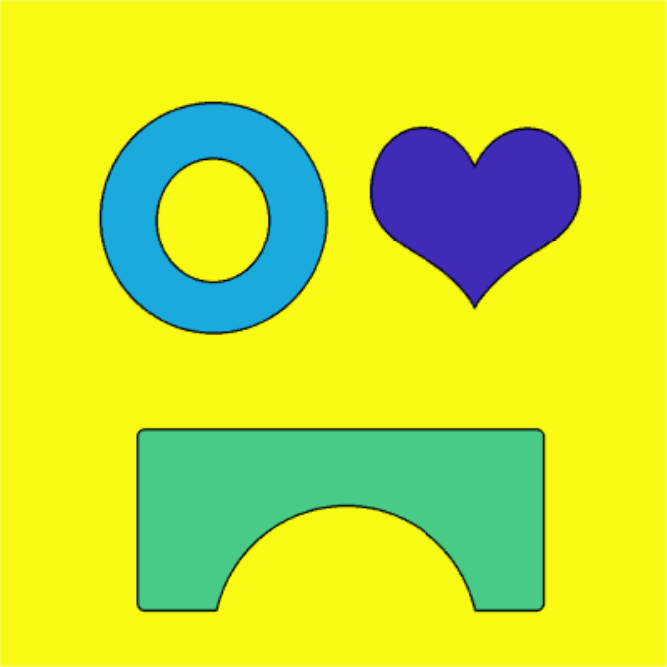}
	\includegraphics[width=2.2cm]{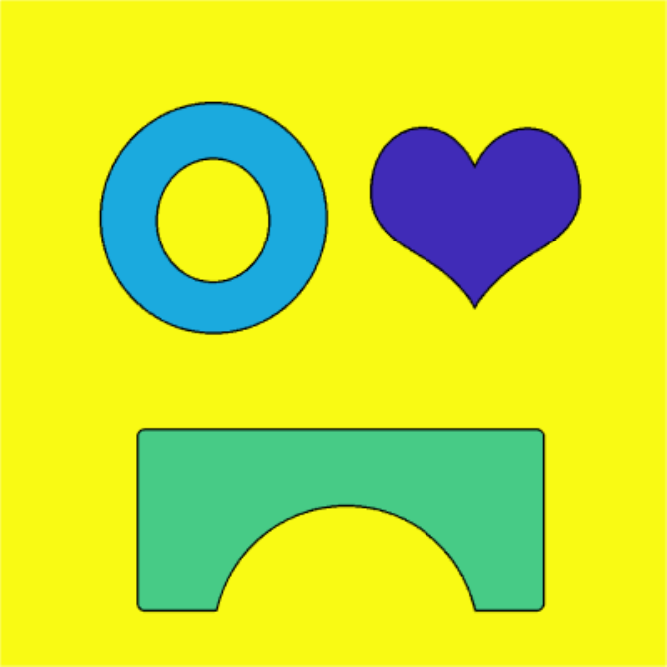}
	\includegraphics[width=2.2cm]{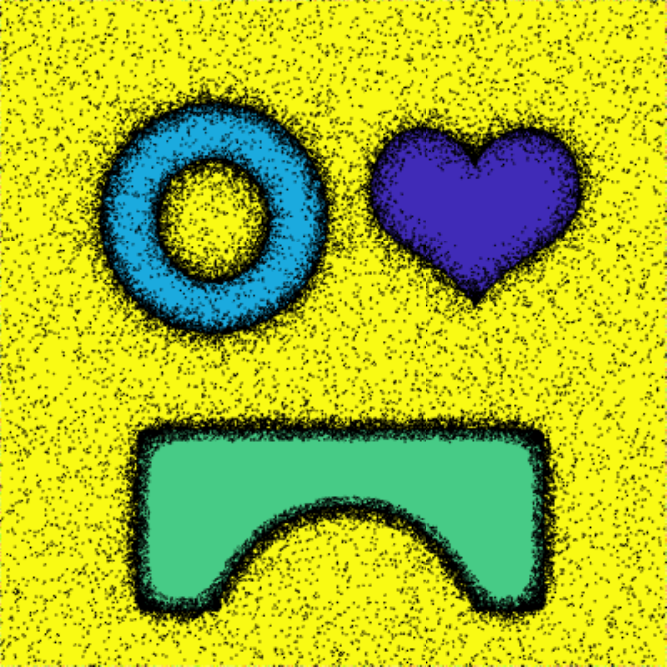}
	\includegraphics[width=2.2cm]{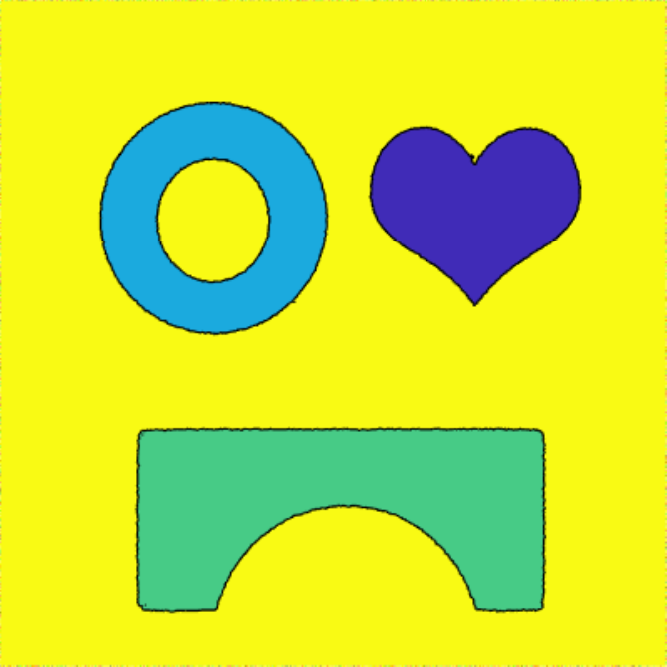}
	}
	 \caption{ Comparison between the ICTM and the ICTM-LVF for solving the CV model. The first row displays the clean and noise images, the second row and the third row are the results of final contours and final segments, respectively. The first and the third columns are the results of the ICTM, while the results of the ICTM-LVF appear in the second and the fourth columns. All the final contours are highlighted by black lines.} \label{fig_lvf_cv}
\end{figure}

Then, we solve the LIF model for the segmentation of images with noise at different levels with both the ICTM and the ICTM-LVF. The results are shown in \cref{fig_lvf_lif}. It can be seen that as the noise level increases, the results of the ICTM become worse, while the ICTM-LVF can still achieve relatively good segmentation.
\begin{figure}[htbp]
	\centering
	\subfigure{
		\includegraphics[width=2.2cm]{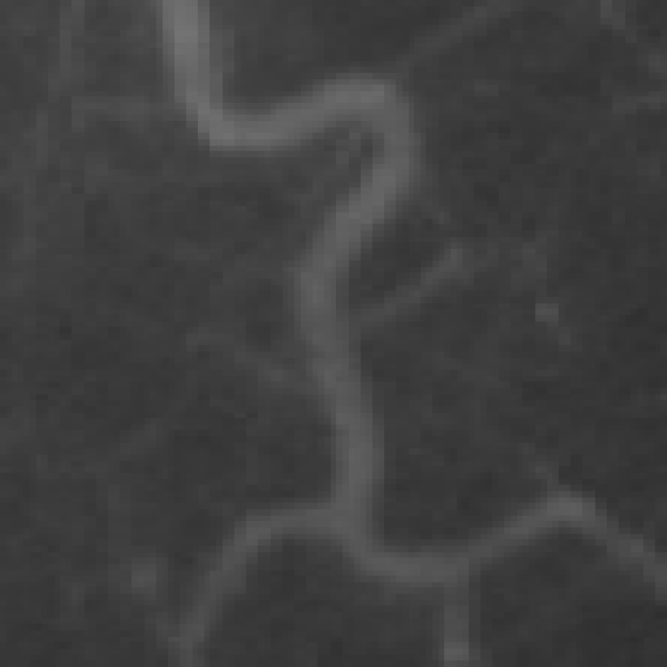}
		\includegraphics[width=2.2cm]{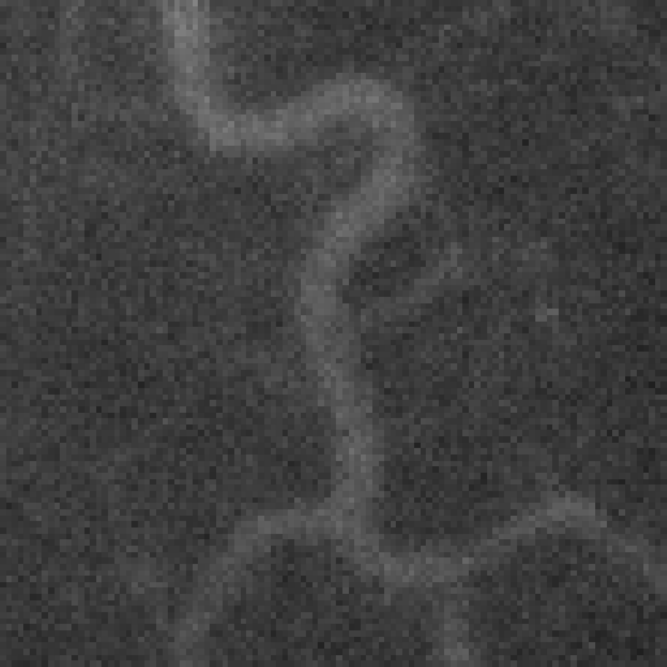}
		\includegraphics[width=2.2cm]{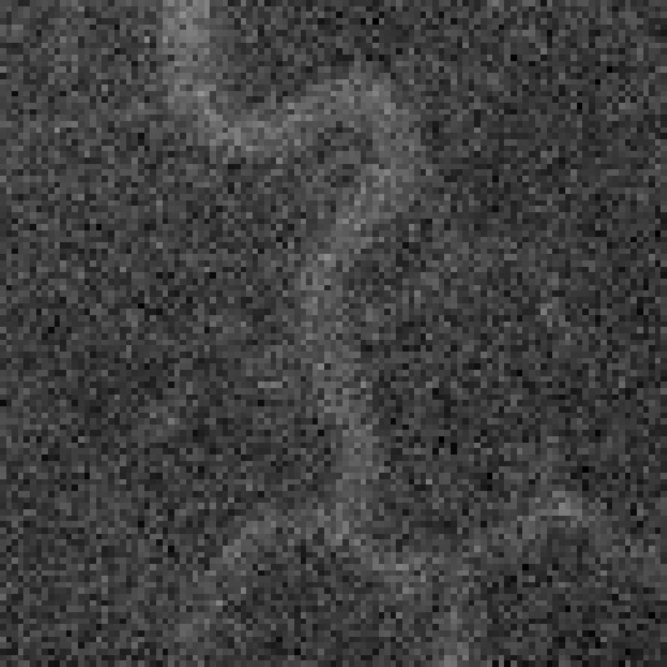}
		\includegraphics[width=2.2cm]{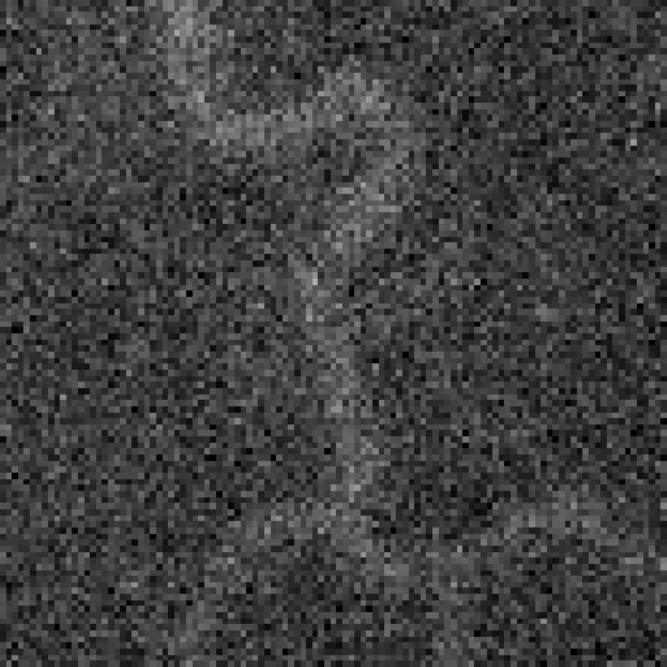}
	}
	\subfigure{
		\includegraphics[width=2.2cm]{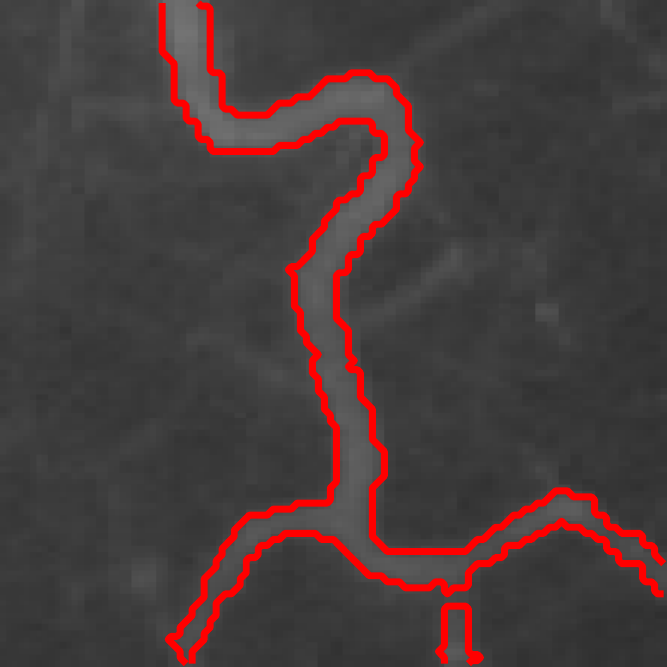}
		\includegraphics[width=2.2cm]{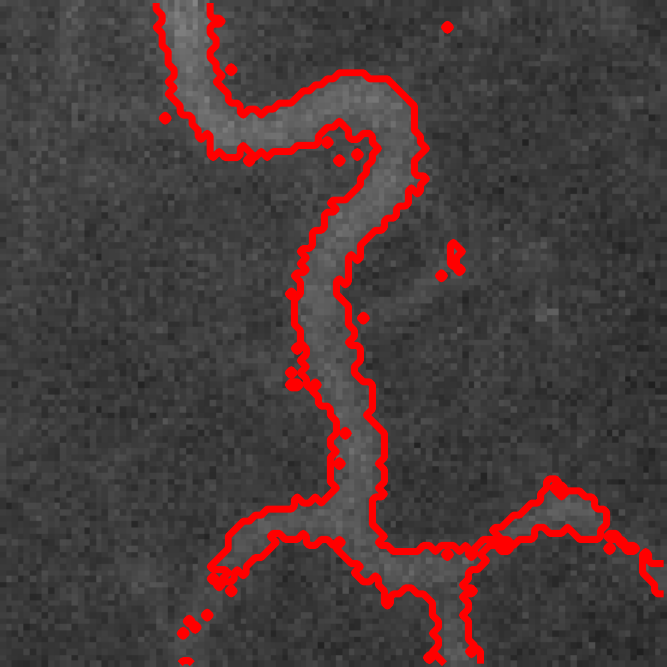}
		\includegraphics[width=2.2cm]{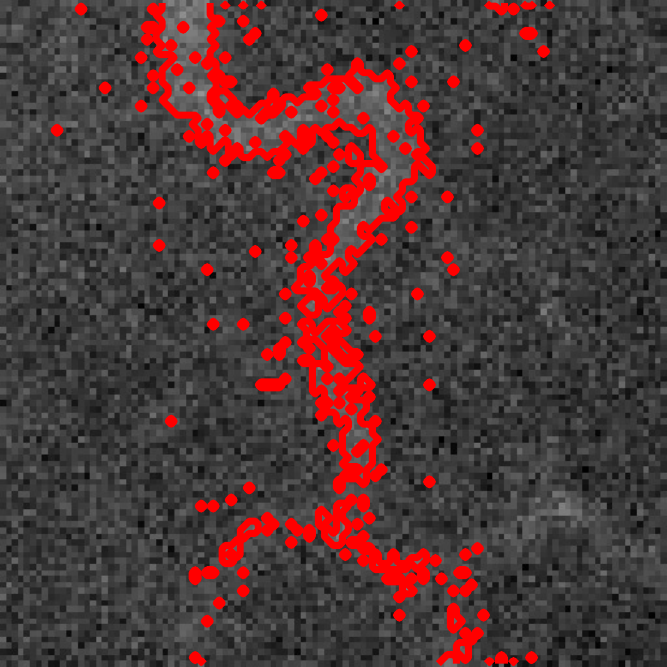}
		\includegraphics[width=2.2cm]{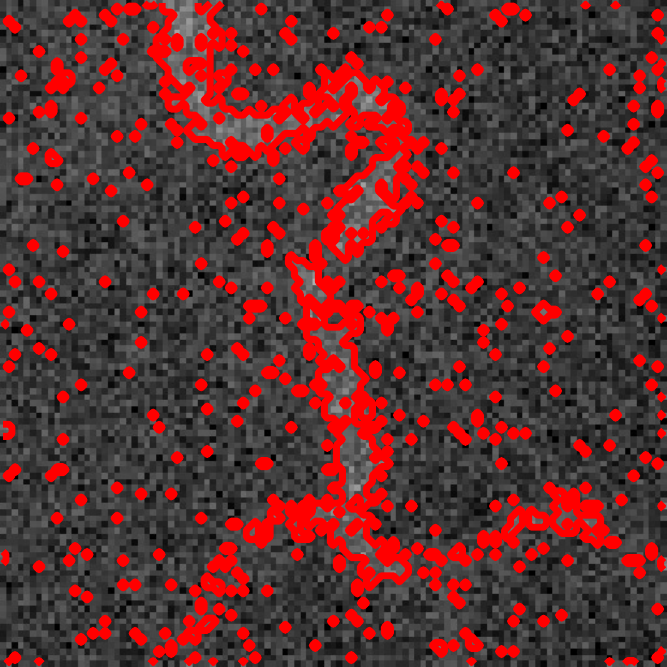}
	}
	\subfigure{
	\includegraphics[width=2.2cm]{ictm_lbf0-eps-converted-to.pdf}
	\includegraphics[width=2.2cm]{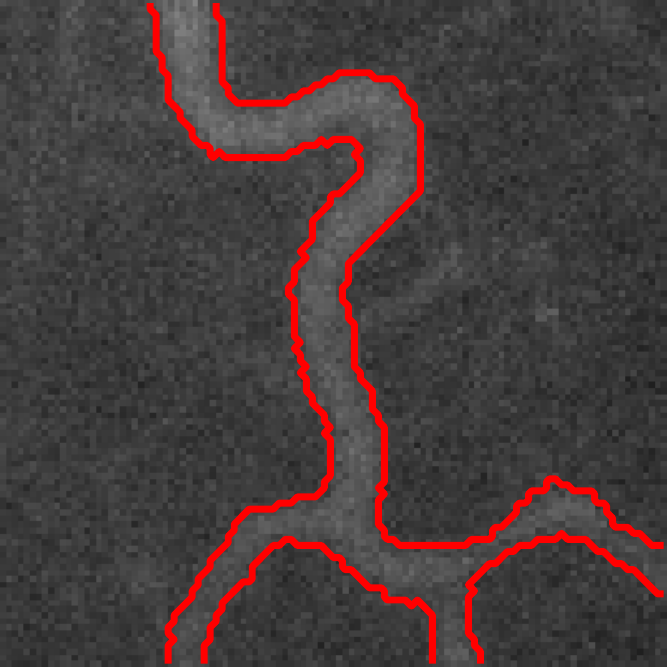}
	\includegraphics[width=2.2cm]{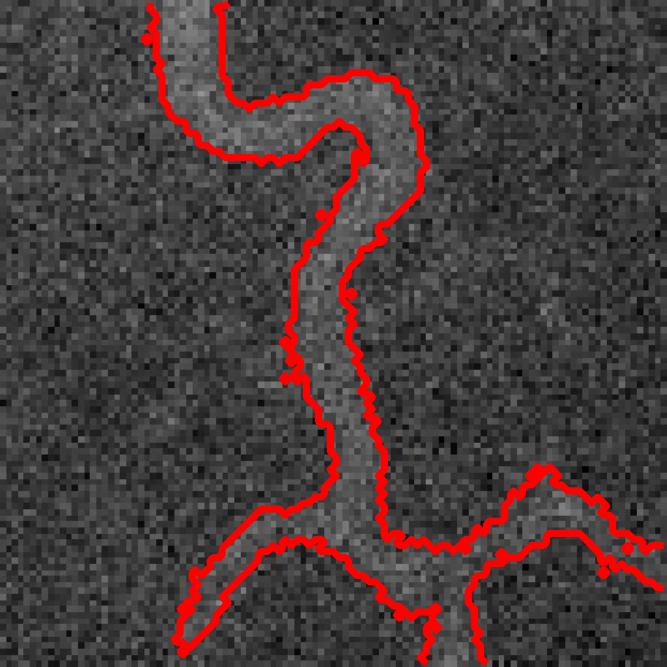}
	\includegraphics[width=2.2cm]{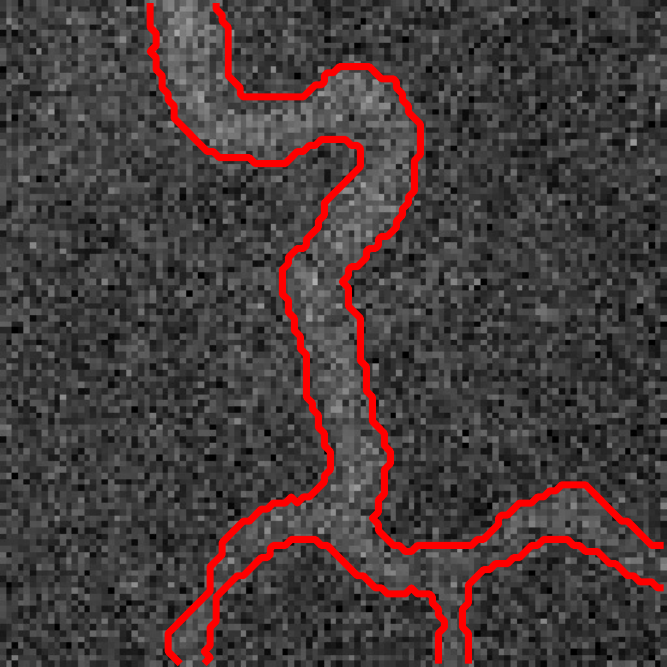}
	}
	 \caption{ Comparison between the ICTM and the ICTM-LVF for solving the LIF model on the segmentation of images with the noise of different levels (The variance from the first column to the last column: 0, 50, 300, 500). From top to bottom are original images with the noise of different levels, results of the ICTM, and results of the ICTM-LVF, respectively. All the final contours are highlighted by red lines.} \label{fig_lvf_lif}
\end{figure}

\subsection{ICTM-LVF for real images}
To further test the performance of the ICTM-LVF solver, we use it to solve the CV model for the segmentation of various real images, where the Multi-IGLIM is adopted for the initialization. The results are displayed in \cref{fig_lvf_realim}. It can be seen that our ICTM-LVF solver also performs very well on real images.

For these real images, we employ the dimension lifting technique to extract more information, which is first proposed in the SLaT model \cite{cai2017three} and drawn on by many other models \cite{cai2017three,wu2021color}. The dimension lifting in this paper is to add the color information of the CIELAB color space \cite{luong1993color,paschos2001perceptually} to the RGB images to reflect the color information in a more comprehensive way. Consequently, the 3D vector used to present RGB information will be extended to a 6D vector.

\begin{figure}[htbp]
	\centering
	\subfigure{
		\includegraphics[width=2cm]{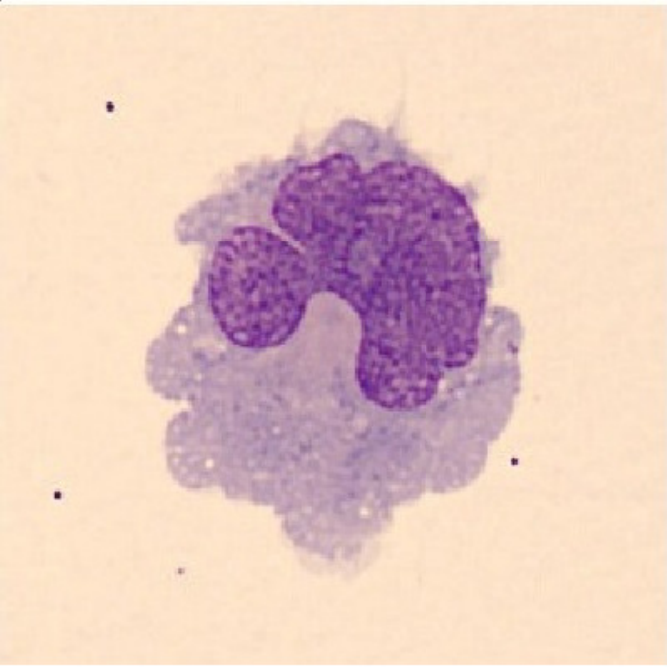}
		\includegraphics[width=2cm]{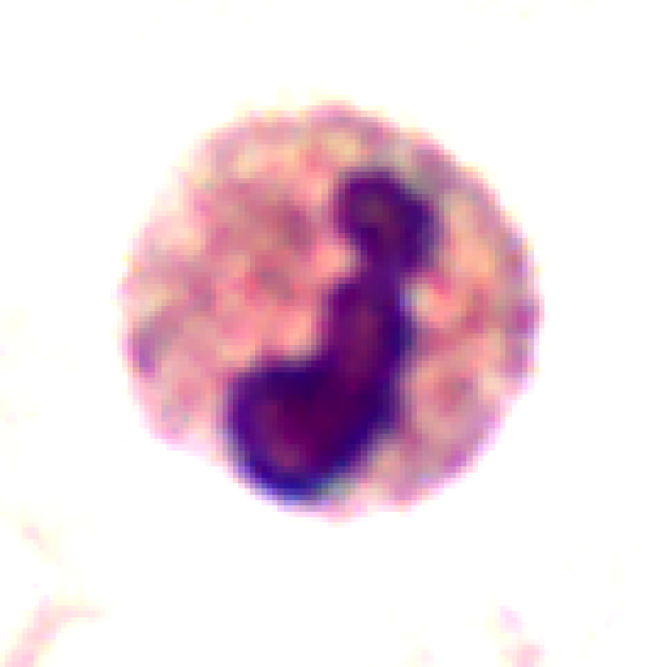}
		\includegraphics[width=2cm]{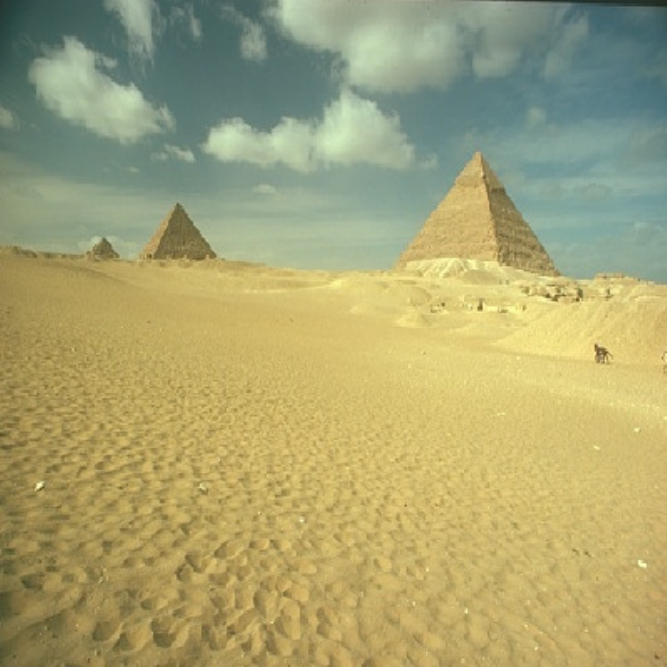}
		\includegraphics[width=2cm]{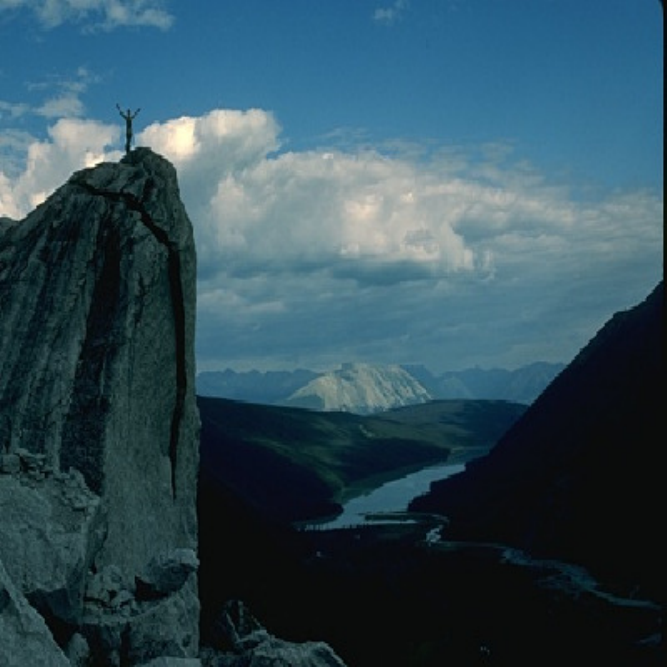}
		\includegraphics[width=2cm]{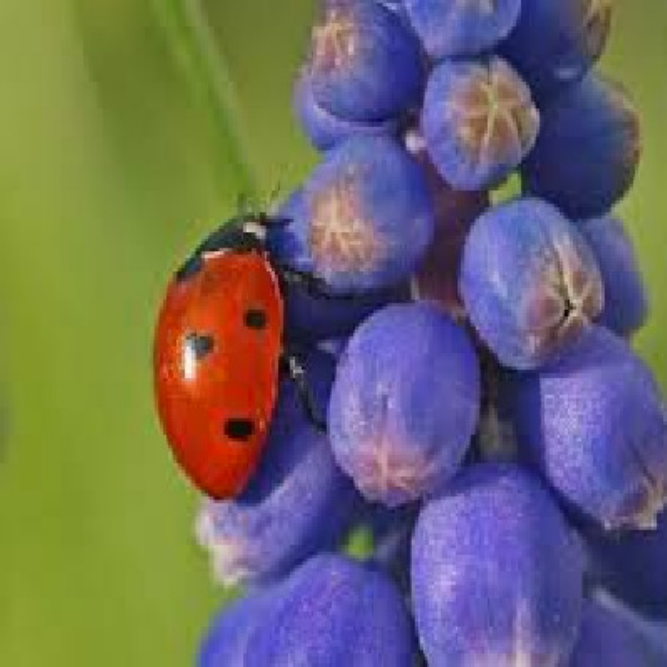}
		\includegraphics[width=2cm]{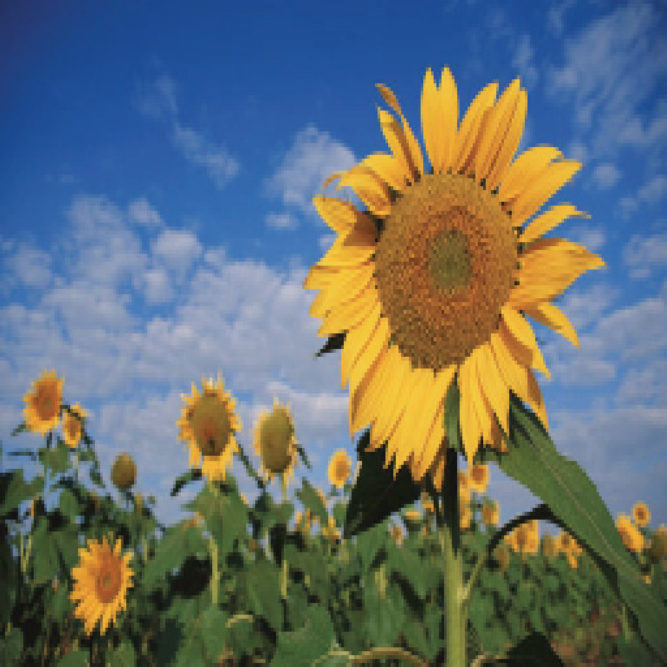}
		\includegraphics[width=2cm]{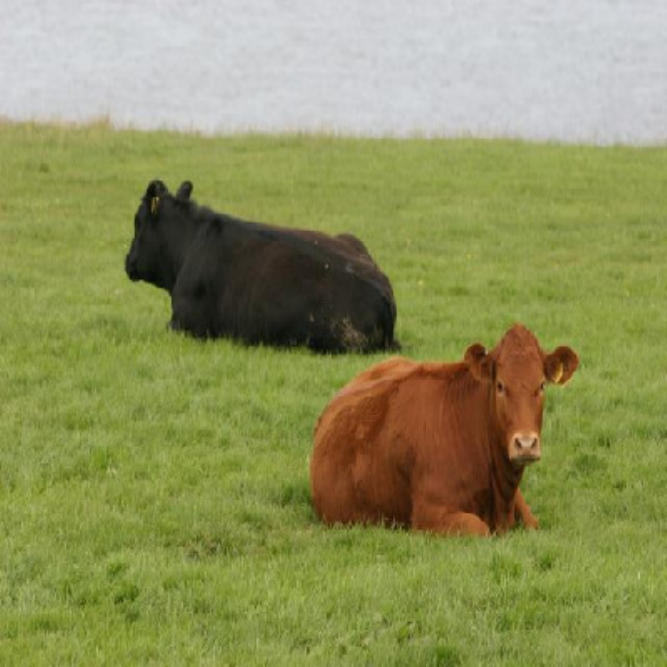}
	}
	\subfigure{
		\includegraphics[width=2cm]{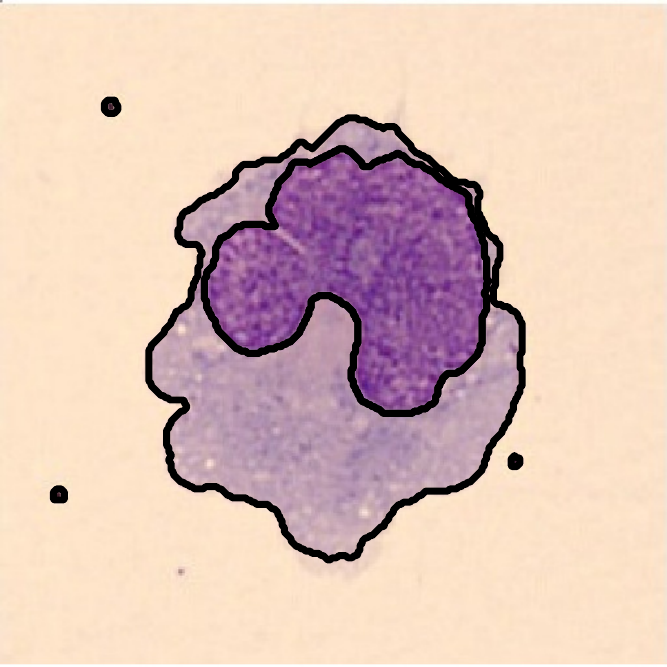}
		\includegraphics[width=2cm]{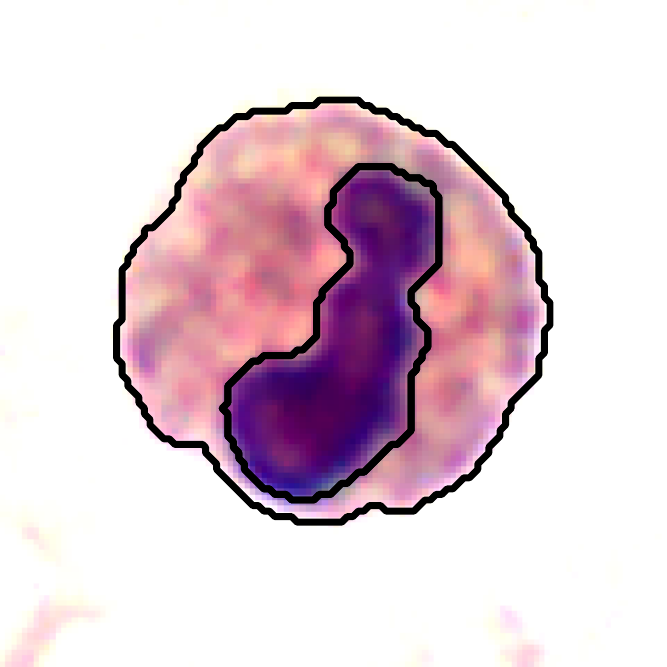}
		\includegraphics[width=2cm]{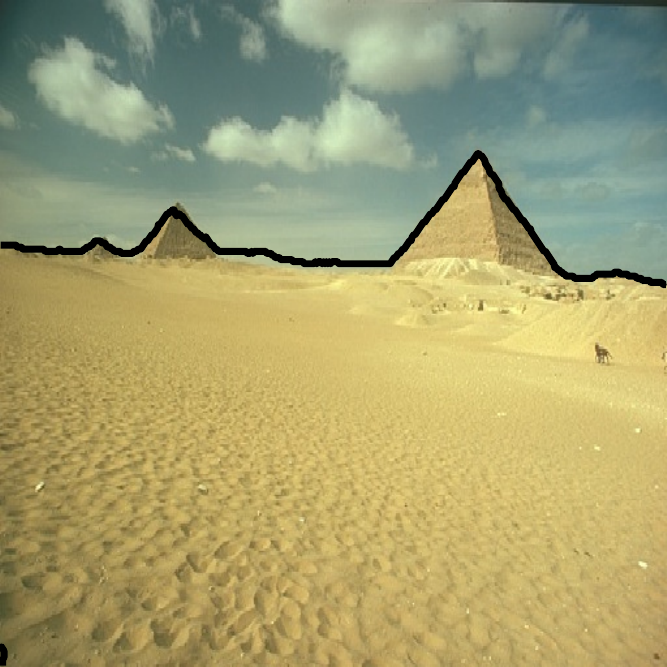}
		\includegraphics[width=2cm]{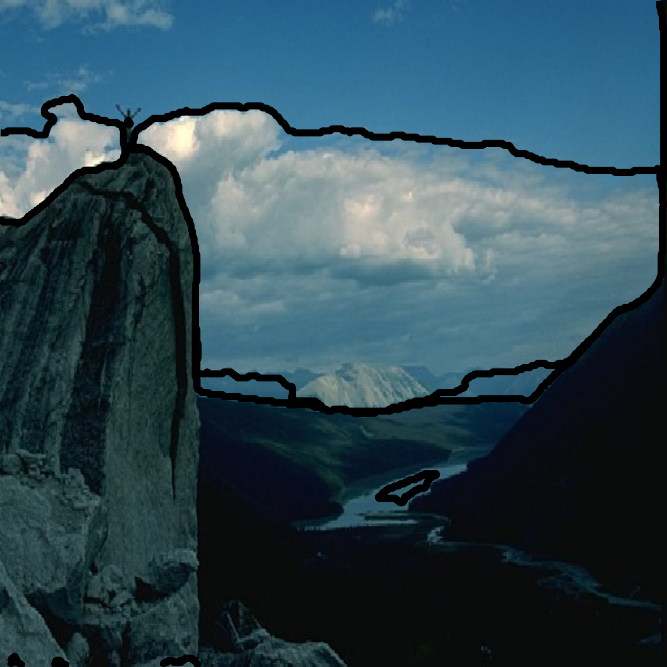}
		\includegraphics[width=2cm]{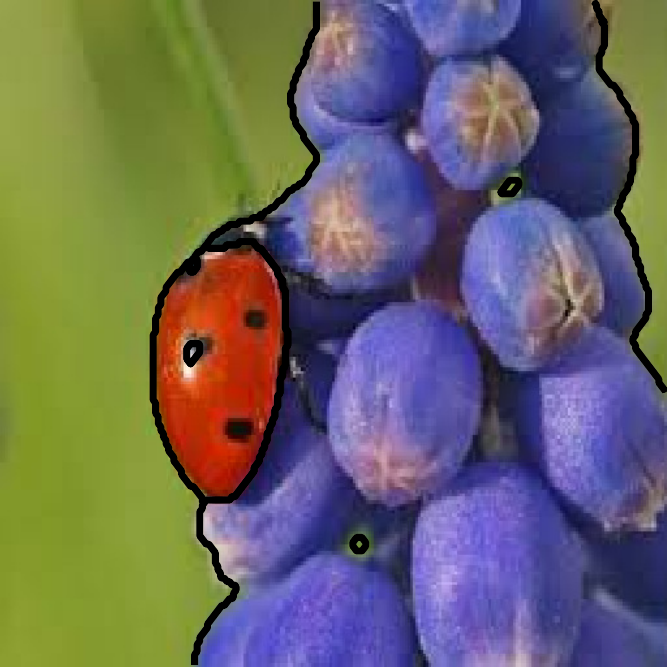}
		\includegraphics[width=2cm]{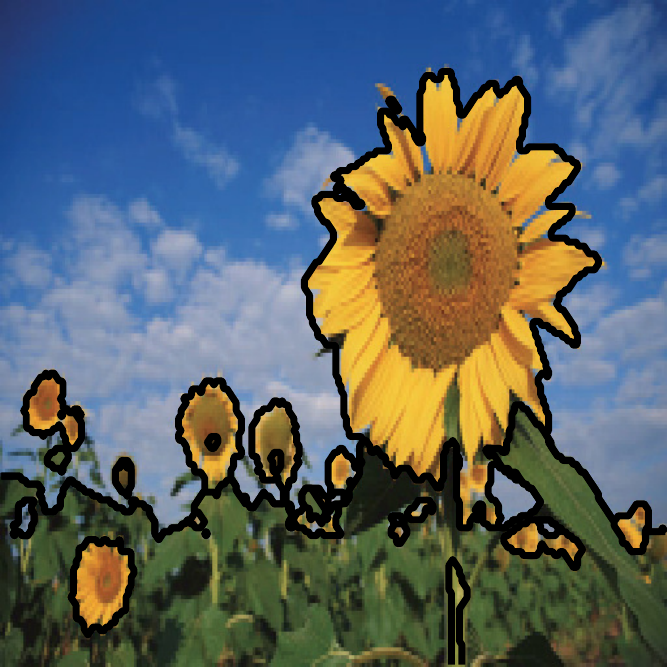}
		\includegraphics[width=2cm]{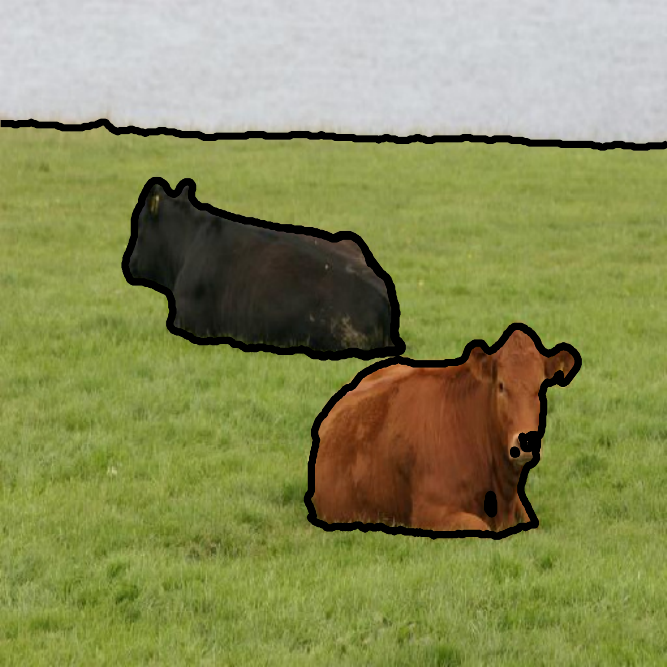}
	}
	\subfigure{
		\includegraphics[width=2cm]{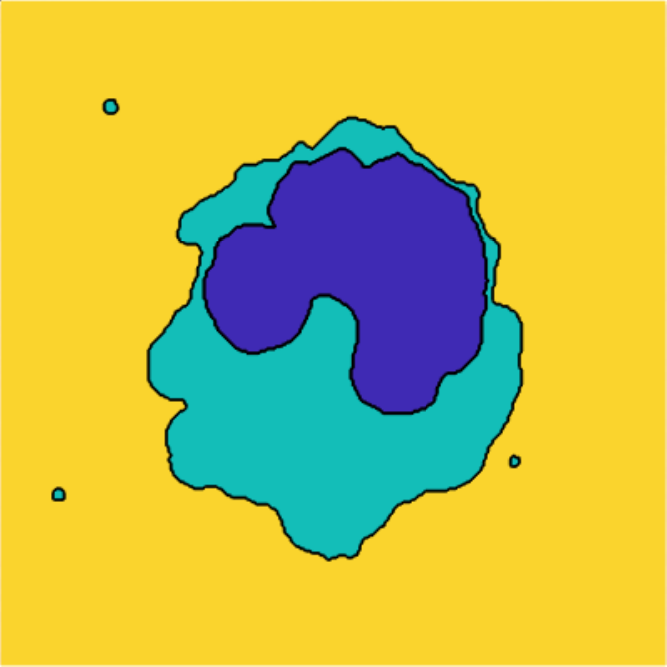}
		\includegraphics[width=2cm]{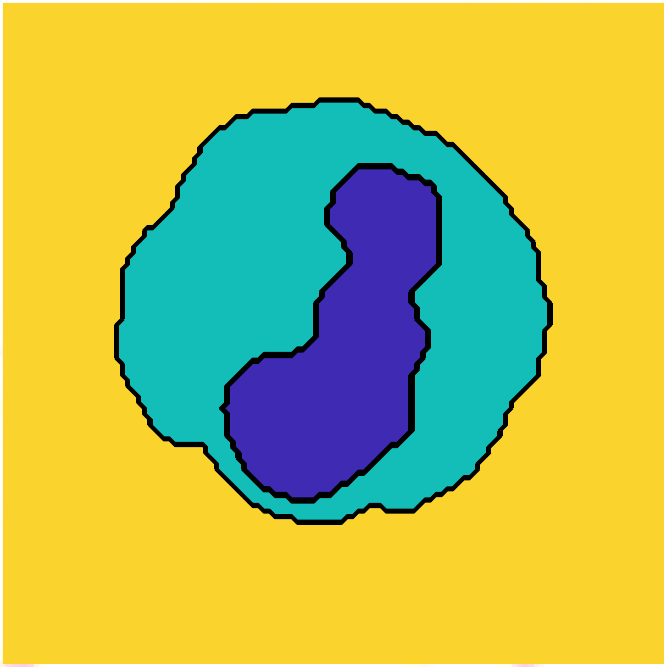}
		\includegraphics[width=2cm]{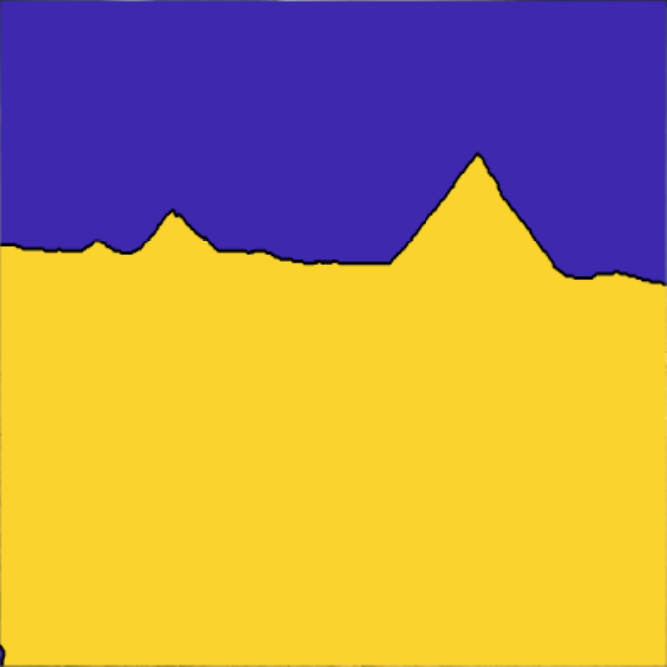}
		\includegraphics[width=2cm]{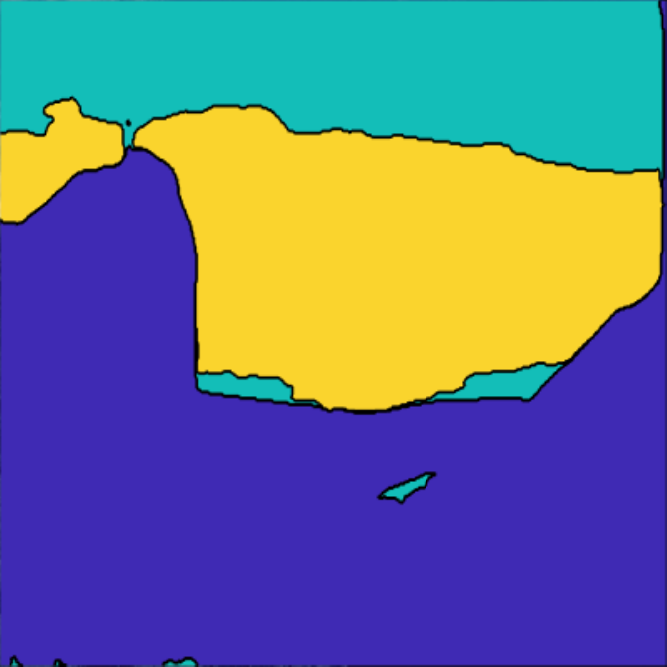}
		\includegraphics[width=2cm]{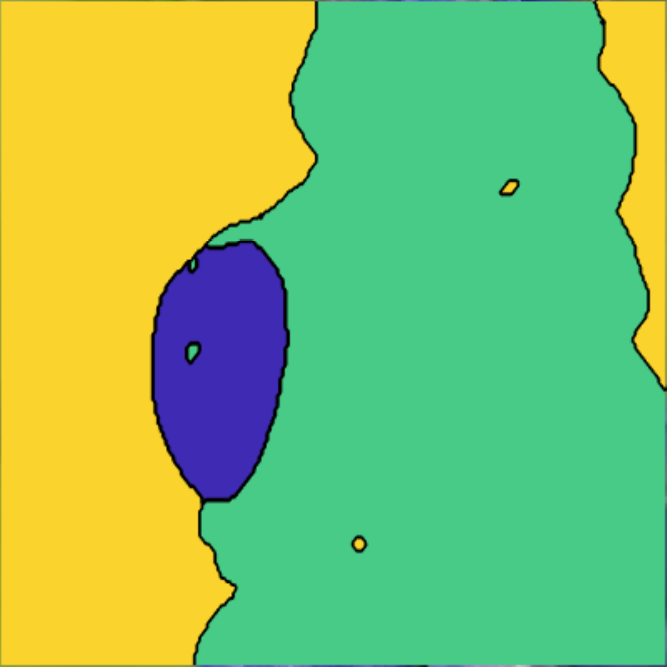}
		\includegraphics[width=2cm]{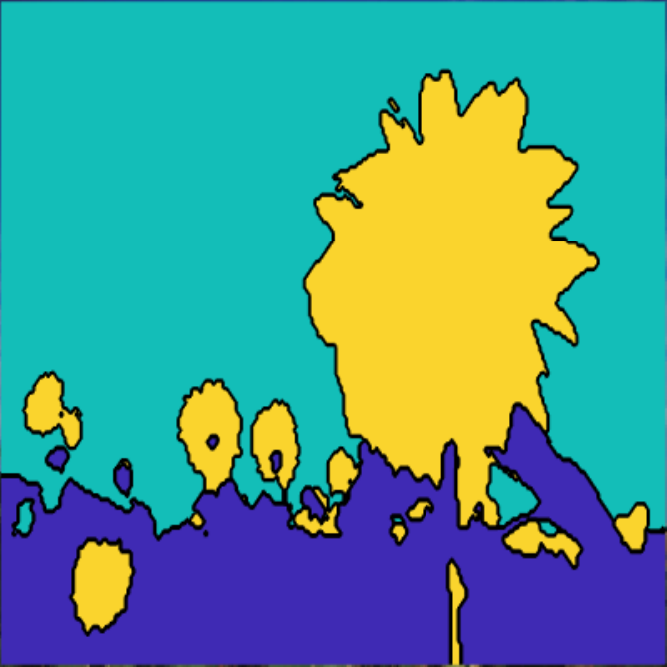}
		\includegraphics[width=2cm]{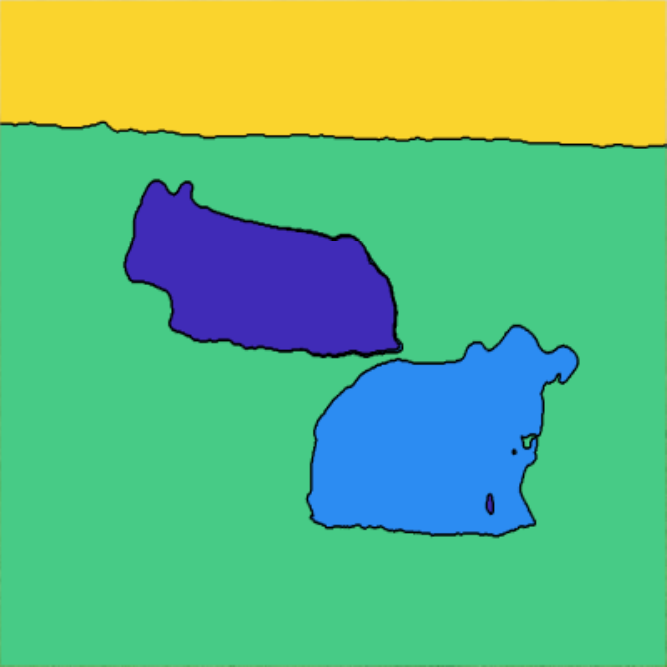}
	}
	 \caption{Experiments with the ICTM-LVF solver and the CV model on real images. From top to bottom: Original image, final contours, and final segments. All the final contours are highlighted by black lines.} \label{fig_lvf_realim}
\end{figure}

\subsection{Comparison between the ICTM-LVF and some state-of-art algorithms}
In this part, we compare our ICTM-LVF solver with some state-of-art image segmentation algorithms on noise corrupted images and medical images in terms of segmentation performance and efficiency. The segmentation results are shown in \cref{comparison1} and \cref{comparison2}. In all the experiments, the ICTM-LVF can give good results for given images while the other algorithms fail to segment all the images correctly, especially for the medical images in \cref{comparison2}. Particularly, in the fourth row of \cref{comparison1}, the ICTM-LVF solver with the CV model (ICTM-LVF-CV) can separate all the clouds from other objects while others can only separate the clouds with high brightness. Moreover, it can also be seen that the ICTM-LVF-CV achieves a more delicate segmentation result for the last image in \cref{comparison1}. The results displayed in \cref{comparison2} indicate that compared to other algorithms, the ICTM-LVF with the LIF model (ICTM-LVF-LIF) is more suitable for images with intensity inhomogeneity. The CPU times for all algorithms are recorded in \cref{table:comparison}, which exhibits the high efficiency of the ICTM-LVF solver.
\begin{figure}[htbp]
	\centering
	\subfigure{
	\begin{minipage}[b]{2cm}
		\includegraphics[width=2cm]{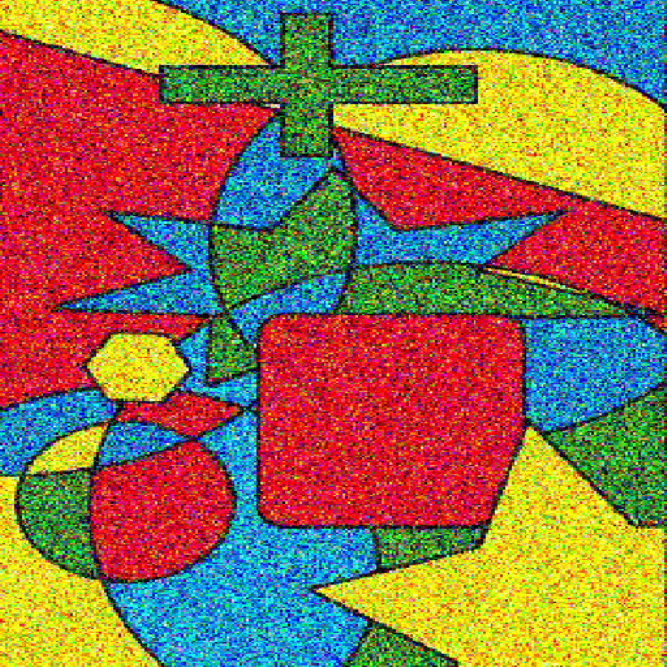}
		\includegraphics[width=2cm]{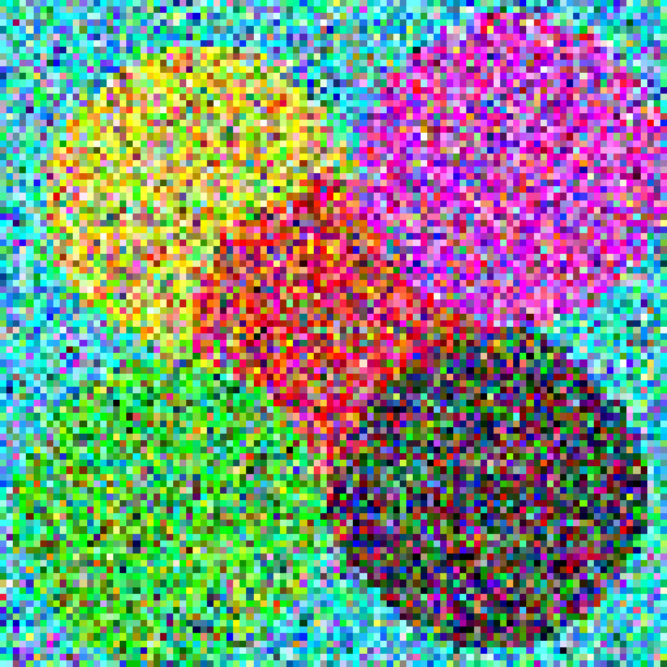}
		\includegraphics[width=2cm]{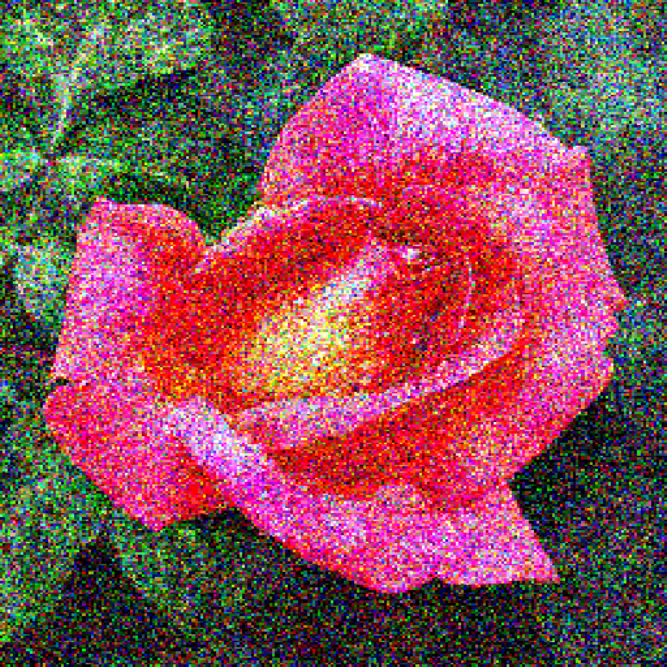}
		\includegraphics[width=2cm]{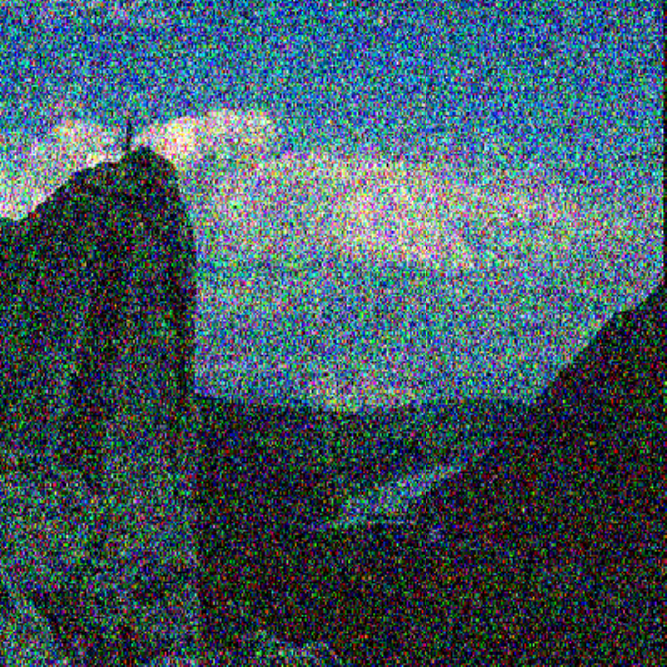}
		\includegraphics[width=2cm]{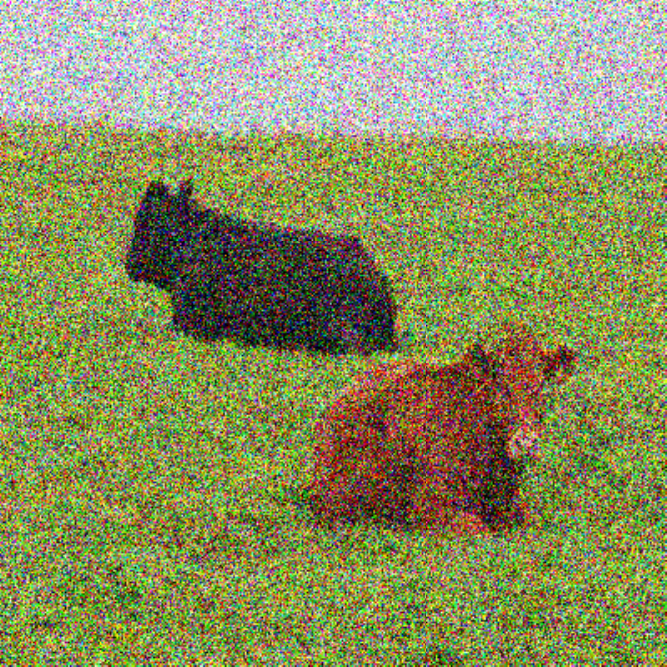}
		\centerline{\tiny (a)Original images}
	\end{minipage}
	}
	\subfigure{
	\begin{minipage}[b]{2cm}
		\includegraphics[width=2cm]{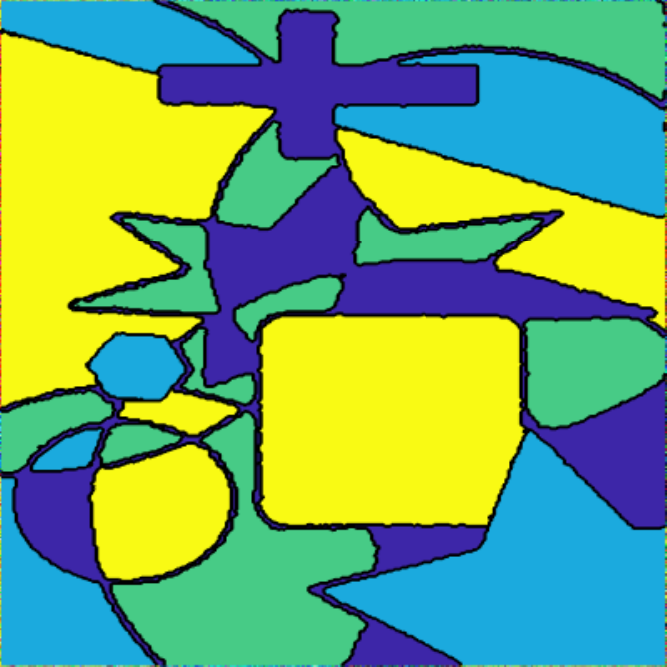}
		\includegraphics[width=2cm]{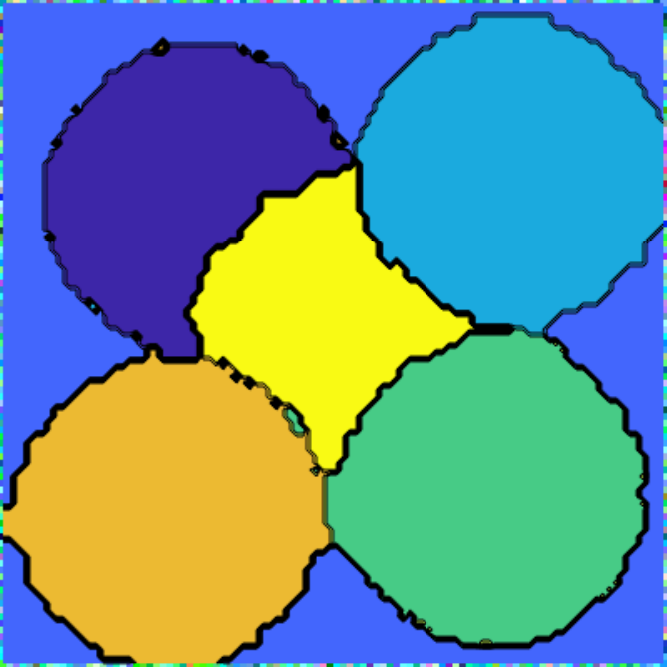}
		\includegraphics[width=2cm]{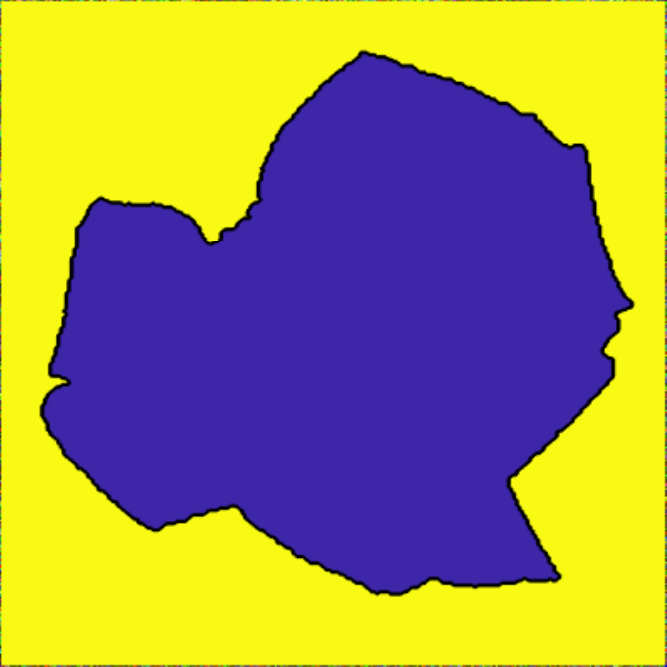}
		\includegraphics[width=2cm]{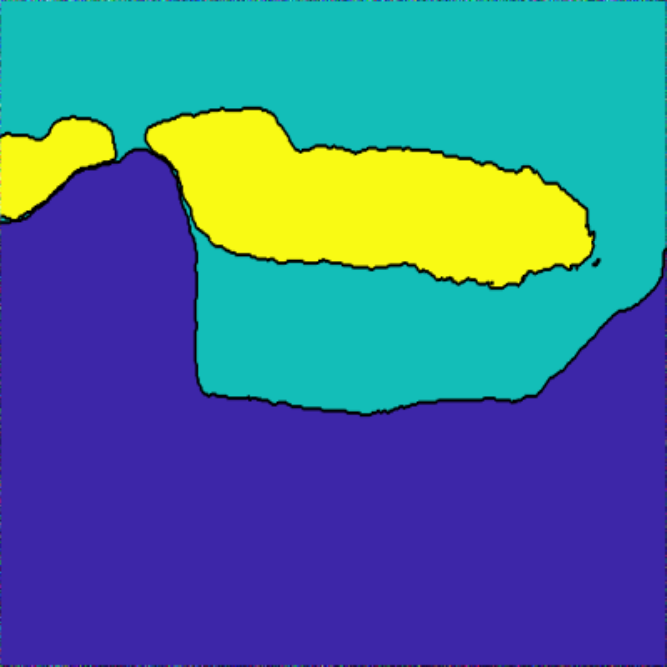}
		\includegraphics[width=2cm]{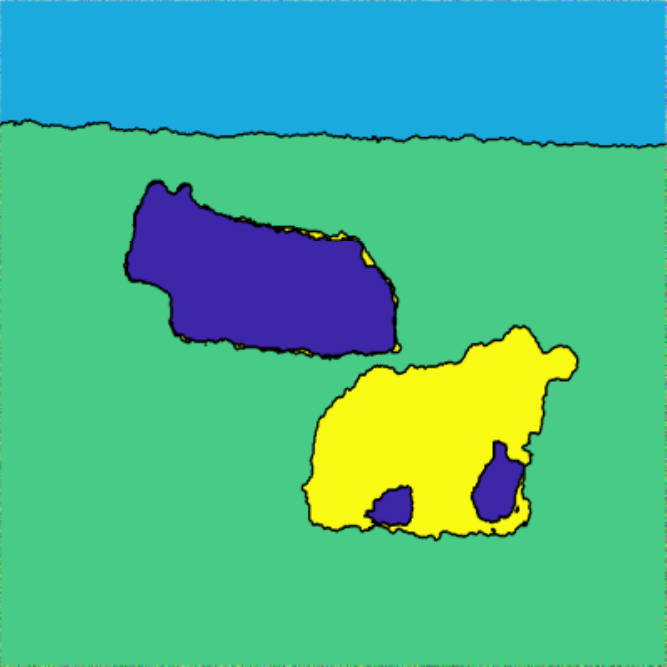}
	\centerline{\tiny(b) SLaT}
	\end{minipage}
	}
	\subfigure{
	\begin{minipage}[b]{2cm}
		\includegraphics[width=2cm]{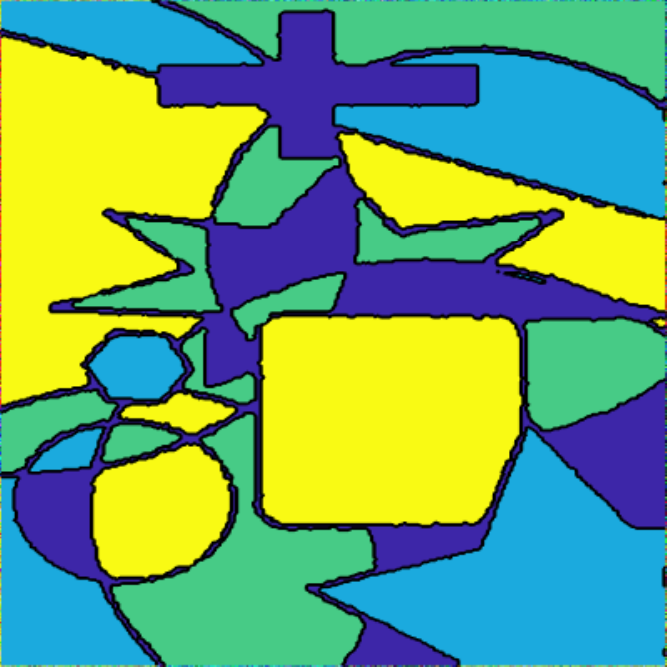}
		\includegraphics[width=2cm]{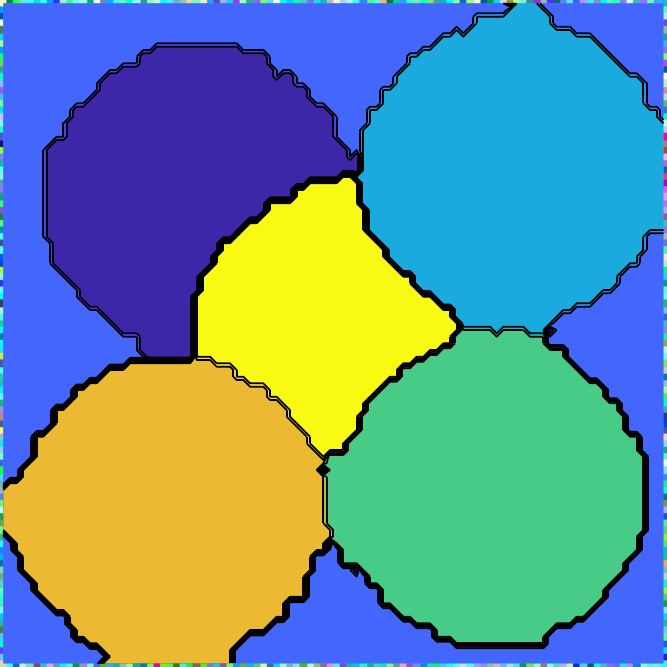}
		\includegraphics[width=2cm]{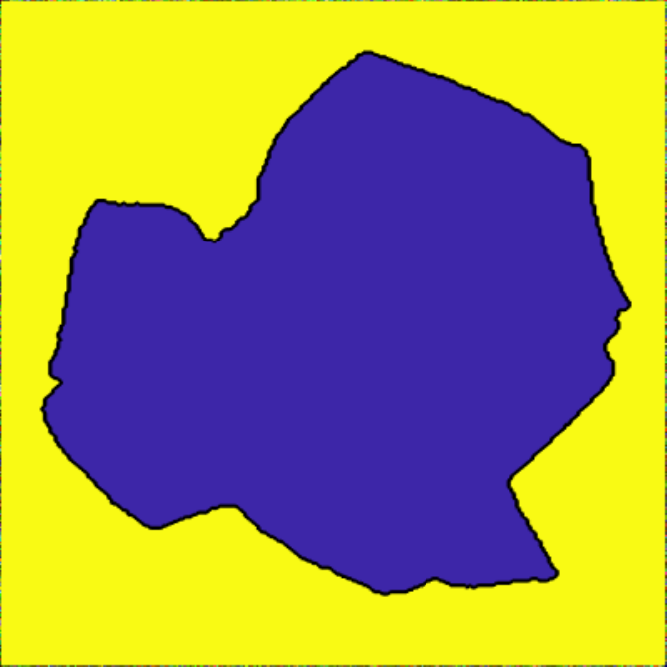}
		\includegraphics[width=2cm]{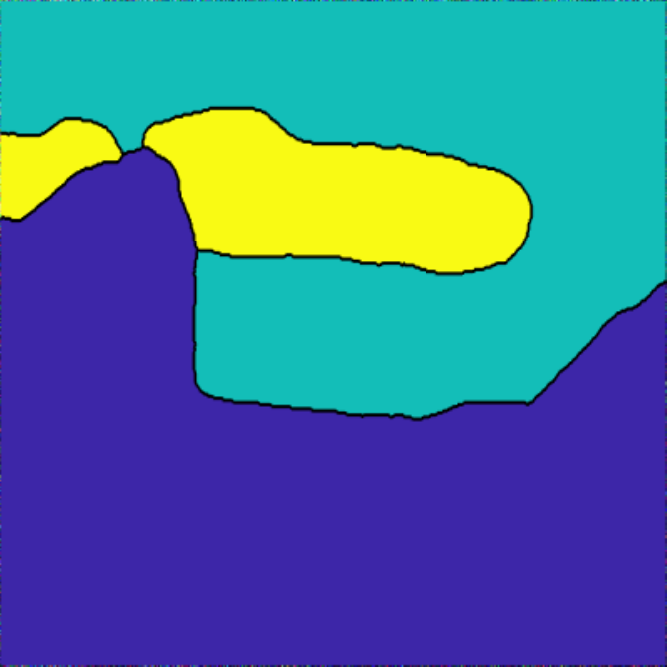}
		\includegraphics[width=2cm]{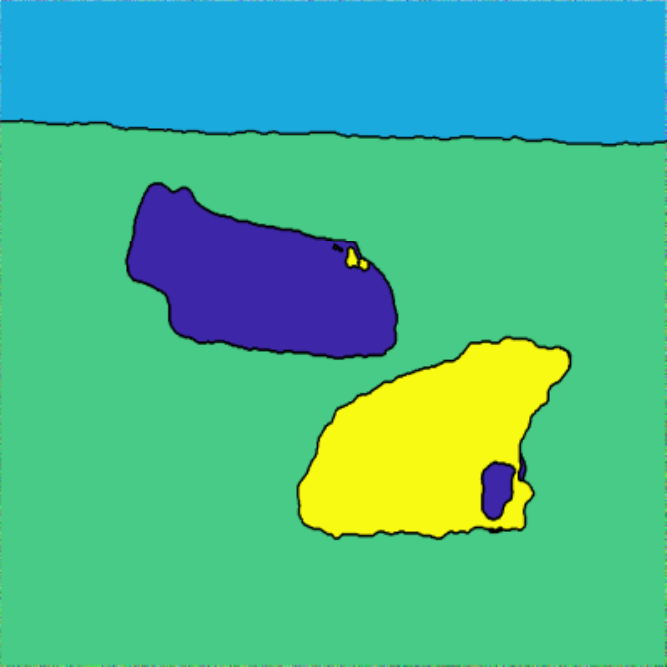}
	\centerline{\tiny(c) Model in \cite{wu2021color}}
	\end{minipage}
	}
	\subfigure{
	\begin{minipage}[b]{2cm}
		\includegraphics[width=2cm]{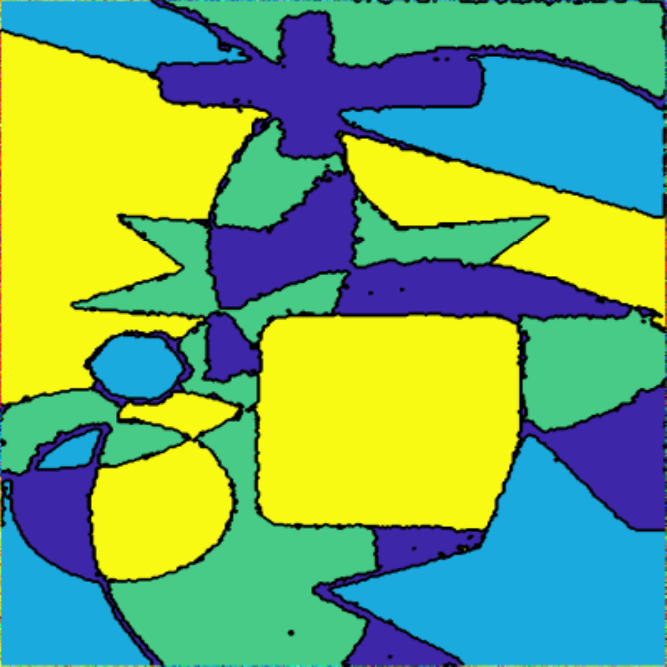}
		\includegraphics[width=2cm]{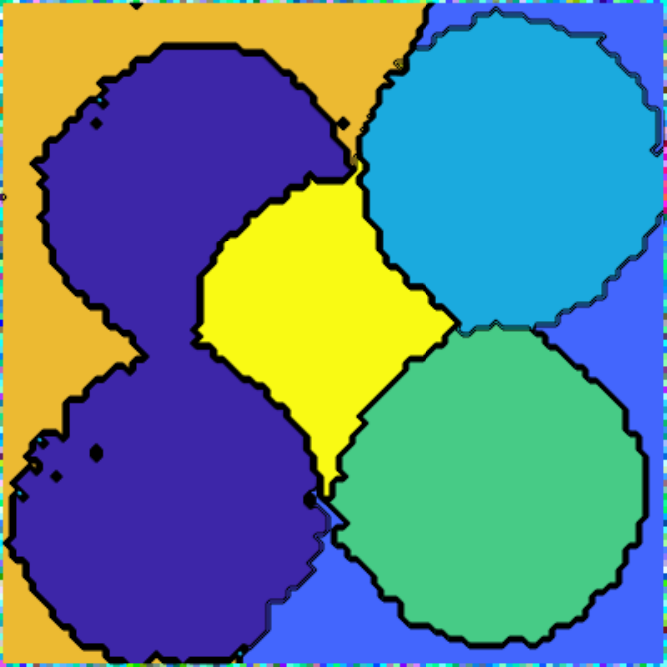}
		\includegraphics[width=2cm]{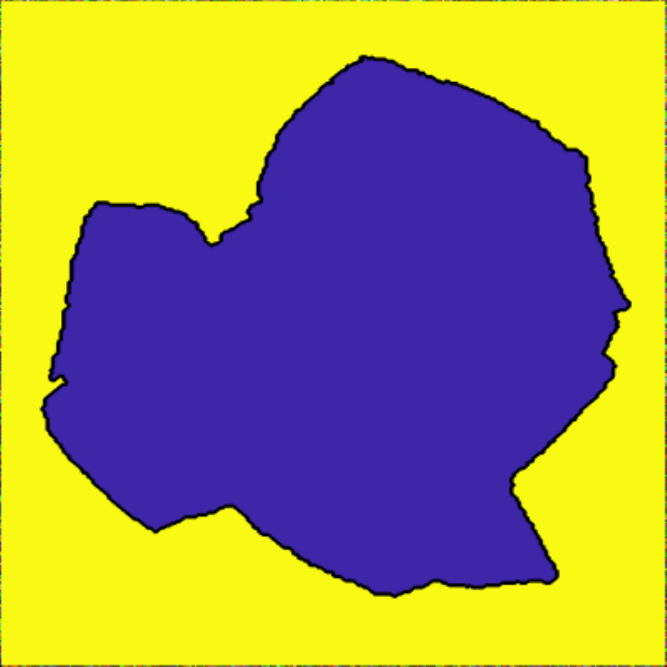}
		\includegraphics[width=2cm]{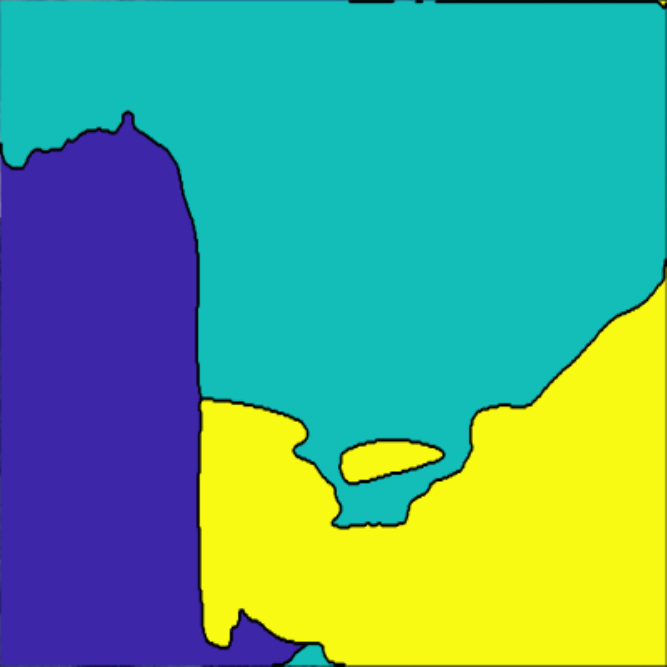}
		\includegraphics[width=2cm]{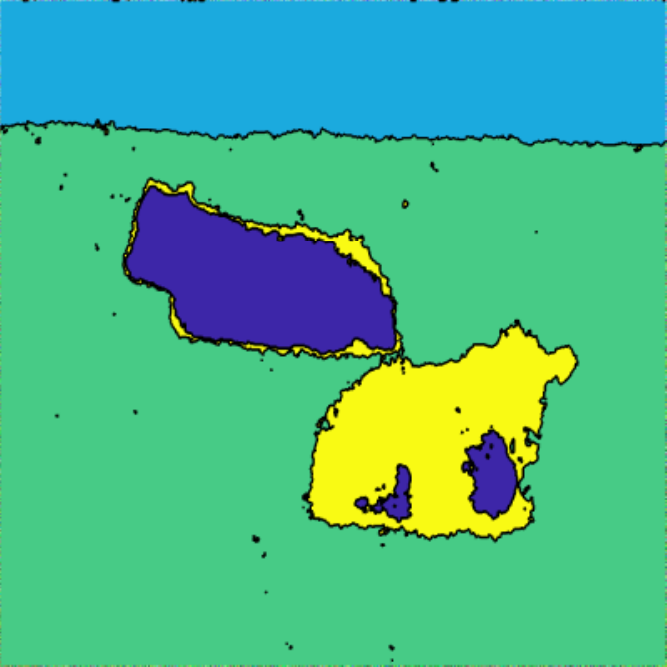}
	\centerline{\tiny(d) Model in \cite{Cai_Zeng}}
	\end{minipage}
	}
	\subfigure{
	\begin{minipage}[b]{2cm}
		\includegraphics[width=2cm]{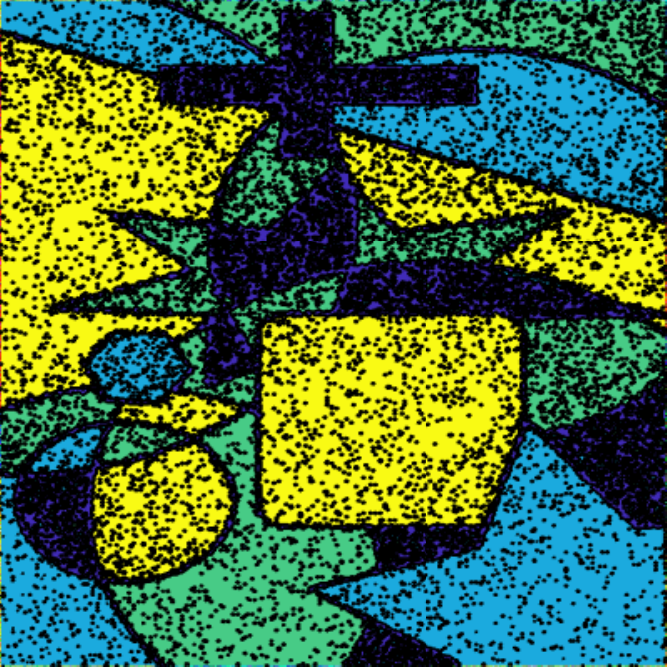}
		\includegraphics[width=2cm]{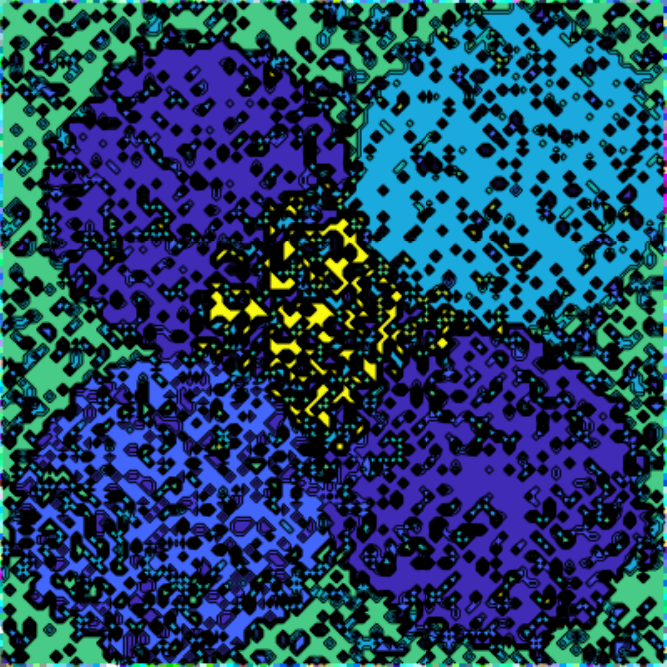}
		\includegraphics[width=2cm]{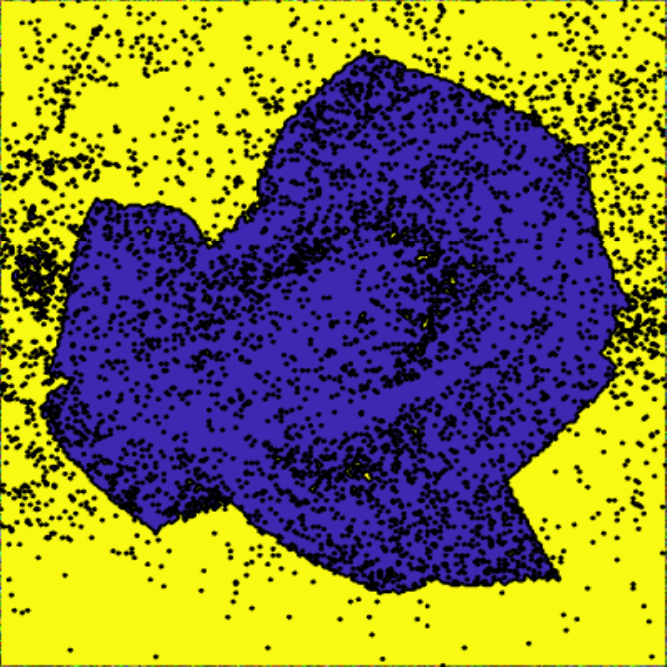}
		\includegraphics[width=2cm]{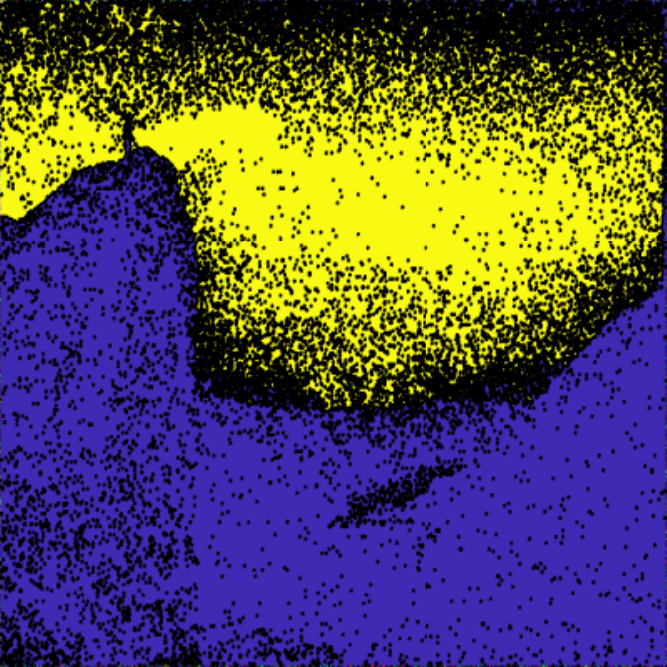}
		\includegraphics[width=2cm]{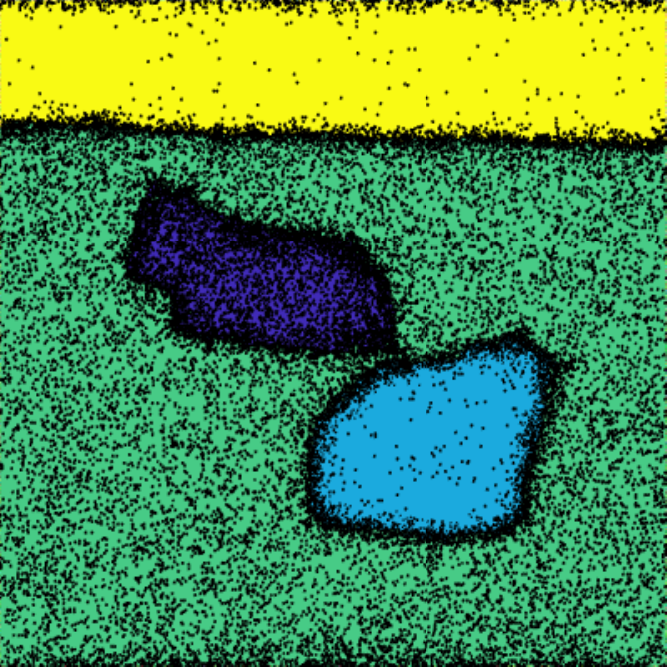}
	\centerline{\tiny(e) ICTM-CV}
	\end{minipage}
	}
	\subfigure{
	\begin{minipage}[b]{2cm}
		\includegraphics[width=2cm]{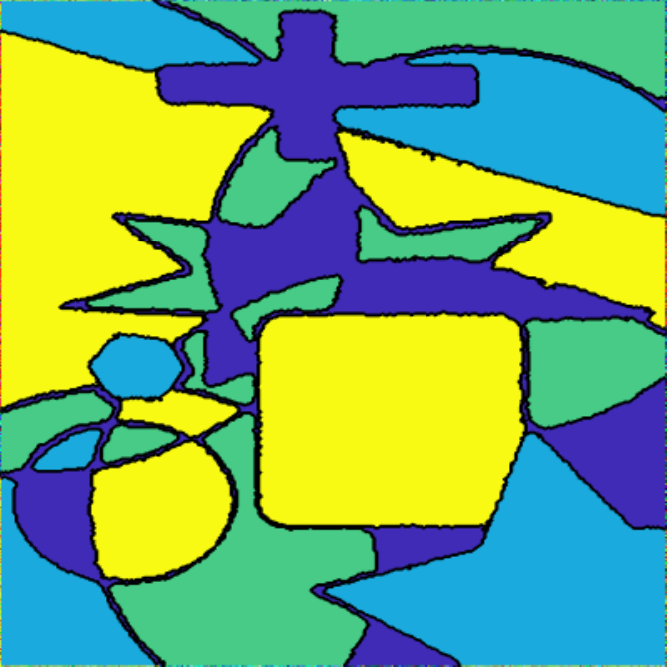}
		\includegraphics[width=2cm]{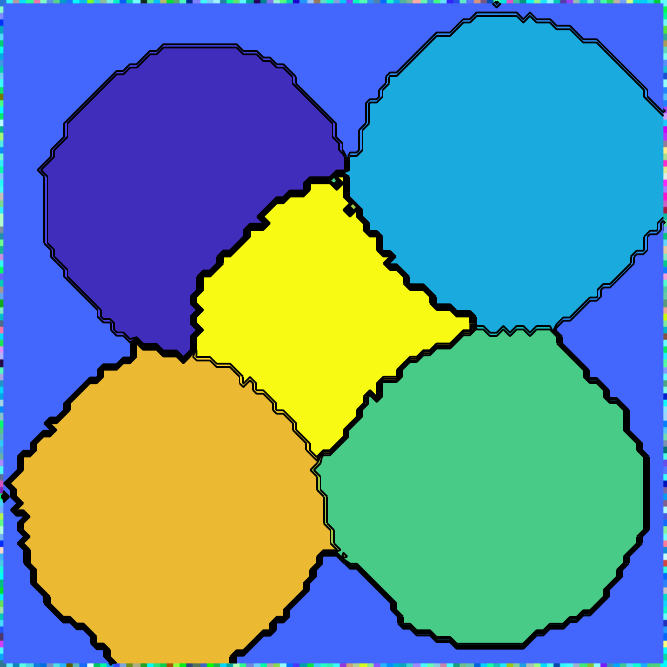}
		\includegraphics[width=2cm]{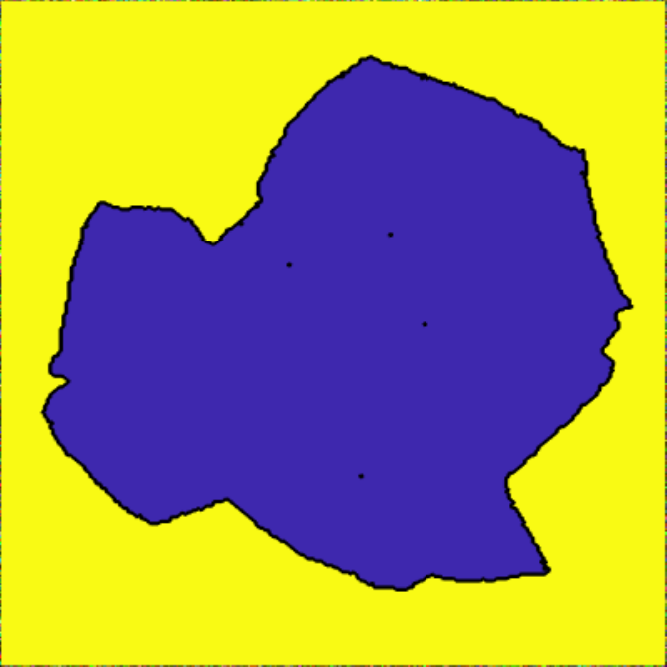}
		\includegraphics[width=2cm]{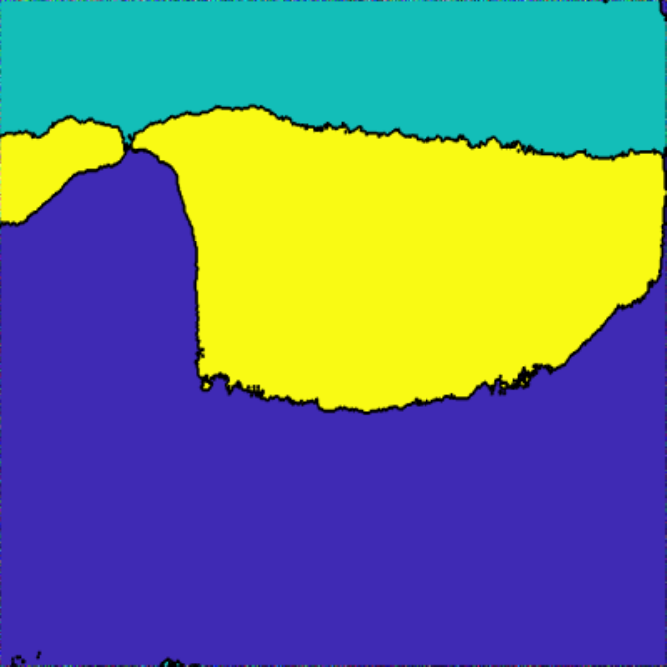}
		\includegraphics[width=2cm]{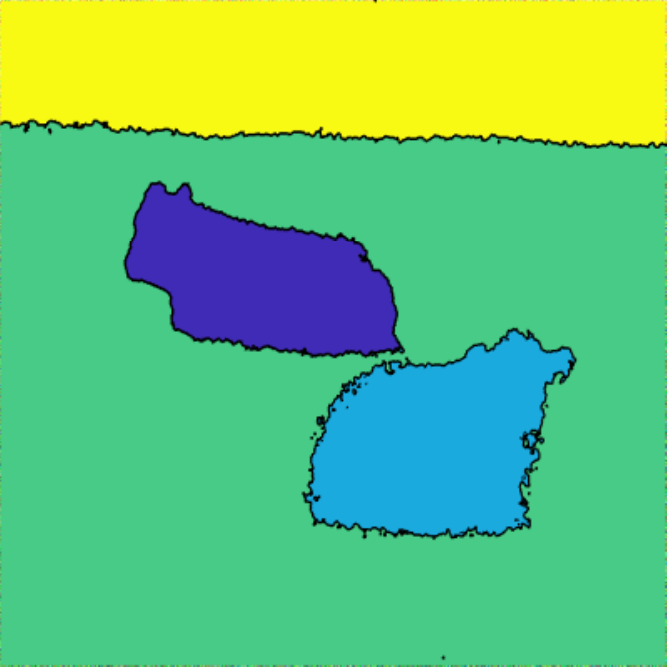}
		\centerline{\tiny(f) ICTM-LVF-CV}
  \end{minipage}
	}
 \label{comparison1}\caption{Comparison between the ICTM-LVF-CV model and other state-of-art algorithms.}
\end{figure}

\begin{figure}[htbp]
	\centering
	\subfigure{
	\begin{minipage}[b]{2cm}
		\includegraphics[width=2cm]{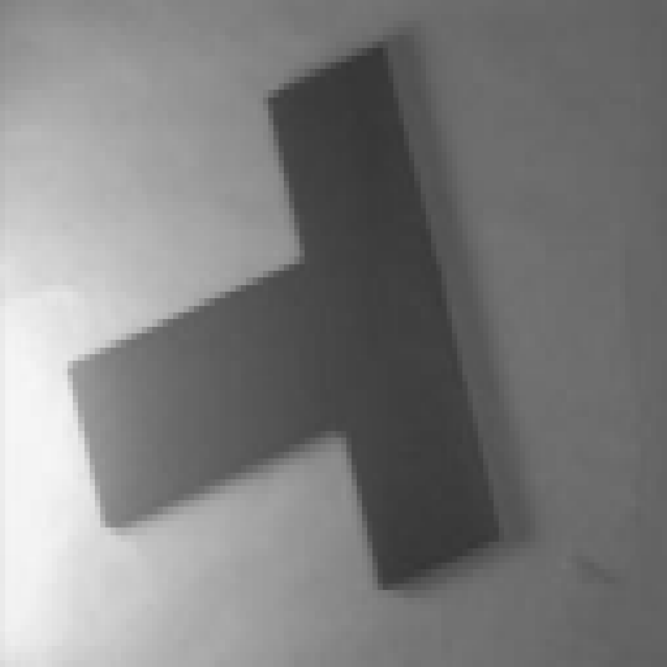}
		\includegraphics[width=2cm]{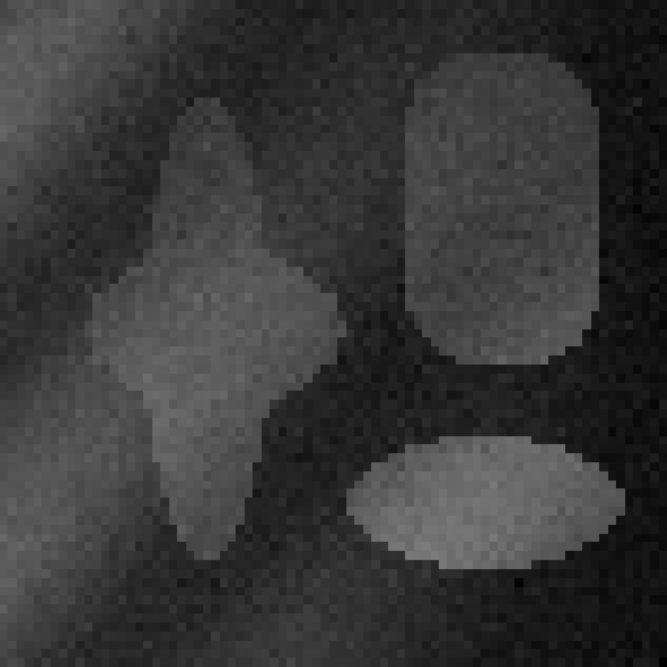}
		\includegraphics[width=2cm]{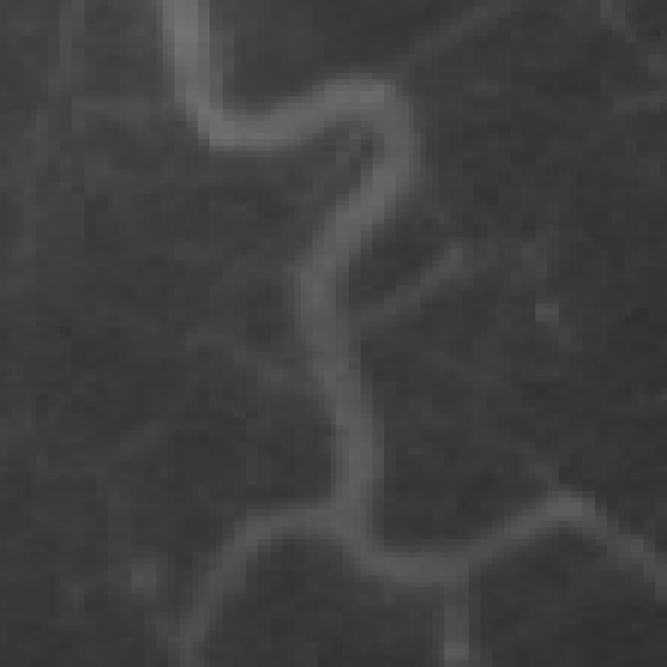}
		\includegraphics[width=2cm]{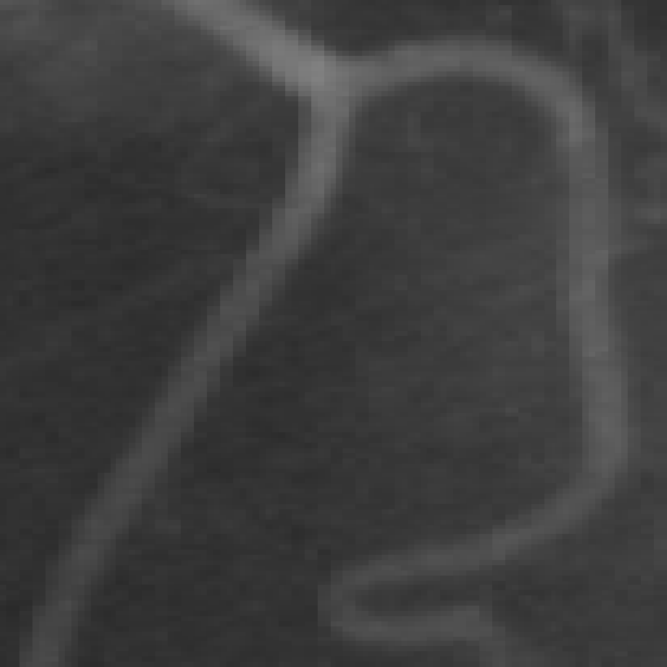}
		\centerline{\tiny (a)Original images}
	\end{minipage}
	}
	\subfigure{
	\begin{minipage}[b]{2cm}
		\includegraphics[width=2cm]{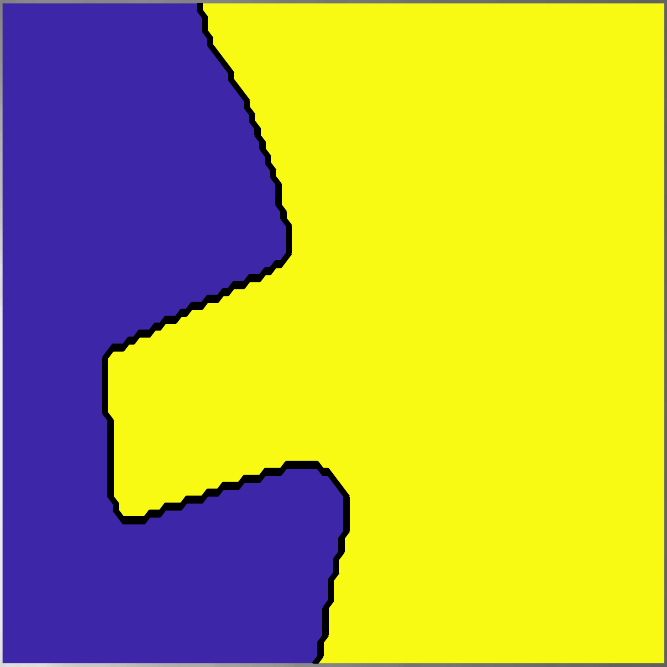}
		\includegraphics[width=2cm]{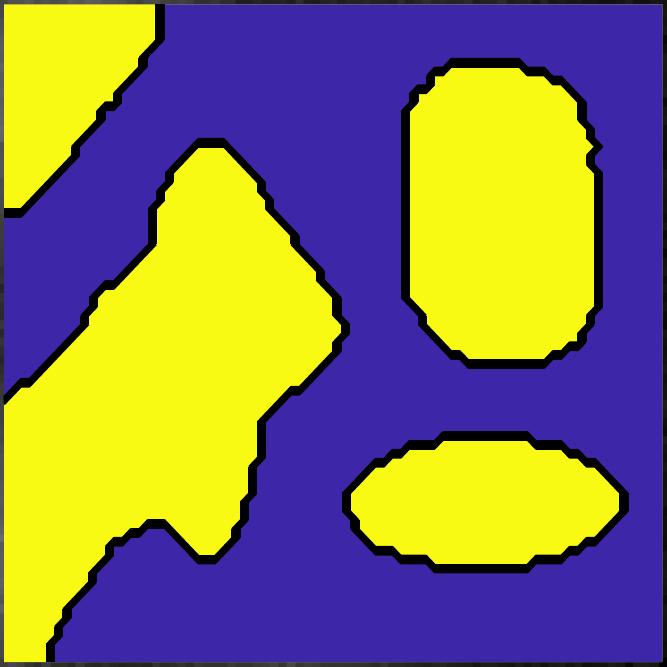}
		\includegraphics[width=2cm]{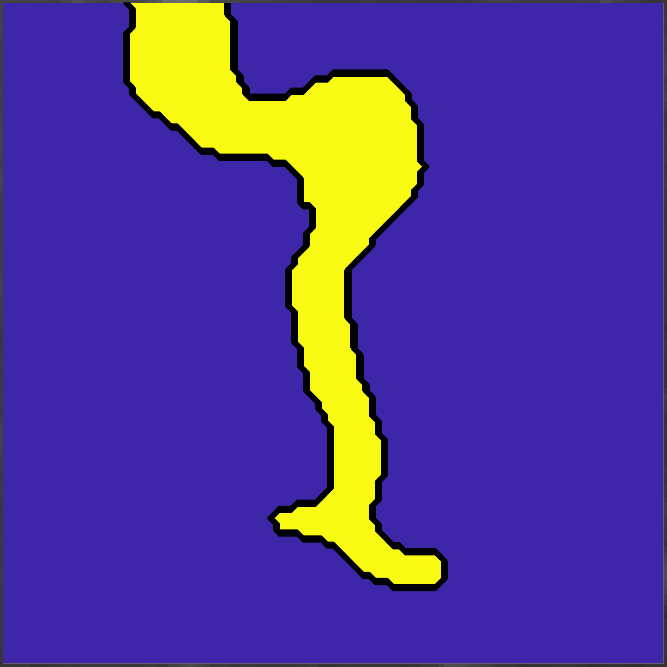}
		\includegraphics[width=2cm]{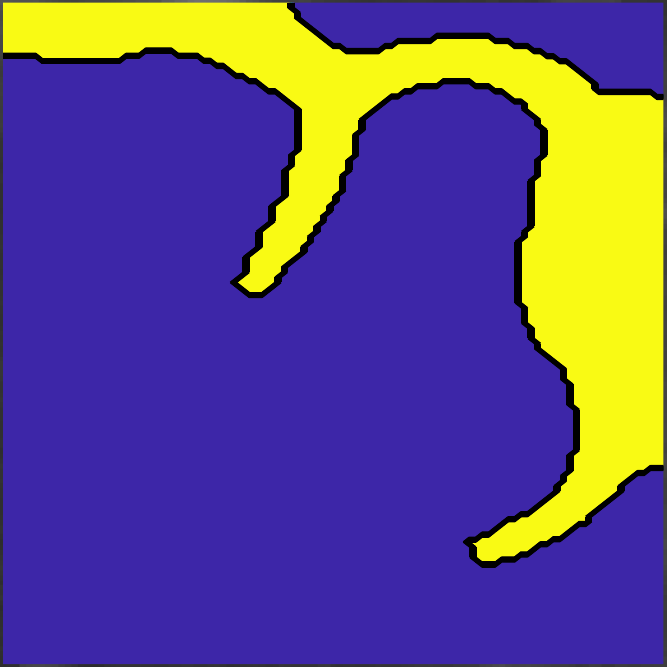}
	\centerline{\tiny (b) SLaT}
	\end{minipage}
	}
	\subfigure{
	\begin{minipage}[b]{2cm}
		\includegraphics[width=2cm]{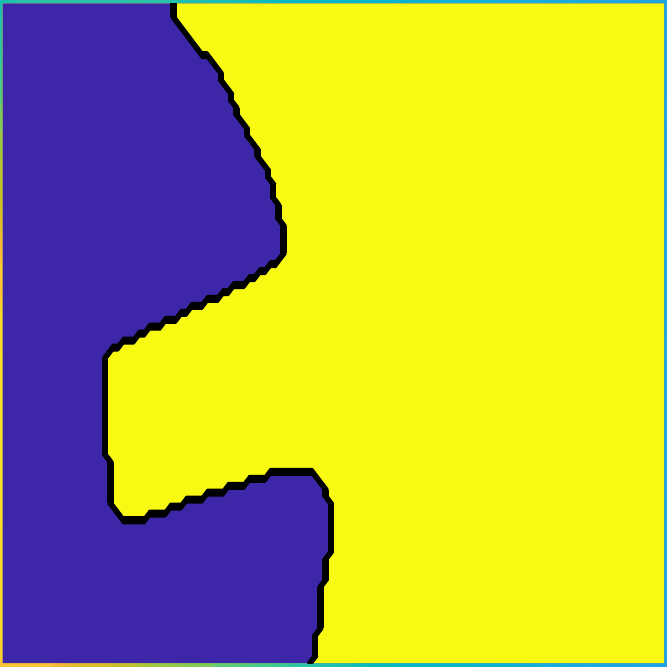}
		\includegraphics[width=2cm]{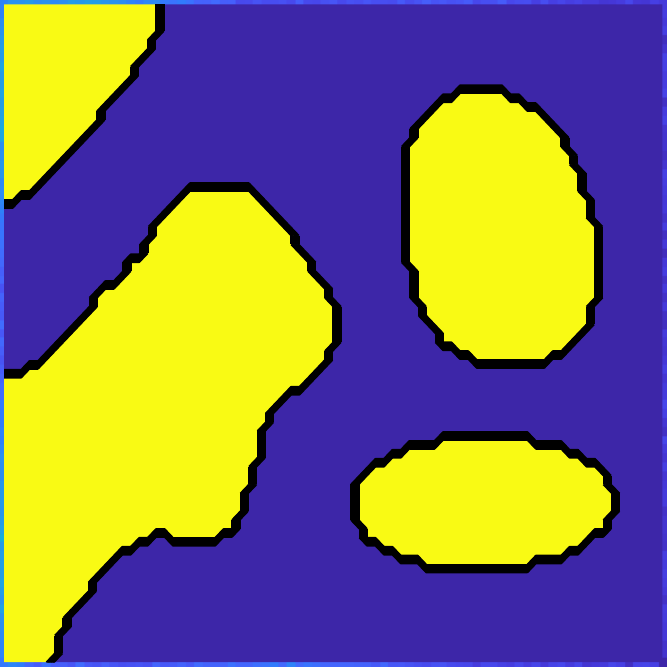}
		\includegraphics[width=2cm]{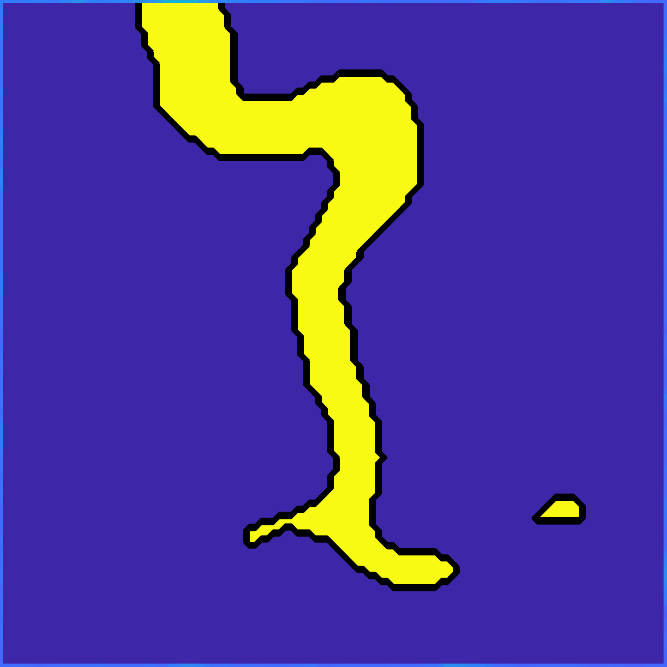}
		\includegraphics[width=2cm]{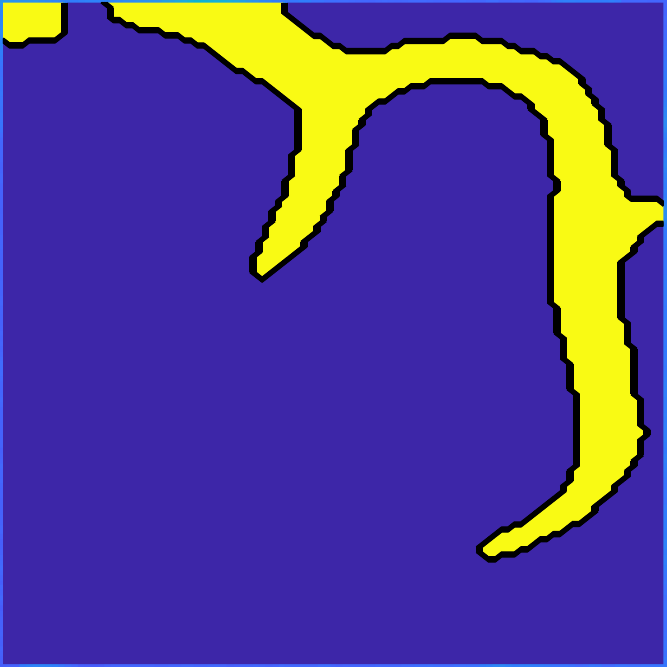}
	\centerline{\tiny(c) Model in \cite{wu2021color}}
	\end{minipage}
	}
	\subfigure{
	\begin{minipage}[b]{2cm}
		\includegraphics[width=2cm]{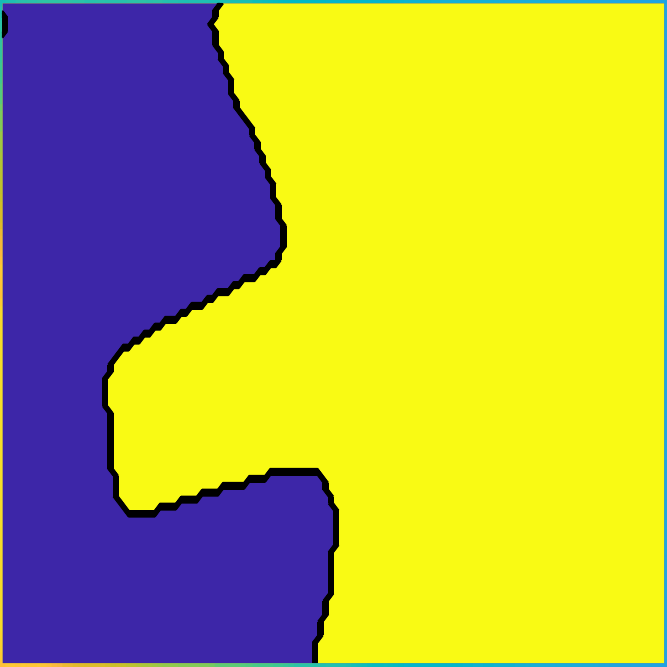}
		\includegraphics[width=2cm]{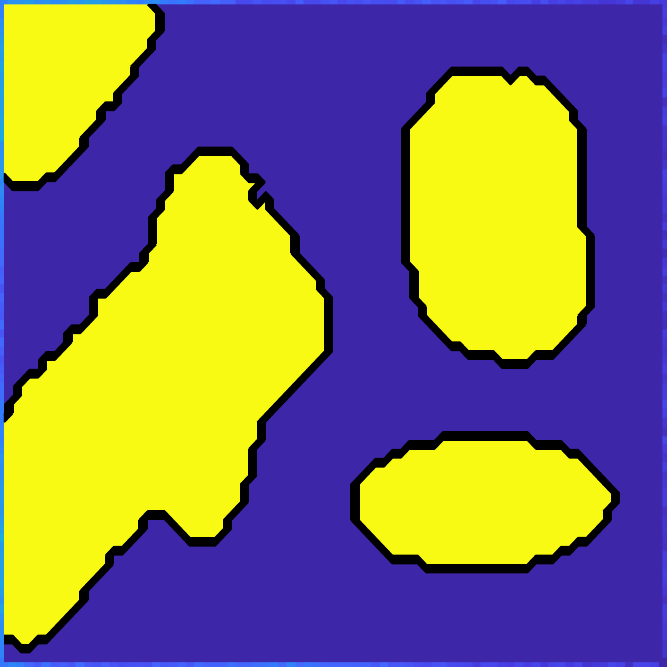}
		\includegraphics[width=2cm]{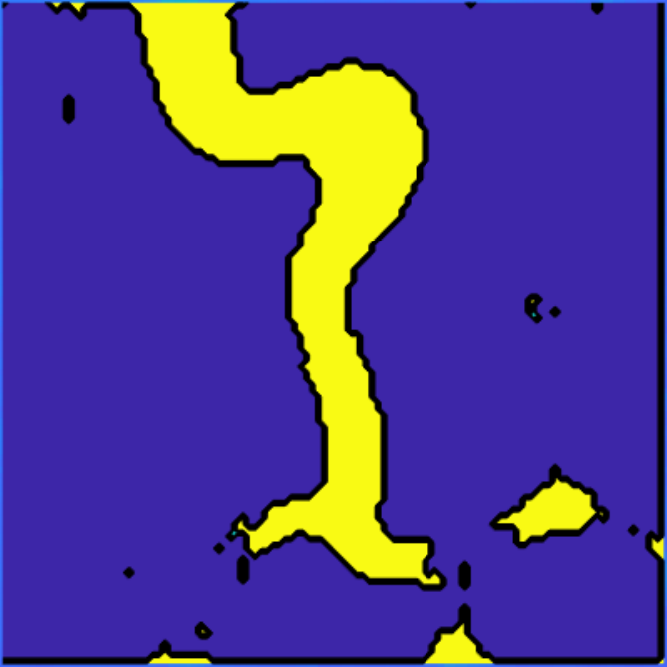}
		\includegraphics[width=2cm]{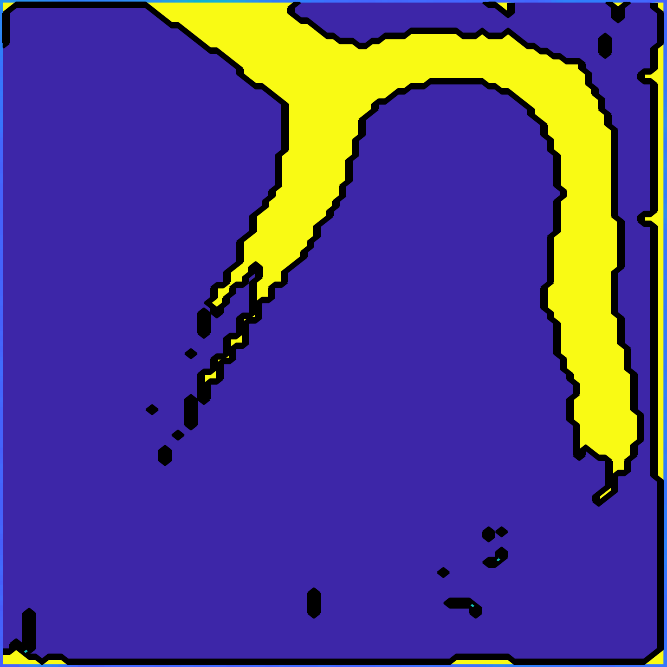}
	\centerline{\tiny(d) Model in \cite{Cai_Zeng}}
	\end{minipage}
	}
	\subfigure{
	\begin{minipage}[b]{2cm}
		\includegraphics[width=2cm]{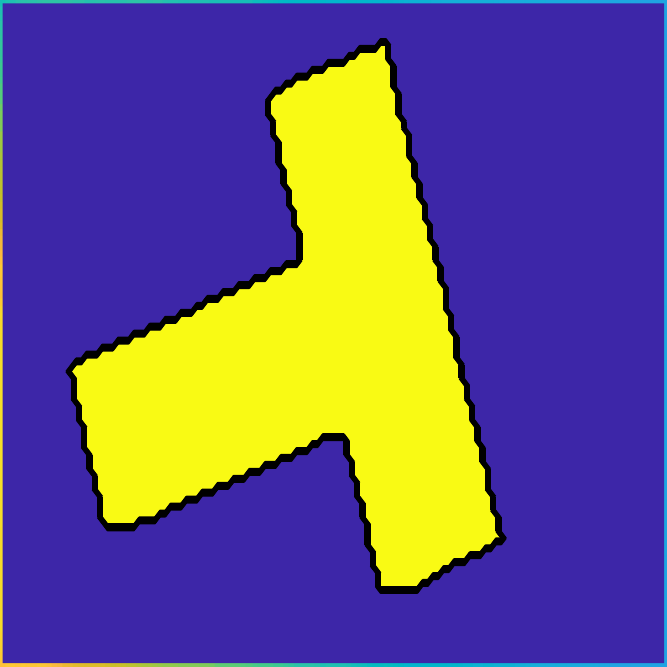}
		\includegraphics[width=2cm]{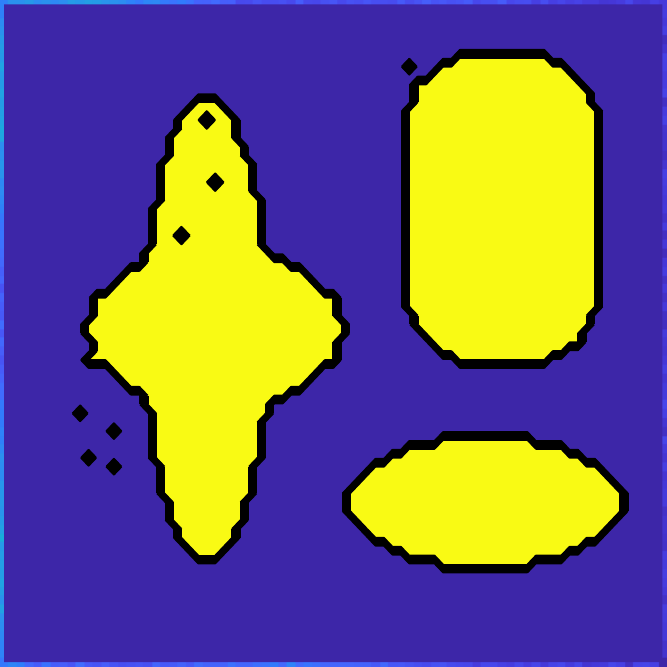}
		\includegraphics[width=2cm]{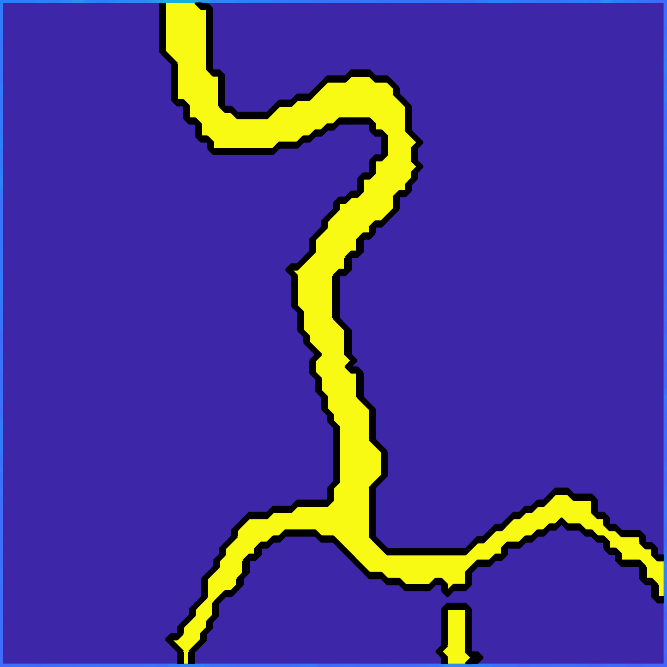}
		\includegraphics[width=2cm]{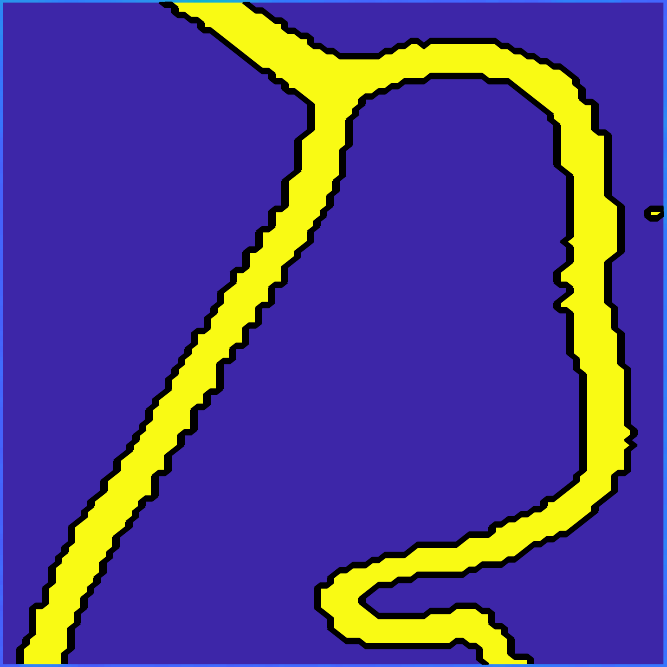}
	\centerline{\tiny(e) ICTM-LIF}
	\end{minipage}
	}
	\subfigure{
	\begin{minipage}[b]{2cm}
		\includegraphics[width=2cm]{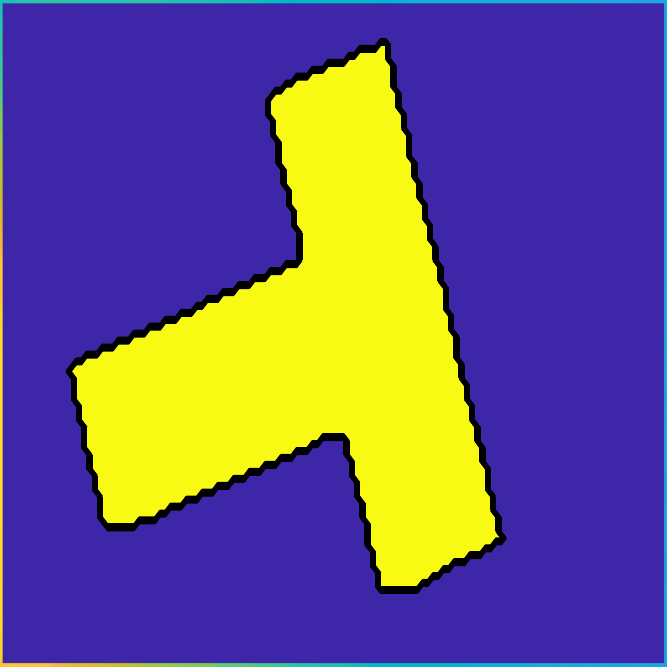}
		\includegraphics[width=2cm]{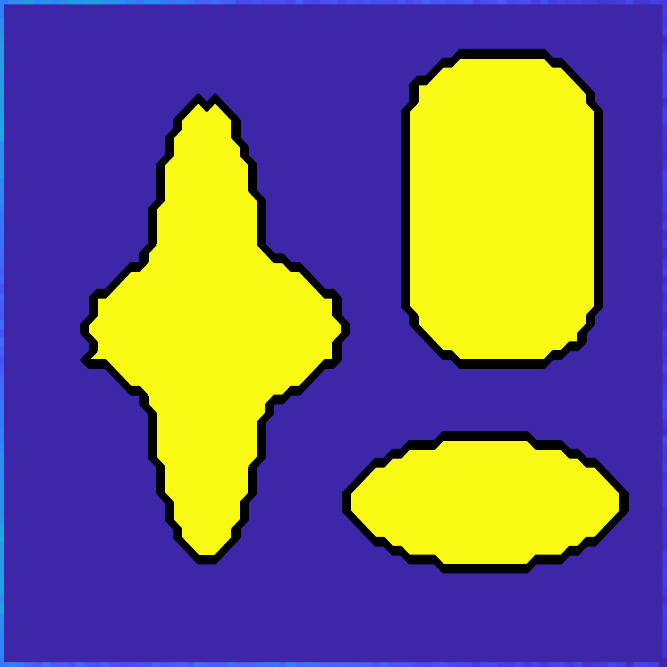}
		\includegraphics[width=2cm]{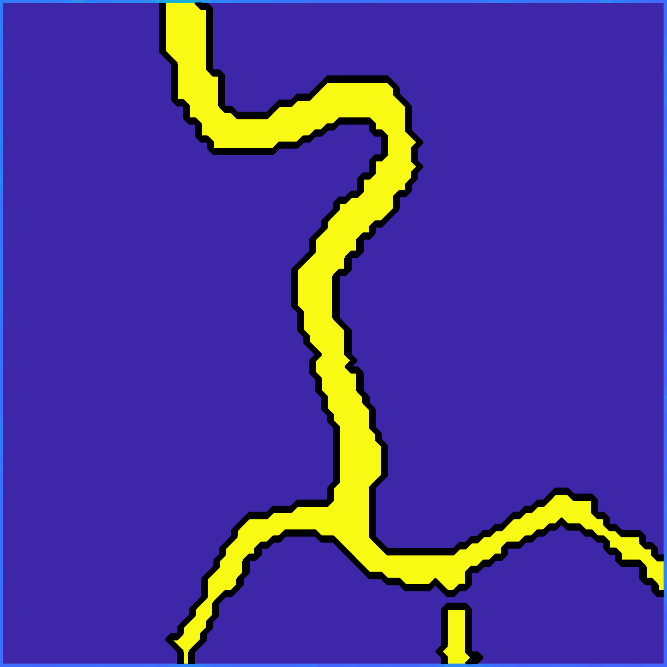}
		\includegraphics[width=2cm]{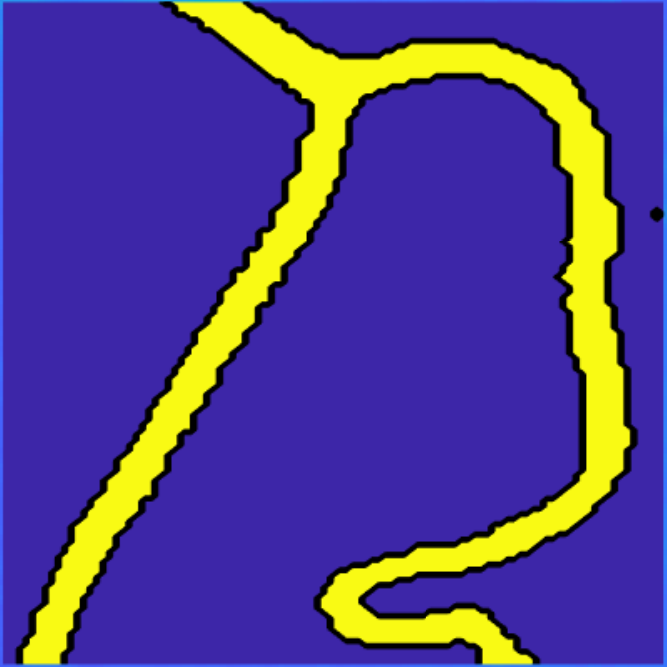}
		\centerline{\tiny(f) ICTM-LVF-LIF}
  \end{minipage}
	}
 \label{comparison2}\caption{Comparison between the ICTM-LVF-LIF model and other state-of-art algorithms.}
\end{figure}

\begin{table}[tb]
	\scriptsize
 \centering
 	\begin{tabular}{llccccc}
 		\hline
 		\multicolumn{2}{c}{Images} &
 		SLaT&	Model in \cite{wu2021color}& Model in \cite{Cai_Zeng} &ICTM & ICTM-LVF\cr
 		\hline
 		\multirow{4}{*}{\cref{comparison1}}
 		&Row1 &2.035450 &4.180187 &2.815690 &0.937057 &0.969474  \cr
 		&Row2 &0.485363 &0.500887 &0.462177 &0.298151 &0.316960  \cr
 		&Row3 &1.781461 &1.205533 &2.006805 &0.517074 &0.529115  \cr
		&Row4 &9.804071 &18.852447 &6.061458 &2.195570 &4.333940  \cr
		&Row5 &8.487740 &14.231849 &7.372541 &5.178055 &6.885624  \cr\hline
 		\multirow{4}{*}{\cref{comparison2}}
 		&Row1 &0.566980 &1.065398 &0.743467 &0.172654 &0.233498  \cr
 		&Row2 &0.372138 &1.007371 &0.384031 &0.096292 &0.166943  \cr
 		&Row3 &0.711676 &1.048547 &0.723667 &0.192009 &0.282930  \cr
 		&Row4 &1.066218 &1.171380 &0.964410 &0.261146 &0.465101  \cr\hline
 	\end{tabular}
 	\label{table:comparison}\caption{CPU time(s) for experiments in \cref{comparison1} and \cref{comparison2}.}
 \end{table}
\section{Conclusion}
\label{sec5}
We propose a new framework on active contour models for the multi-phase image segmentation. This framework consists of a novel energy with a local variance force term, an effective initialization method (Multi-IGLIM), and an efficient energy-decaying algorithm (ICTM-LVF) based on the modification of the ICTM. The proposed Multi-IGLIM and the ICTM-LVF solver can greatly improve the robustness of active contour models to the initialization and noise. Experiments show that compared to some state-of-the-art methods, the proposed framework has better performance in terms of the segmentation quality and the running time, even applied to images with severe intensity inhomogeneity and strong noise.

\section*{Acknowledgments}
We thank Prof. Tieyong Zeng at The Chinese University of Hong Kong for providing the MATLAB codes of \cite{Cai_Zeng} and thank Prof. Tingting Wu at Nanjing University of Posts and Telecommunications for providing the MATLAB codes of \cite{wu2021color}.

\bibliographystyle{siamplain}
\bibliography{ICTM-LVF.bib}

\end{document}